\documentclass[10pt]{article} % For LaTeX2e
% \usepackage{tmlr}
% If accepted, instead use the following line for the camera-ready submission:
%\usepackage[accepted]{tmlr}
% To de-anonymize and remove mentions to TMLR (for example for posting to preprint servers), instead use the following:
\usepackage[preprint]{tmlr}

% Optional math commands from https://github.com/goodfeli/dlbook_notation.
% \input{math_commands.tex}

\usepackage[utf8]{inputenc} % allow utf-8 input
\usepackage[T1]{fontenc}    % use 8-bit T1 fonts
\usepackage{hyperref}       % hyperlinks
\usepackage{url}            % simple URL typesetting
\usepackage{booktabs}       % professional-quality tables
\usepackage{nicefrac}       % compact symbols for 1/2, etc.
\usepackage{microtype}
\usepackage{subfigure}
\usepackage{booktabs} % for professional tables

\usepackage{amsmath, amsfonts, bbm, amsthm} % amssymb, 
\usepackage{algorithm, algpseudocode, algorithmicx}
\usepackage{mathtools,commath}
\usepackage{graphicx, multirow}
\usepackage{xcolor}

\allowdisplaybreaks

\usepackage{hyperref}
\usepackage{url}

\title{Competition over data: how does data purchase affect users?}

% Authors must not appear in the submitted version. They should be hidden
% as long as the tmlr package is used without the [accepted] or [preprint] options.
% Non-anonymous submissions will be rejected without review.

\author{\name Yongchan Kwon \email yk3012@columbia.edu \\
      \addr Department of Statistics, Columbia University
      \AND
      \name Tony Ginart \email tginart@stanford.edu \\
      \addr Department of Electrical Engineering, Stanford University
      \AND
      \name James Zou \email jamesz@stanford.edu\\
      \addr Department of Biomedical Data Science, Stanford University}

% The \author macro works with any number of authors. Use \AND 
% to separate the names and addresses of multiple authors.

  % Insert correct month for camera-ready version
 % Insert correct year for camera-ready version
 % Insert correct link to OpenReview for camera-ready version

\newtheorem{theorem}{Theorem}

\newtheorem{lemma}{Lemma}

\newtheorem{example}{Example}

\floatname{algorithm}{Environment}

\begin{document}

\maketitle

\begin{abstract}
As the competition among machine learning (ML) predictors is widespread in practice, it becomes increasingly important to understand the impact and biases arising from such competition. One critical aspect of ML competition is that ML predictors are constantly updated by acquiring additional data during the competition. Although this active data acquisition can largely affect the overall competition environment, it has not been well-studied before. In this paper, we study what happens when ML predictors can purchase additional data during the competition. We introduce a new environment in which ML predictors use active learning algorithms to effectively acquire labeled data within their budgets while competing against each other. We empirically show that the overall performance of an ML predictor improves when predictors can purchase additional labeled data. Surprisingly, however, the quality that users experience---\textit{i.e.}, the accuracy of the predictor selected by each user---can decrease even as the individual predictors get better. We demonstrate that this phenomenon naturally arises due to a trade-off whereby competition pushes each predictor to specialize in a subset of the population while data purchase has the effect of making predictors more uniform. With comprehensive experiments, we show that our findings are robust against different modeling assumptions.
\end{abstract}

\section{Introduction}
\label{s:intro}
% Motivation: ML competition is prevalent. One characteristic is data acquisition.
When there are several companies on a marketplace offering similar services, a customer usually chooses the one that offers the best options or functionalities, leading to competition among the companies. Accordingly, companies are motivated to offer high-quality services without raising price too much, as their ultimate goal is to attract more customers and make more profits. When it comes to machine learning (ML)-based services, high-quality services are often achieved by a regular re-training after buying more data from customers or data vendors \citep{meierhofer2019data}. In this paper, we consider a competition situation where multiple companies offer ML-based services while constantly improving their predictions by acquiring labeled data. 

For instance, we consider the U.S. auto insurance market \citep{jin2019buying, sennaar2020how}. The auto insurance companies including \textsf{State Farm}, \textsf{Progressive}, and \textsf{AllState} use ML models to analyze customer data, assess risk, and adjust actual premiums. These companies also offer insurance called the Pay-How-You-Drive, which is usually cheaper than regular auto insurances on the condition that the insurer monitors driving patterns, such as rapid acceleration or oscillations in speed \citep{arumugam2019survey, jin2019buying}. That is, the companies essentially provide financial benefits to customers, collecting customers' driving pattern data. With these user data, they can regularly update their ML models, improving model performance while competing with each other. % (e.g., updating the insurance recommendation model or reassessing the risks)

Analyzing the effects of data purchase in competitions could have practical implications, but it has not been studied much in the ML literature. The effects of data acquisition have been investigated in active learning (AL) literature \citep{settles2009active, ren2020survey}, but it is not straightforward to establish competition in AL settings because it considers the single-agent situation. Recently, \citet{ginart2020competing} studied implications of competitions by modeling an environment where ML predictors compete against each other for user data. They showed that competition pushes competing predictors to focus on a small subset of the population and helps users find high-quality predictions. Although this work describes an interesting phenomenon, it is limited to describe the data purchase system due to the simplicity of its model. The impact of data purchases on competition has not been studied much in the literature, which is the main focus of our work. Our environment is able to model situations where competing companies actively acquire user data by providing a financial benefit to users, and influence the way users choose service providers (See Figure~\ref{fig:environment_illust}). Related works are further discussed in Section~\ref{s:related_works}.

\begin{figure*}[t]
% \vskip 0.1in
\begin{center}
    \includegraphics[width=0.495\textwidth]{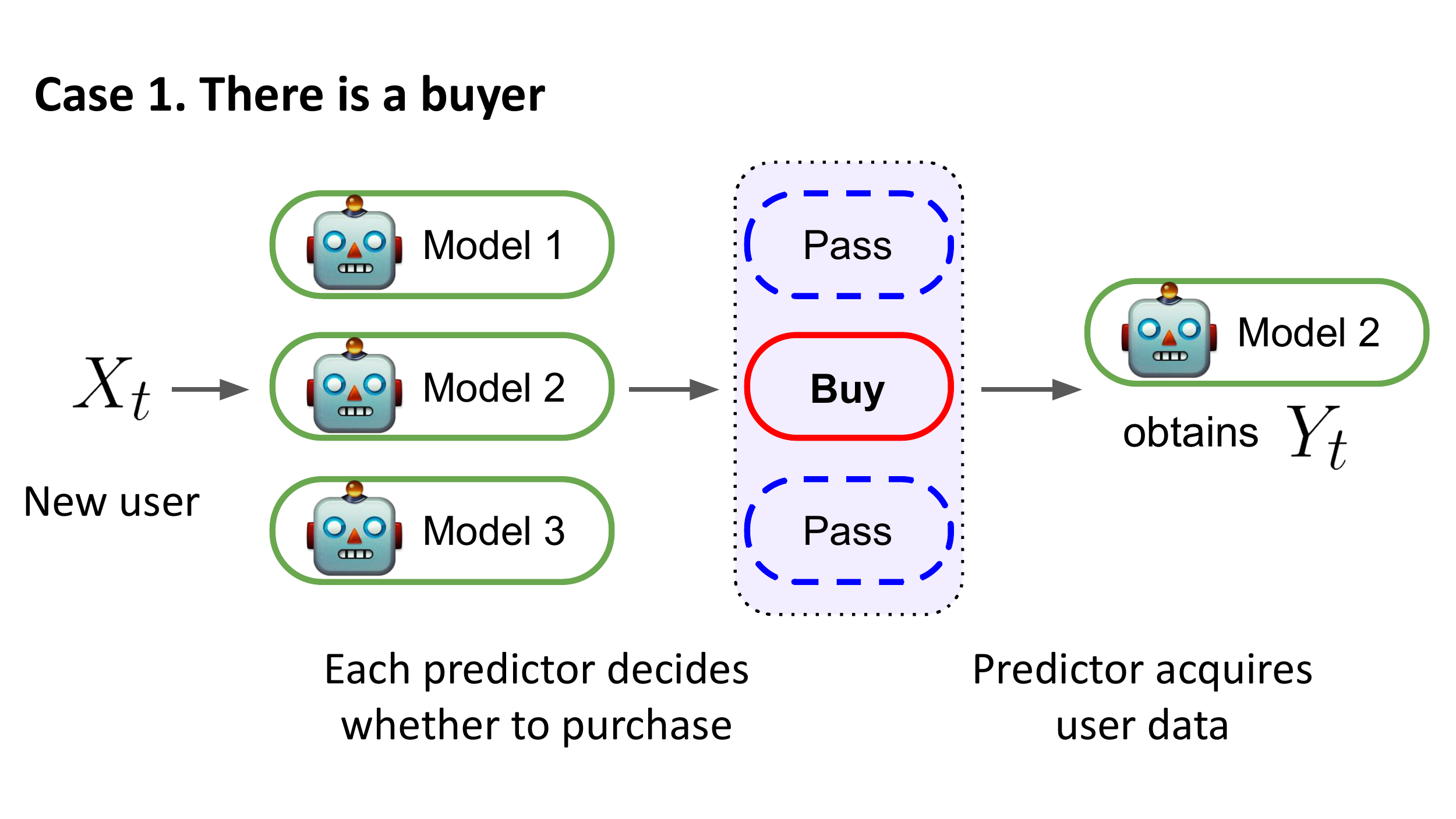}
    \includegraphics[width=0.495\textwidth]{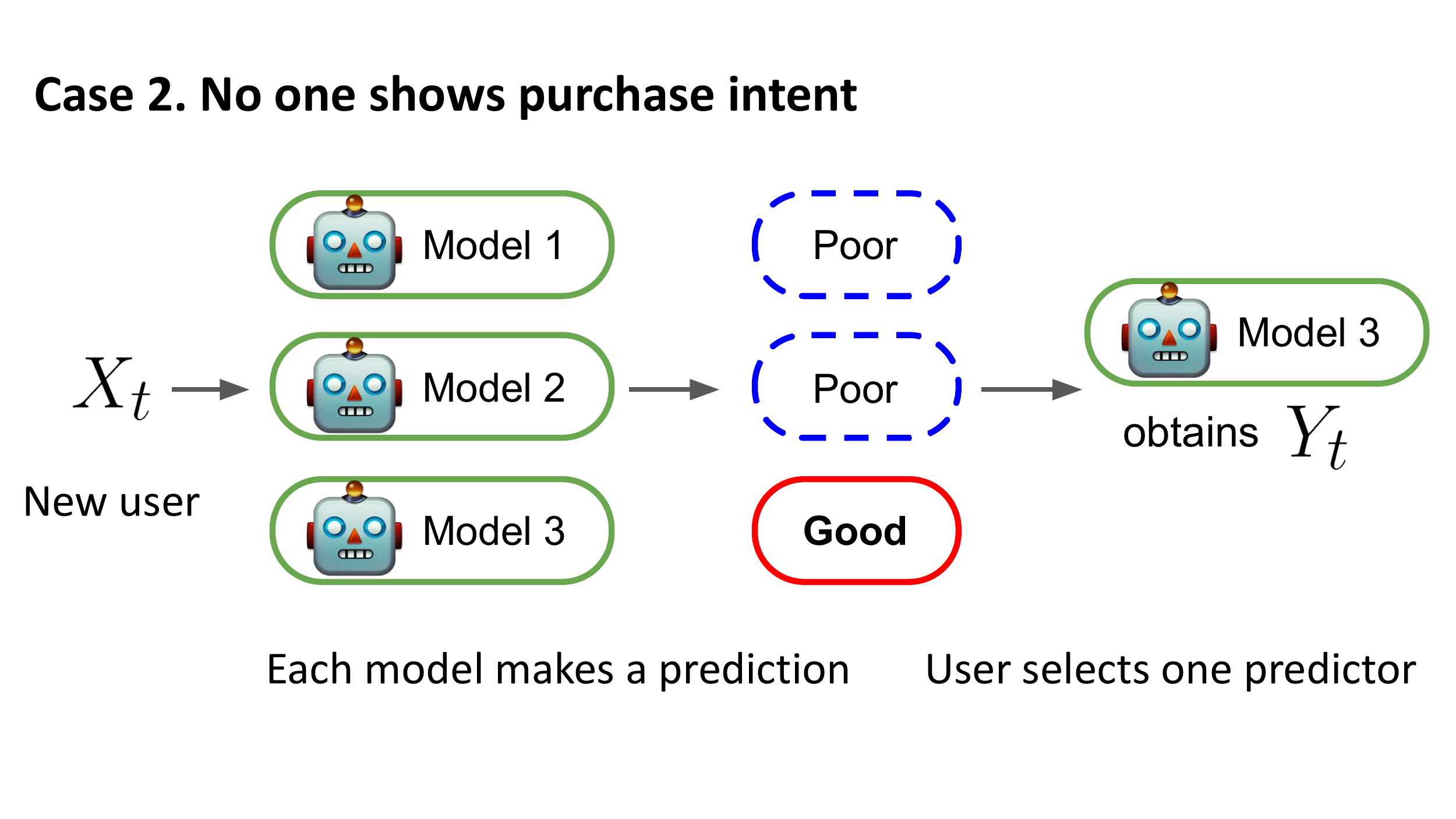}
    \vspace{-0.2in}
    \caption{Illustrations of our competition environment (left) when there is a company showing purchase intent and (right) when no company shows purchase intent. In step 1, described in the first arrow, each predictor receives a user query and decides whether to buy user data. In step 2, described in the second arrow, (left) if there is a company that thinks the data is worth buying, the company shows purchase intent. The user $X_t$ then selects the buyer for financial benefits. (Right) If no one thinks the user data is worth buying, a user selects one company based on received ML predictions. In step 3, the only selected predictor gets the user label $Y_t$ and updates its model. 
    We provide details on the environment in Section~\ref{s:comp_environment}.}
    \label{fig:environment_illust}   
\end{center}
% \vskip -0.1in 
\end{figure*}

% Contribution
\paragraph{Contributions}
In this paper, we propose a general competition environment and study what happens when competing ML predictors can actively acquire user data. Our main contributions are as follows.
\begin{itemize}
    \item We propose a novel environment that can simulate various real-world competitions. Our environment allows ML predictors to use AL algorithms to purchase labeled data within a finite budget while competing against each other (Section~\ref{s:comp_environment}). 
    \item Surprisingly, our results show that when competing ML predictors purchase data, the quality of the predictions selected by each user can decrease even as competing ML predictors get better (Section~\ref{s:effects_quality}).
    \item We demonstrate that data purchase makes competing predictors similar to each other, leading to this counterintuitive finding (Section~\ref{s:effects_diversity}). Our finding is robust and is consistently observed across various competition situations (Section~\ref{s:robust_analysis}). 
    \item We theoretically analyze how the diversity of a user's available options can affect the user experience to support our empirical findings. (Section~\ref{s:theory}).
\end{itemize}

\section{A general environment for competition and data purchase}
\label{s:comp_environment}
This section formally introduces a new and general competition environment. In our environment, competition is represented by a series of interactions between a sequence of users and fixed competing ML predictors. Here the interaction is modeled by supervised learning tasks. To be more specific, we define some notations.

\paragraph{Notations}
At each round $t \in [T] := \{1,\dots, T\}$, we denote a user query by $X_t \in \mathcal{X}$ and its associated user label by $Y_t \in \mathcal{Y}$. We focus on classification problems, \textit{i.e.}, $|\mathcal{Y}|$ is finite, while our environment can easily extend to regression settings. We denote a sequence of users by $\{(X_t, Y_t)\}_{t=1} ^T$ and assume users are independent and identically distributed (i.i.d.) by some distribution $P_{X,Y}$. We call $P_{X,Y}$ the user distribution.

As for the ML predictor side, we suppose there are $M$ competing predictors in a market. For $i \in [M]$, each ML predictor is described as a tuple $\mathcal{C}^{(i)} := (n_{\mathrm{s}} ^{(i)}, n_{\mathrm{b}} ^{(i)}, f ^{(i)}, \pi ^{(i)})$, where $n_{\mathrm{s}} ^{(i)} \in \mathbb{N}$ is the number of i.i.d. seed data points from $P_{X,Y}$, $n_{\mathrm{b}} ^{(i)} \in \mathbb{N}$ is a budget, $f ^{(i)}:\mathcal{X} \to \mathcal{Y}$ is an ML model, and $\pi ^{(i)}:\mathcal{X} \to \{0,1\}$ is a buying strategy. We consider the following setting. A predictor $\mathcal{C}^{(i)}$ initially owns $n_{\mathrm{s}} ^{(i)}$ data points and can additionally purchase user data within $n_{\mathrm{b}} ^{(i)}$ budgets. We set the price of one data point is one, \textit{i.e.}, a predictor $\mathcal{C}^{(i)}$ can purchase up to $n_{\mathrm{b}} ^{(i)}$ data points from a sequence of users. A predictor $\mathcal{C}^{(i)}$ produces a prediction using the ML model $f ^{(i)}$ and determines whether to buy the user data with the buying strategy $\pi ^{(i)}$. We consider the utility function for $\mathcal{C}^{(i)}$ is the classification accuracy of $f ^{(i)}$ with respect to the user distribution $P_{X,Y}$. Lastly, $f ^{(i)}$ and $\pi ^{(i)}$ are allowed to be updated throughout the $T$ competition rounds. That is, companies keep improving their ML models with newly collected data points. 

\paragraph{Competition dynamics} Before the first competition round, all the $M$ competing predictors independently train their model $f ^{(i)}$ using the $n_{\mathrm{s}} ^{(i)}$ seed data points. After this initialization, at each round $t \in [T]$, a user sends a query $X_t$ to all the predictors $\{\mathcal{C}^{(j)}\}_{j=1} ^M$, and each predictor $\mathcal{C}^{(i)}$ determines whether to buy the user data. We describe this decision by using the buying strategy $\pi ^{(i)}$. If the predictor $\mathcal{C}^{(i)}$ thinks that the labeled data would be worth one unit of budget, we denote this by $\pi ^{(i)} (X_t)=1$. Otherwise, if $\mathcal{C}^{(i)}$ thinks that it is not worth one unit of budget, then $\pi ^{(i)} (X_t)=0$. As for the $\pi ^{(i)}$, ML predictors can use any stream-based AL algorithm \citep{freund1997selective,vzliobaite2013active}.
For instance, a predictor $\mathcal{C}^{(i)}$ can use the uncertainty-based AL rule \citep{settles2008analysis}---\textit{i.e.}, $\mathcal{C}^{(i)}$ attempts to purchase user data if the current prediction $f ^{(i)} (X_t)$ is not confident (e.g., the Shannon's entropy of $p ^{(i)} (X_t)$ is higher than some predefined threshold value where $p ^{(i)} (X_t)$ is the probability estimate at the $t$-th round). In brief, we suppose a predictor $\mathcal{C}^{(i)}$ shows purchase intent if the remaining budget is greater than zero and $\pi ^{(i)}(X_t)=1$. If the remaining budget is zero or $\pi ^{(i)}(X_t)=0$, then $\mathcal{C}^{(i)}$ does not show purchase intent and provides a prediction $f ^{(i)}(X_t)$ to the user.
To analyze the complicated real-world competition, we simplify the competition environment and assume that the service price is the same for all companies. This allows users to primarily compare the quality of ML predictions. Given that users select one company within their desired price range, this assumption describes that the options competing companies provide are within this range.

We now elaborate on how a user selects one predictor. At every round $t\in[T]$, the user selects only one predictor based on both purchase intents and prediction information received from $\{\mathcal{C}^{(j)}\}_{j=1} ^M$. If there is a buyer, then we assume that a user prefers the company with purchase intent to others. We can think of this as a bargain in that the company offers a financial advantage (e.g., discounts or coupons) and the user selects it even if the quality might not be the best. When there is more than one buyer, we assume a user selects one of them uniformly at random. Once selected, the only selected predictor's budget is reduced by one; all other predictor's budget stays the same because they are not selected and do not have to provide financial benefits. If no predictor shows purchase intent, then a user receives prediction information $\{ f ^{(j)}(X_t) \}_{j=1} ^M$ and chooses the predictor $\mathcal{C}^{(i)}$ with the following probability.
\begin{align}
    P \left( W_t = i \mid Y_t, \{ f ^{(j)}(X_t) \}_{j=1} ^M \right) = \frac{ \exp{ \left(\alpha q \left( Y_t, f ^{(i)}(X_t) \right) \right)} }{ \sum_{j=1} ^M \exp{ \left( \alpha q \left( Y_t, f ^{(j)}(X_t) \right) \right) } }, 
    \label{eq:user_selection}
\end{align}
where $\alpha \geq 0$ denotes a temperature parameter and $W_t \in [M]$ denotes the index of selected predictor. Here, $q:\mathcal{Y} \times \mathcal{Y} \to \mathbb{R}^{+} := \{ z \in \mathbb{R} \mid z \geq 0 \}$ is a predefined quality function that measures similarity between the user label $Y_t$ and the prediction (e.g., $\mathbbm{1}(\{Y_1 = Y_2\})$). With the softmax function in Equation~\eqref{eq:user_selection}, users are more likely to select high-quality predictions, describing the \textit{rationality} of the user selection. Here, the temperature parameter $\alpha$ indicates how selective users are. For instance, $\alpha$ is close to $\infty$, users are very confident in their selection and choose the best company. Afterwards, the selected predictor $\mathcal{C}^{(W_t)}$ gets the user label $Y_t$ and updates the model $f ^{(W_t)}$ by training on the new datum $(X_t, Y_t)$. The other predictors $f^{(i)}$ stay the same for $i \neq W_t$. We describe our competition system in Environment \ref{alg:general_model}.

\paragraph{Characteristics of our environment}
Our environment simplifies real-world competition and data purchases, which usually exist in much more complicated forms, yet it captures key characteristics. First, our environment reflects the rationality of customers. Customers are likely to choose the best service within their budget, but they can select a company that is not necessarily the best if it offers financial benefits, such as promotional coupons, discounts, or free services \citep{rowley1998promotion,familmaleki2015analyzing, reimers2019impacts}. Such a user selection represents that a user can prioritize financial advantages and change her selection, which has not been considered in the ML literature. Second, our environment realistically models a company's data acquisition. Competing companies strive to attract more users, constantly purchasing user data to improve their ML predictions. Since the data buying process could be costly for the companies, data should be carefully chosen, and this is why we incorporate AL algorithms. Our environment allows companies to use AL algorithms within finite budgets and to selectively acquire user data. Third, our environment is flexible and takes into account various competition situations in practice. Note that we make no assumptions about the number of competing predictors $M$ or budgets $n_{\mathrm{b}} ^{(i)}$, algorithms for predictors $f^{(i)}$ or buying strategies $\pi^{(i)}$, and the user distribution $P_{X,Y}$.

\begin{example}[Auto insurance in Section~\ref{s:intro}]
$X_t$ includes the $t$-th driver's demographic information, driving or insurance claim history, and $Y_t$ is the driver's preferred insurance plan within the user's budget constraints. Each predictor $\mathcal{C}^{(i)}$ is one insurance company (e.g., \textsf{State Farm}, \textsf{Progressive}, or \textsf{AllState}), offering an auto insurance plan $f^{(i)} (X_t)$ based on what it predicts to be most suitable for this driver. The driver chooses one company whose offered plan is the closest to $Y_t$. If a company believes that in its database there are infrequent data from a particular group of drivers $t$-th driver belongs to (e.g., new drivers in their 30s), it can attempt to collect more data. Accordingly, the company offers discounts to attract her, and the acquired data is used to improve the company's future ML predictions.  
\end{example}

\begin{algorithm}[t]
\caption{A competition environment with data purchase}
\begin{algorithmic}
\State {\bfseries Input:} Number of competition rounds $T$; user distribution $P_{X,Y}$; number of predictors $M$; competing predictors $\mathcal{C}^{(i)}=(n_{\mathrm{s}} ^{(i)}, n_{\mathrm{b}} ^{(i)}, f ^{(i)}, \pi ^{(i)})$ for $i\in[M]$.
\State {\bfseries Procedure:}
\State For all $i\in [M]$, a model $f ^{(i)}$ is trained using the $n_{\mathrm{s}} ^{(i)}$ seed data points
\For{$t \in [T]$}
\State $(X_t, Y_t)$ from $P_{X,Y}$ is drawn and a set of buyers $\mathcal{B} = \emptyset$ is initialized
\For{$i \in [M]$}
\If{($n_{\mathrm{b}}^{(i)} \geq 1$) and ($\pi ^{(i)}(X_t) = 1$)}
\State $\mathcal{B} \leftarrow \mathcal{B} \cup \{ \mathcal{C}^{(i)}\}$
\Else
\State Predict $f^{(i)} (X_t)$
\EndIf
\EndFor
\If{ $|\mathcal{B}| \geq 1$ }
\State A user selects one predictor $W_t$ from $\mathcal{B}$ uniformly at random
\State $n_{\mathrm{b}}^{(W_t)} \leftarrow n_{\mathrm{b}}^{(W_t)}-1$
\Else
\State A user selects one predictor $W_t$ based on \eqref{eq:user_selection} % using received predictions $\{ f^{(i)} (X_t) \}_{i=1} ^M$
\EndIf
\State $\mathcal{C}^{(W_t)}$ receives a user label $Y_t$ and updates $f ^{(W_t)}$
\EndFor
\end{algorithmic}
\label{alg:general_model}
\end{algorithm}

\section{Experiments}
\label{s:experiment}
Using the proposed Environment~\ref{alg:general_model}, we investigate the impacts of the data purchase in ML competition. Our experiments show an interesting phenomenon that data purchase can decrease the quality of the predictor selected by a user, even when the quality of the predictors gets improved on average (Section~\ref{s:effects_quality}). In addition, we demonstrate that data purchase makes ML predictors similar to each other. Data purchase reduces the effective variety of options, and predictors can avoid specializing to a small subset of the population (Section~\ref{s:effects_diversity}). Lastly, we show our results are robust against different modeling assumptions (Section~\ref{s:robust_analysis}).

\paragraph{Metrics}
To quantitatively measure the effects of data purchase, we introduce three population-level evaluation metrics. First, we define the overall quality as follows.
\begin{align*}
    \mathbb{E} \left[ \frac{1}{M} \sum_{j=1} ^M q \left( Y, f^{(j)}(X) \right) \right], \tag{Overall quality}
\end{align*}
where the expectation is taken over the user distribution $P_{X,Y}$. The overall quality represents the average quality that competing predictors provide in the market. Second is the quality of experience (QoE), the quality of the predictor selected by a user. The QoE is defined as
\begin{align*}
    \mathbb{E} \left[ q \left( Y, f^{(W)} (X) \right) \right]. \tag{QoE}    
\end{align*}
Here, the expectation is over the random variables $(X, Y, W)$ and a conditional distribution of a selected index $P(W \mid X,Y)$ is considered as Equation \eqref{eq:user_selection}. Given that a user selects one predictor based on Equation \eqref{eq:user_selection} when there is no buyer, QoE can be considered as the key utility of users. Note that the overall quality and QoE capture different aspects of prediction qualities, and they are only equal when users select one predictor uniformly at random, \textit{i.e.}, when $\alpha=0$ (See Lemma~\ref{lem:prediction_quality_analysis_correctness}). 

Next, we define the diversity to quantify how variable the ML predictions are. To be more specific, for $i\in \mathcal{Y}$, we define the proportion of predictors whose prediction is $i$ as $p_{i} (X) := \frac{1}{M} \sum_{j=1} ^M \mathbbm{1}( f^{(j)}(X)=i)$. Then the diversity is defined as
\begin{align*}
    \mathbb{E}\left[ -\sum_{i \in \mathcal{Y}} p_i(X) \log(p_i(X)) \right], \tag{Diversity}
\end{align*}
where the expectation is taken over the marginal distribution $P_X$ and we use the convention $0\log(0)=0$ when $p_i (X)=0$. Note that the diversity is defined as the expected Shannon's entropy of competing ML predictions. When there are various different options that a user can choose from, the diversity is more likely to be large. 

\paragraph{Implementation protocol}
Our experiments consider the seven real datasets to describe various user distributions $P_{X,Y}$, namely \texttt{Insurance} \citep{van2000coil}, \texttt{Adult} \citep{dua2019UCI}, \texttt{Postures} \citep{gardner20143d}, 
\texttt{Skin-nonskin} \citep{chang2011libsvm},
\texttt{MNIST} \citep{lecun2010mnist},
\texttt{Fashion-MNIST} \citep{xiao2017fashion}, and \texttt{CIFAR10} \citep{krizhevsky2009learning} datasets. To minimize the variance caused by other factors, we first consider a homogeneous setting in Sections~\ref{s:effects_quality} and \ref{s:effects_diversity}: for each competition, all predictors have the same number of seed data $n_{\mathrm{s}} ^{(i)}$ and budgets $n_{\mathrm{b}} ^{(i)}$, the same classification algorithm for $f^{(i)}$, and the same AL algorithm for $\pi^{(i)}$. As for heterogeneous settings in Section \ref{s:robust_analysis}, competitors are allowed to have different configurations of parameters. 

Throughout the paper, we set the total number of competition rounds to $T=10^4$, the number of predictors to $M=18$, and a quality function to the correctness function, \textit{i.e.}, $q(Y_1,Y_2) = \mathbbm{1}(\{Y_1 = Y_2\})$ for all $Y_1, Y_2 \in \mathcal{Y}$. We set a small number for seed data points $n_{\mathrm{s}} ^{(i)}$, which is between 50 and 200 depending on a dataset. % , to prevent models from being saturated before the first round. 
We use either a logistic model or a neural network model with one hidden layer for $f^{(i)}$. As for the buying policy, we use a standard entropy-based AL rule for $\pi^{(i)}$ \citep{settles2008analysis}. We consider various competition situations by varying the budget $n_{\mathrm{b}} \in \{0,100,200,400\}$\footnote{In Section~\ref{s:experiment}, for notational convenience,  we often suppress the predictor index in the superscript if the context is clear. For example, we use $n_{\mathrm{b}}$ instead of $n_{\mathrm{b}} ^{(i)}$.} and the temperature parameter $\alpha \in \{0,1,2,4\}$. Note that a pair $(n_{\mathrm{b}}, \alpha)$ generates one competition environment. We repeat experiments 30 times to obtain stable estimates for each pair $(n_{\mathrm{b}}, \alpha)$. We provide the full implementation details in Appendix~\ref{app:imple}. 

As the evaluation time, we do not perform data buying procedures in order to directly compare the predictability of ML models. As the evaluation metrics are defined as the population-level quantity, it is difficult to compute the expectation exactly. To handle this, we consider the sample averages using the i.i.d. held-out test data that are not used during the competition rounds.

\subsection{Effects of data purchase on quality}
\label{s:effects_quality}
We first study how data purchase affects the overall quality and the QoE in various competition settings. As Figure~\ref{fig:OverallQuality_vs_QoE} illustrates, data purchase increases the overall quality as $n_{\mathrm{b}}$ increases across all datasets. For instance, when $\alpha=4$ and the dataset is \texttt{Postures}, the overall quality is $0.405$ on average when $n_{\mathrm{b}}=0$, but it increases to $0.440$ and $0.464$ when $n_{\mathrm{b}}=200$ and $n_{\mathrm{b}}=400$, which correspond to $9\%$ and $14\%$ increases, respectively.
As for the QoE, however, data purchase mostly decreases QoE as $n_{\mathrm{b}}$ increases. For example, when the user distribution is \texttt{Insurance} and $\alpha=2$, QoE is $0.875$ when $n_{\mathrm{b}}=0$, but it reduces to $0.867$ and $0.814$ when $n_{\mathrm{b}}=200$ and $n_{\mathrm{b}}=400$, which correspond to 1\% and 7\% reduction, respectively. For \texttt{MNIST} or \texttt{Fashion-MNIST}, although there are small increases when $\alpha=1$, QoE decreases when $\alpha=4$. 

This can be explained as follows. Given that an ML predictor attempts to collect user data when its prediction is highly uncertain, this active data acquisition increases the predictability of the individual model and reduces the model's uncertainty. Similar to AL, data purchase effectively increases a model's predictability, and so does the overall quality.  

In most cases, surprisingly, QoE decreases even when the overall quality increases. In other words, the quality that competing predictors provide is generally improved, but it does not necessarily mean that users will be more satisfied with the ML predictions. Although this result might sound counterintuitive, we demonstrate that it happens when the data purchase makes users experience fewer options, increasing the probability of finding low-quality predictions. To verify our hypothesis, we examine how data purchase affects diversity in the next section. 

\begin{figure*}[t]
\begin{center}
    \includegraphics[width=0.225\textwidth]{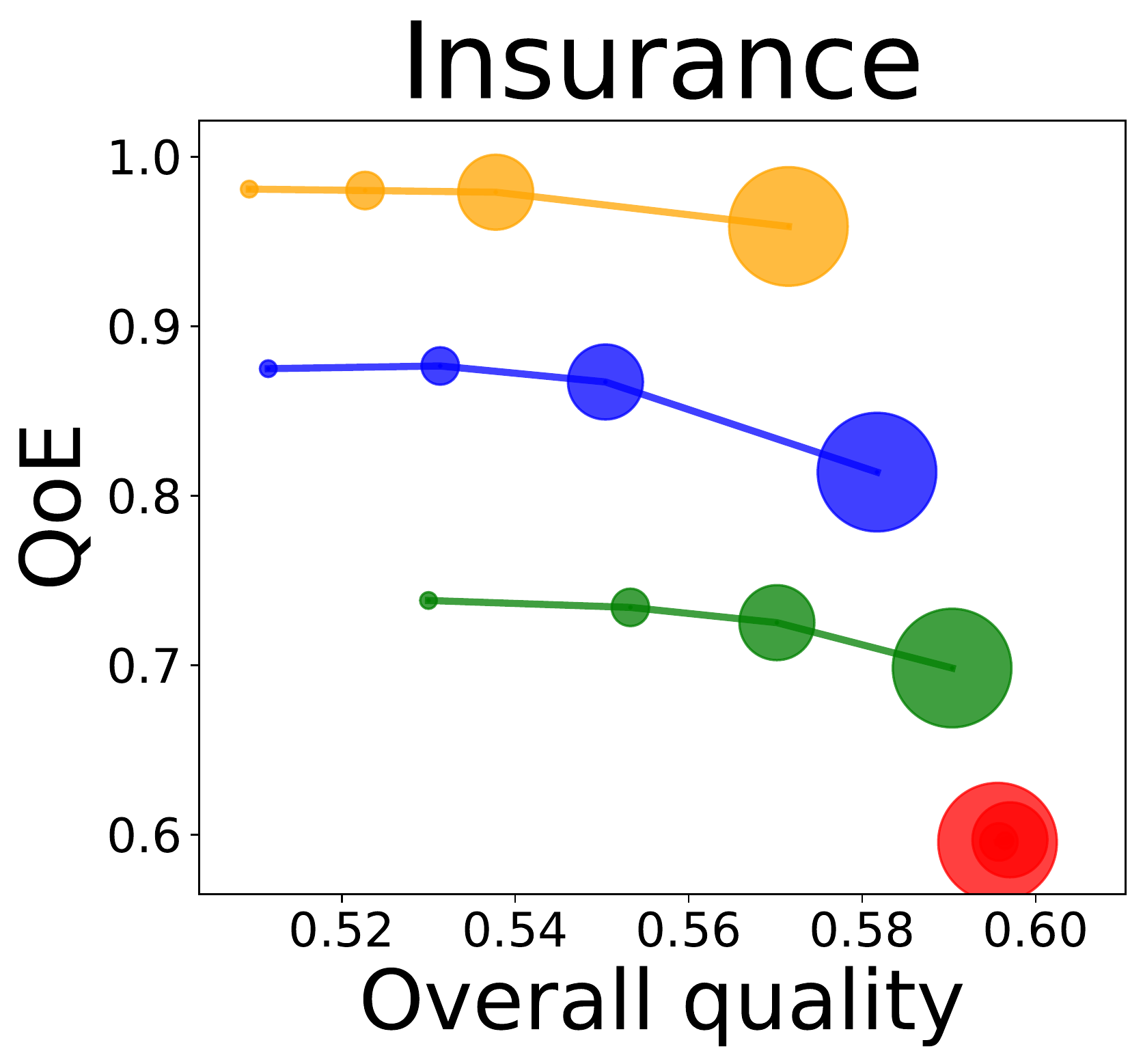}
    \includegraphics[width=0.225\textwidth]{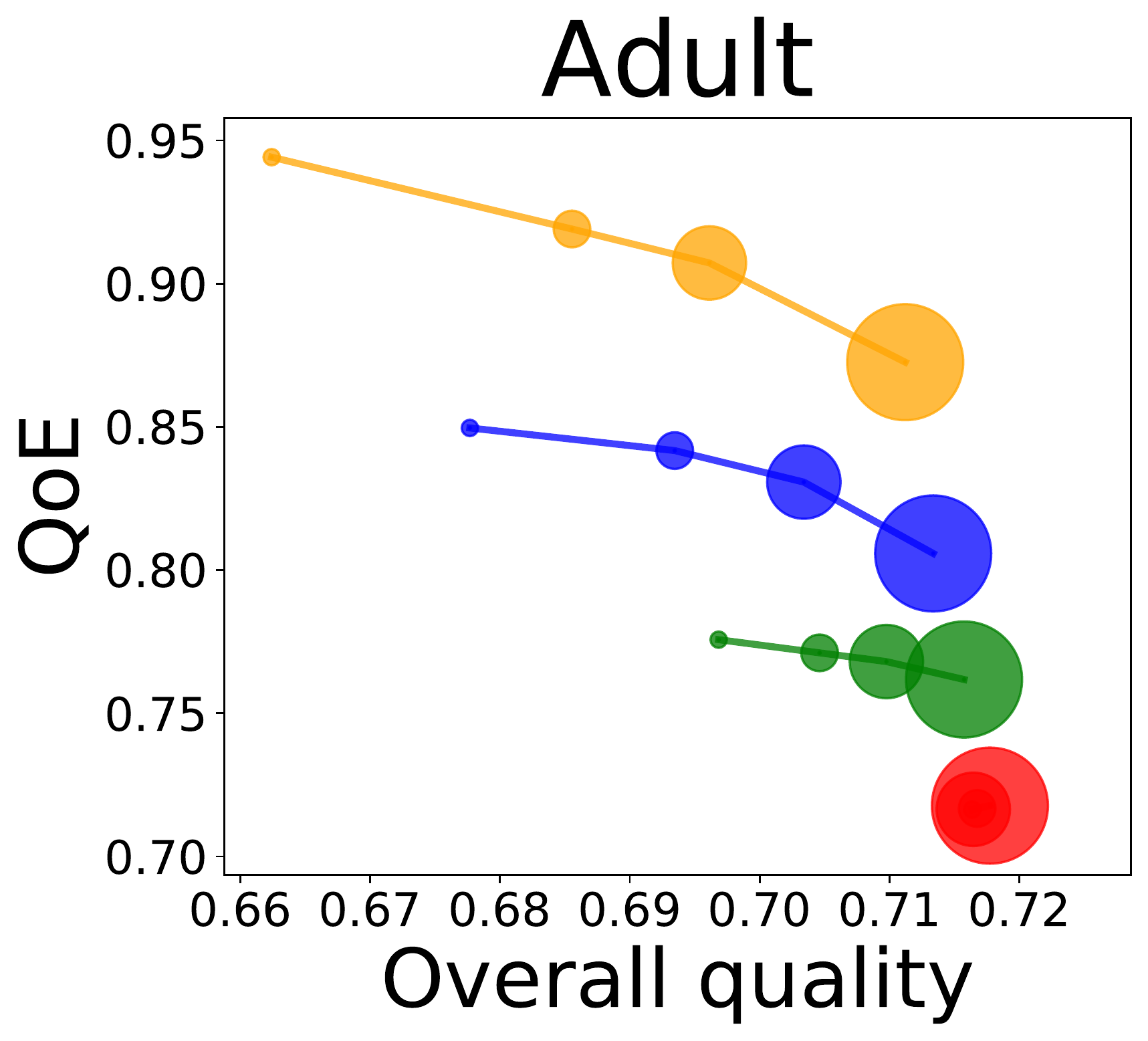}
    \includegraphics[width=0.225\textwidth]{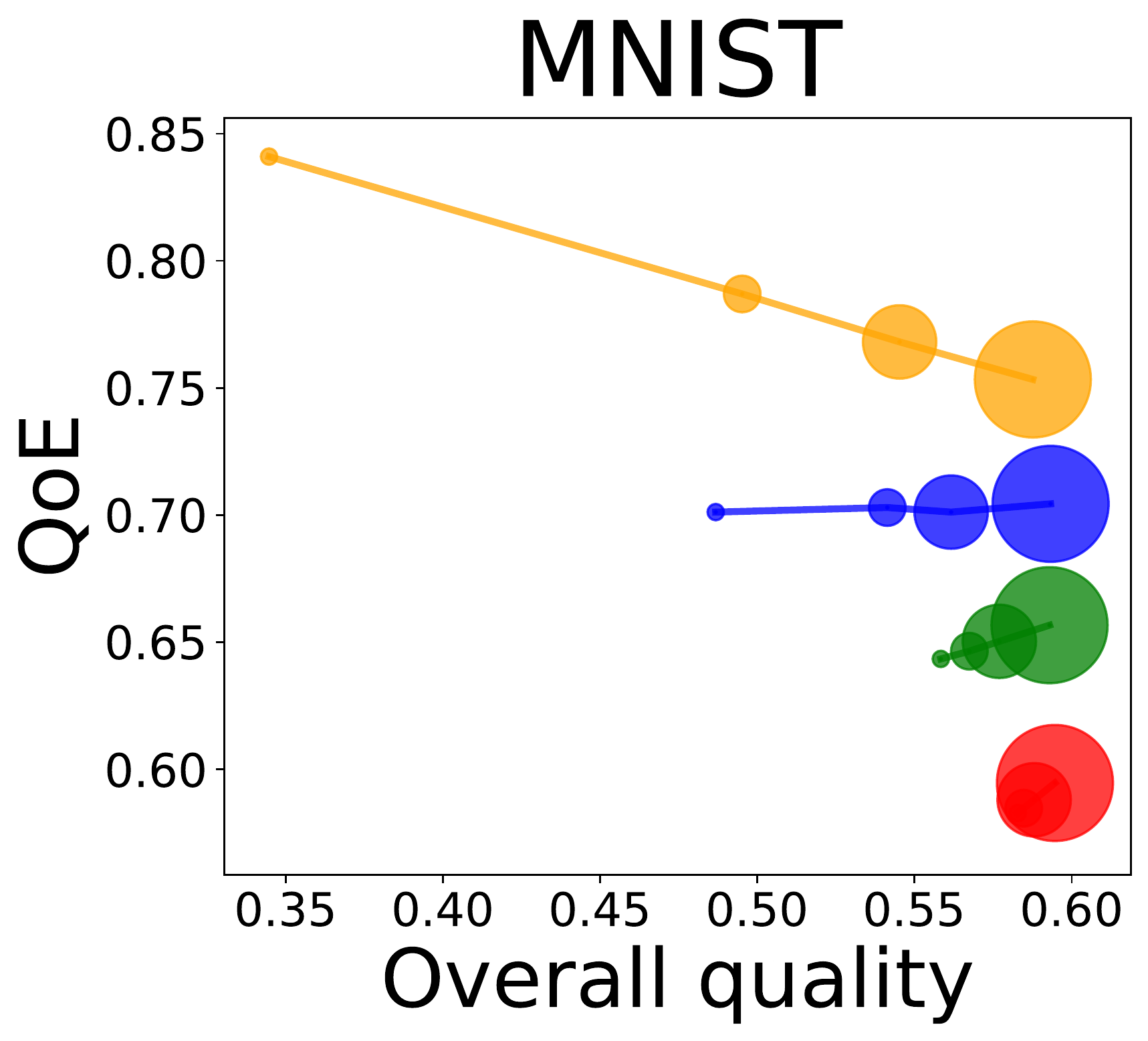}
    \includegraphics[width=0.225\textwidth]{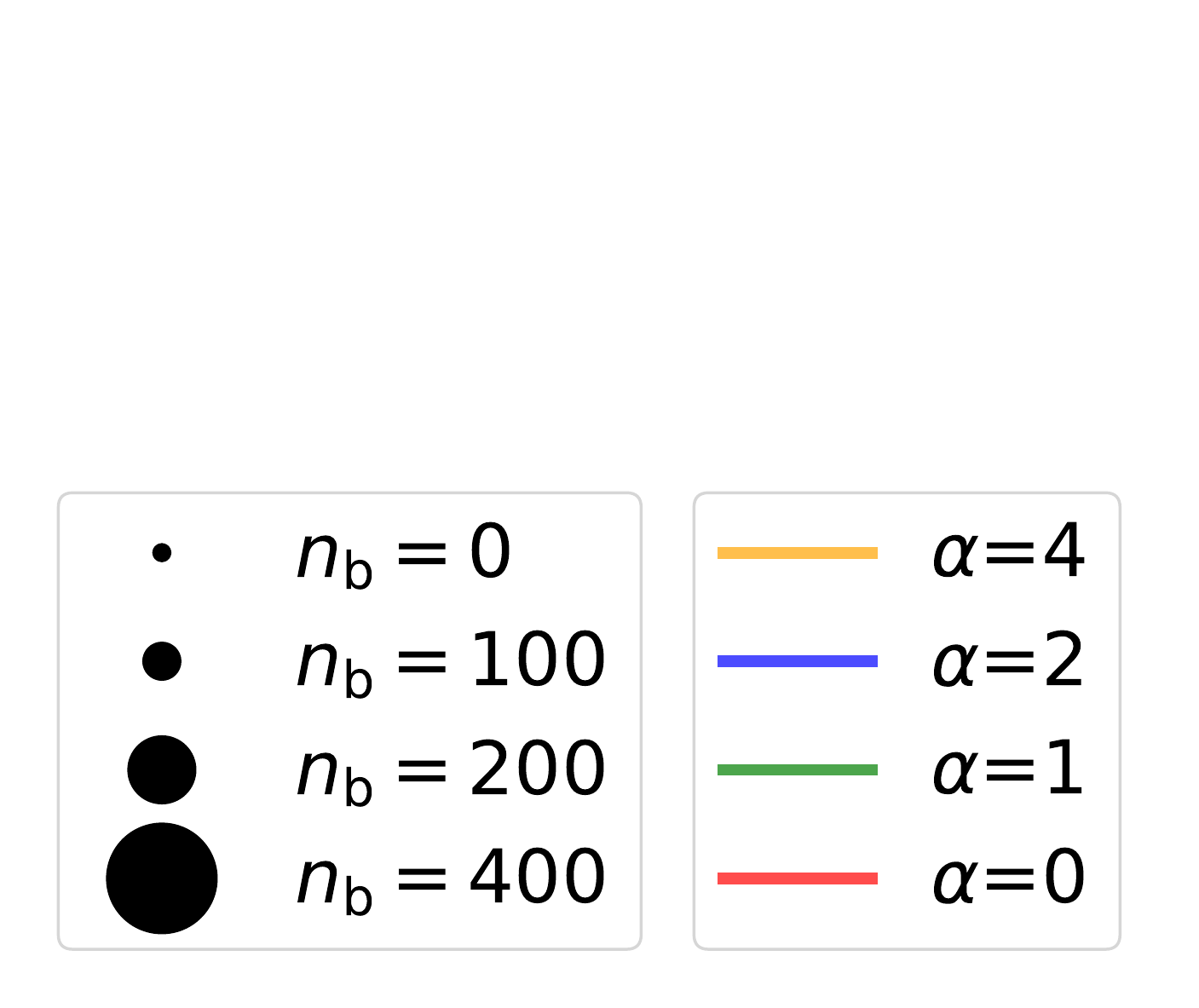}\\
    \includegraphics[width=0.225\textwidth]{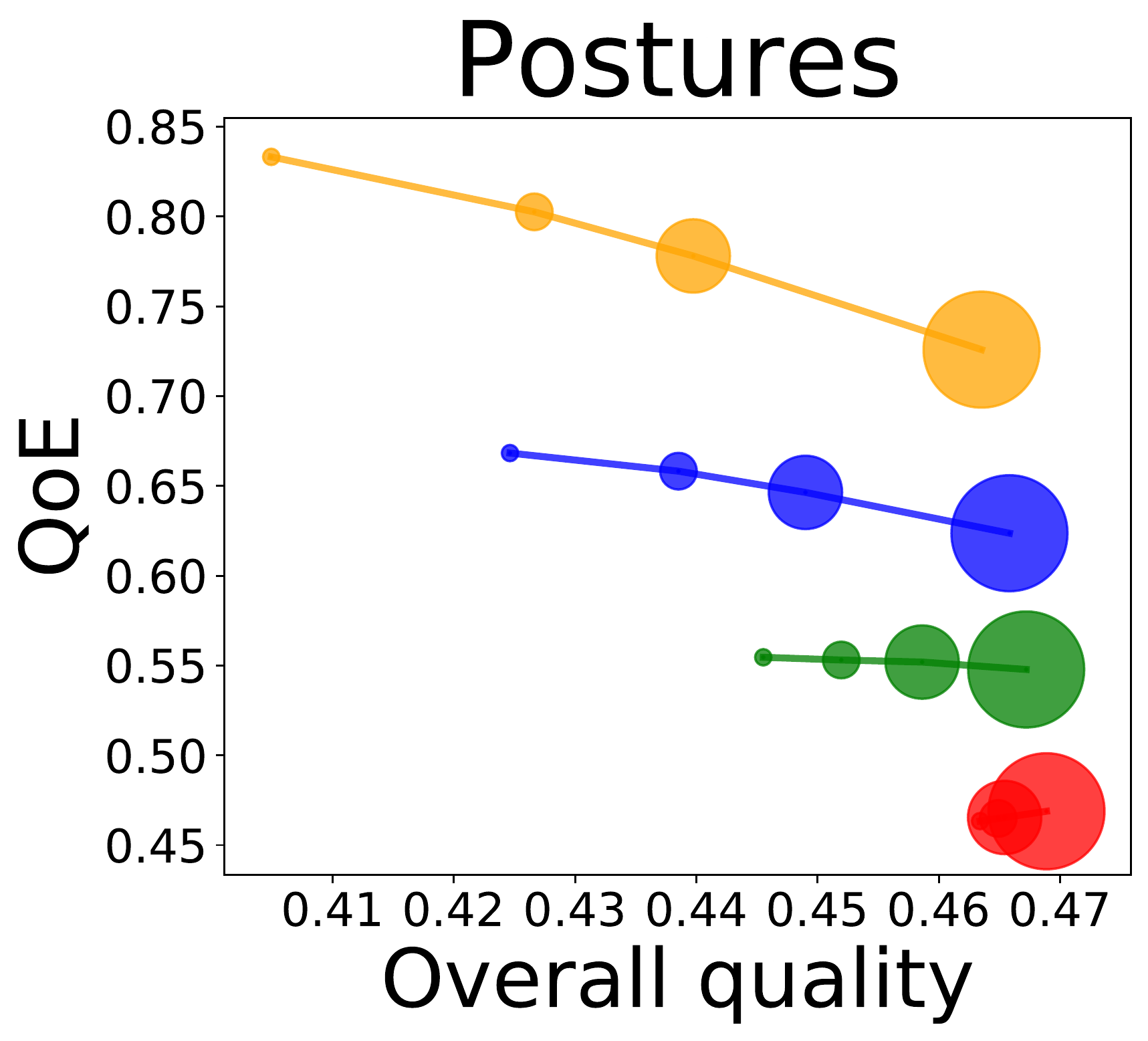}
    \includegraphics[width=0.235\textwidth]{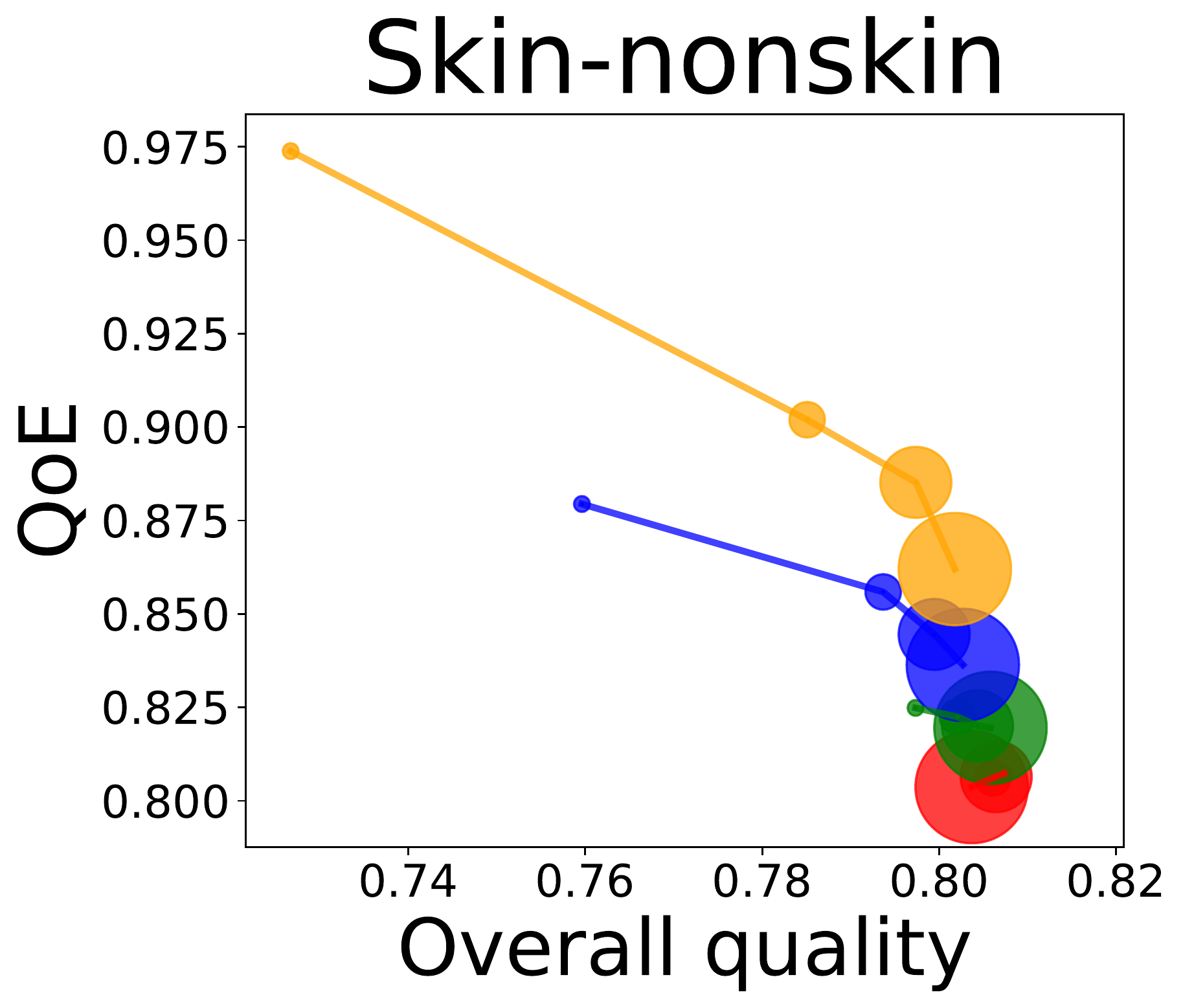}
    \includegraphics[width=0.225\textwidth]{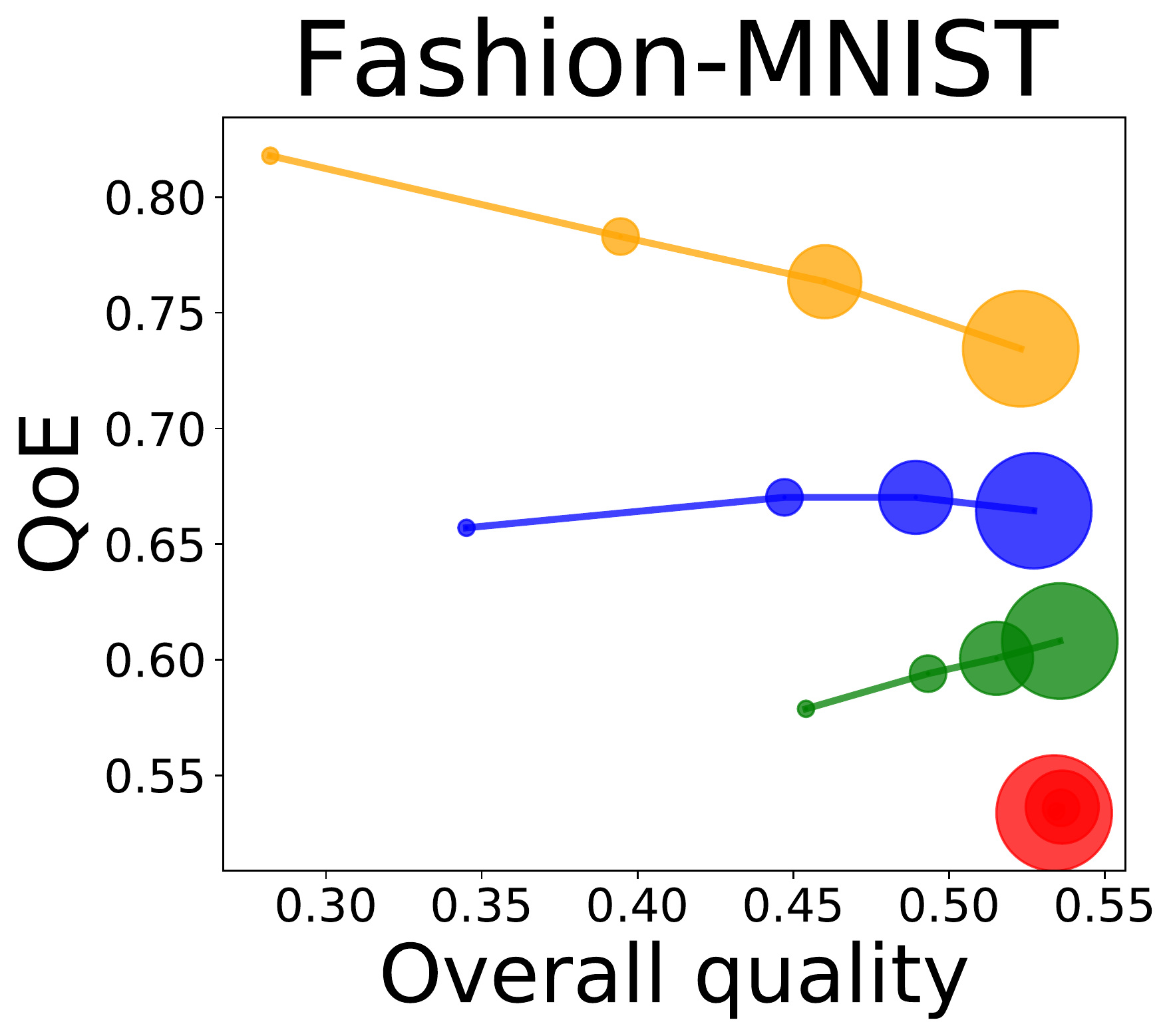}
    \includegraphics[width=0.23\textwidth]{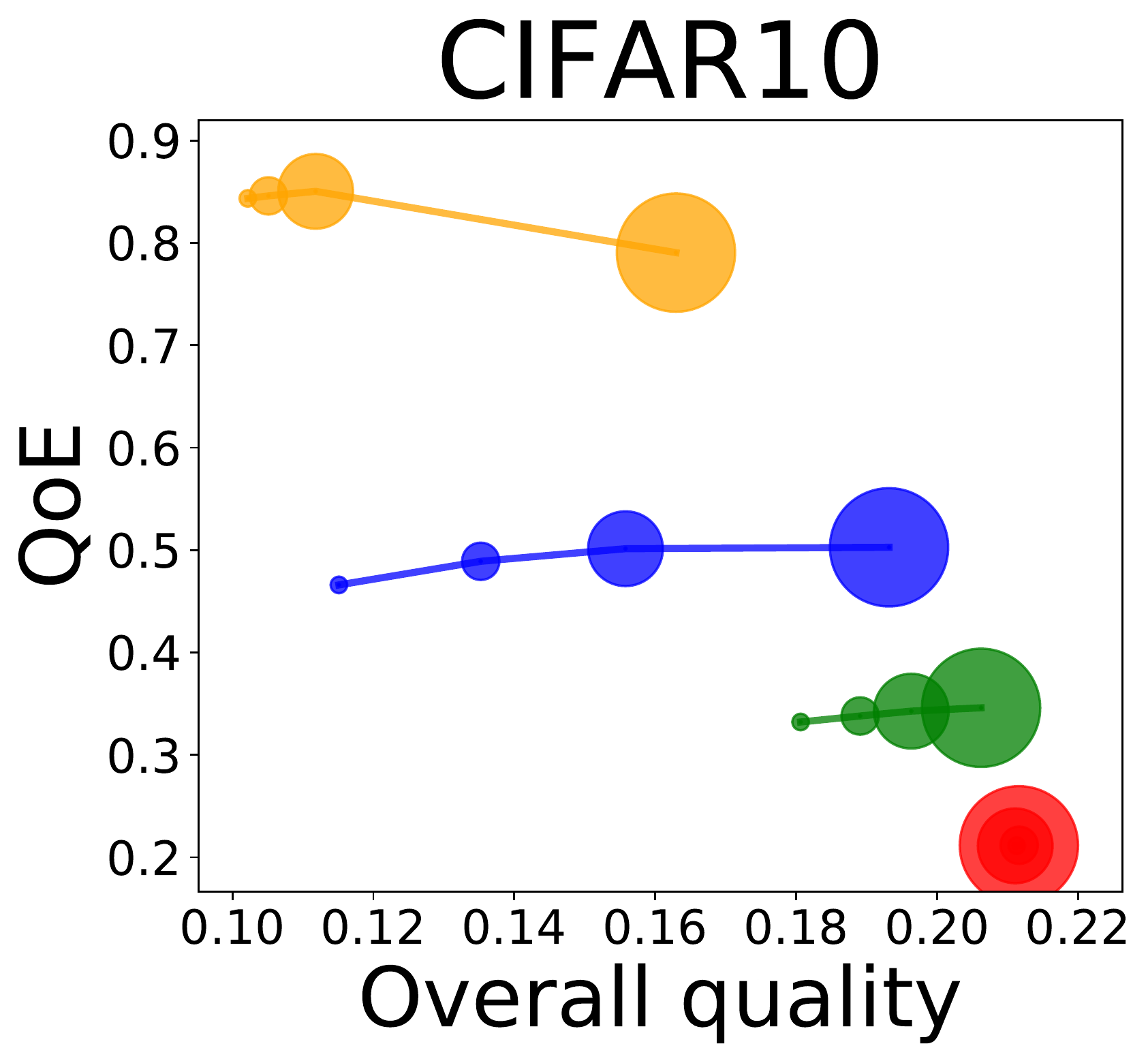}
    \caption{Illustrations of QoE as a function of the overall quality in various levels of $n_{\mathrm{b}} \in \{0,100,200,400\}$ and $\alpha \in \{0,1,2,4\}$ on the seven datasets. Different color indicates different $\alpha$, and the size of point indicates budgets $n_{\mathrm{b}}$. The larger budget is, the larger the point size is. In several settings, the overall quality increases as more budgets are used, but QoE decreases.}
    \label{fig:OverallQuality_vs_QoE}
\end{center}
\end{figure*}

\begin{figure*}[t]
\begin{center}
    \includegraphics[width=0.225\textwidth]{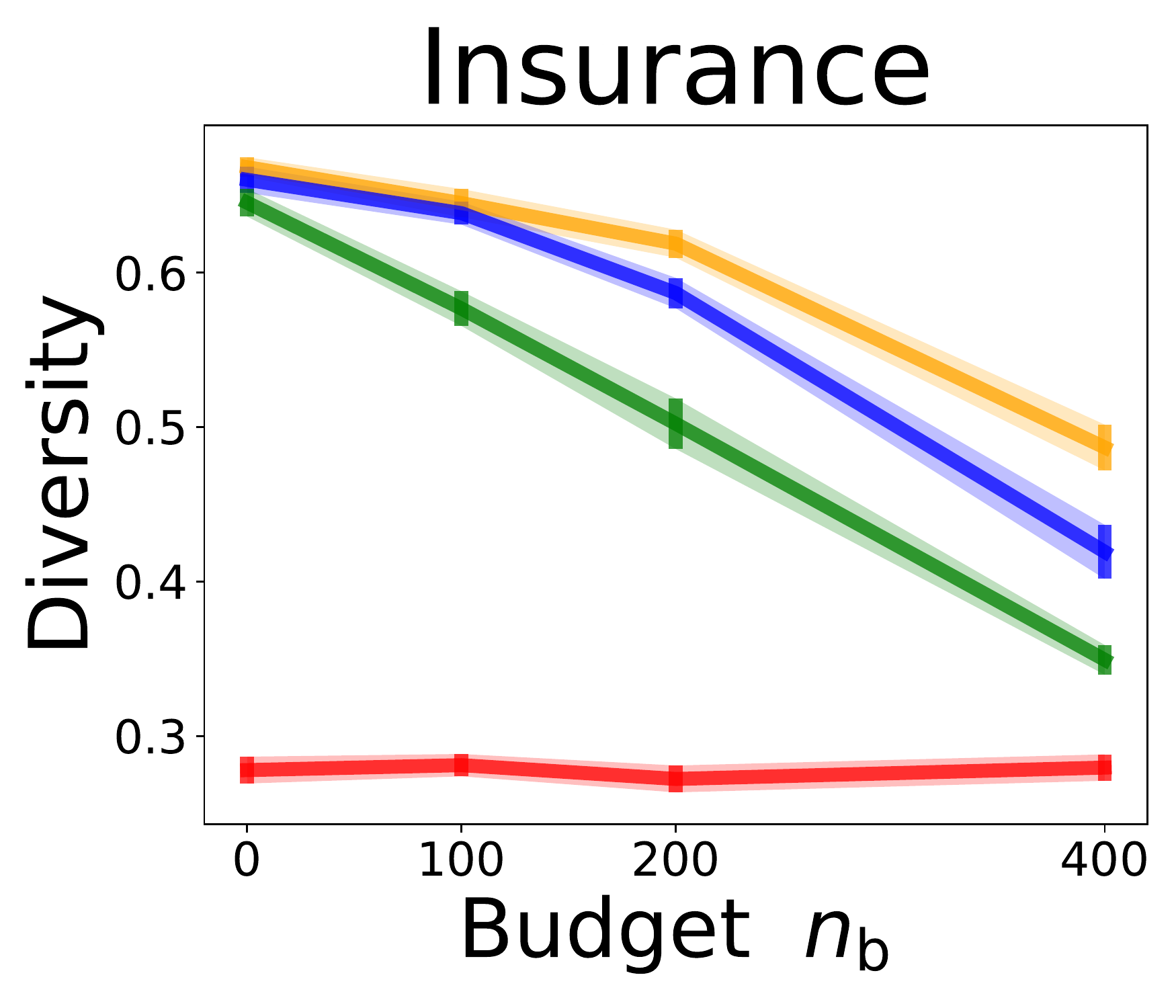}
    \includegraphics[width=0.225\textwidth]{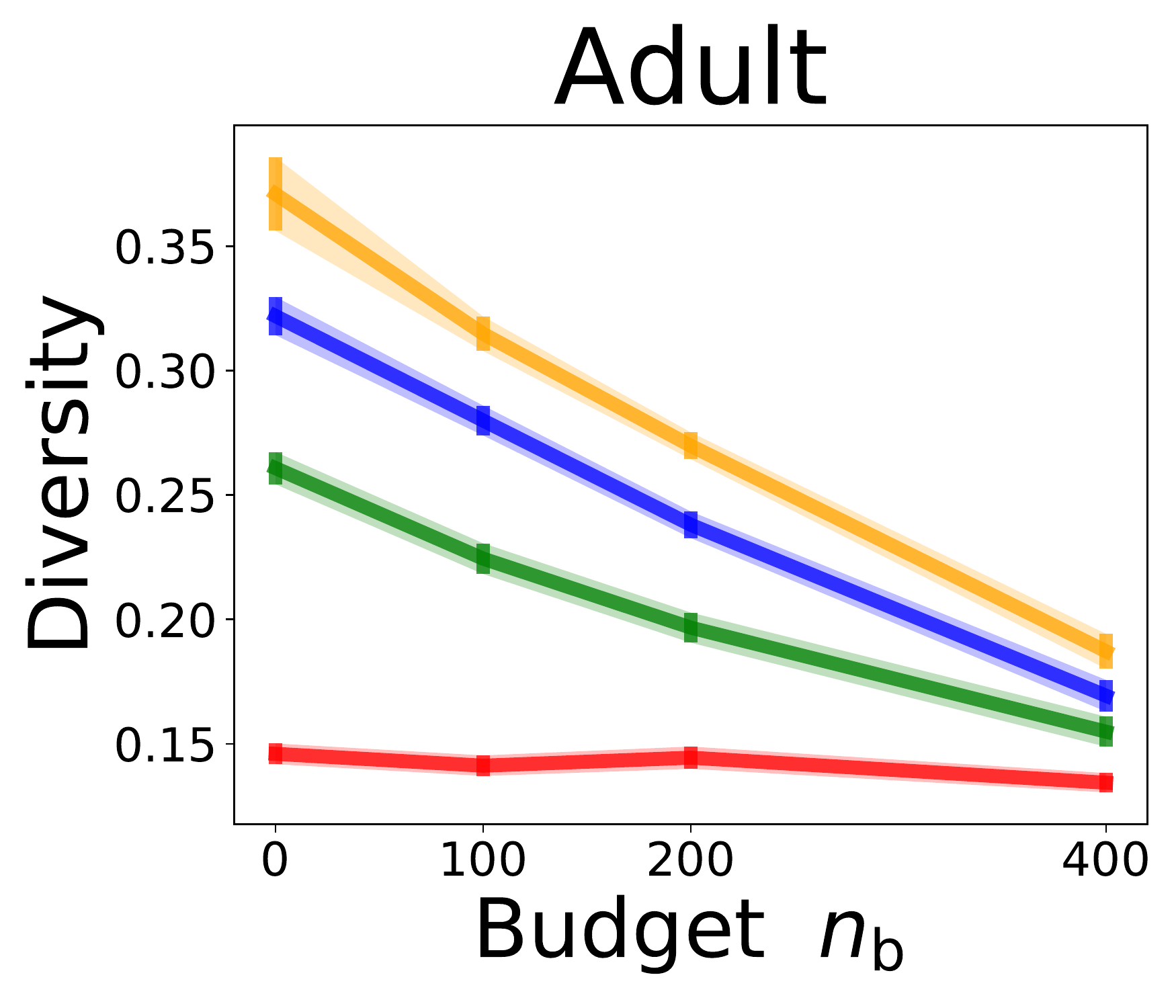}
    \includegraphics[width=0.225\textwidth]{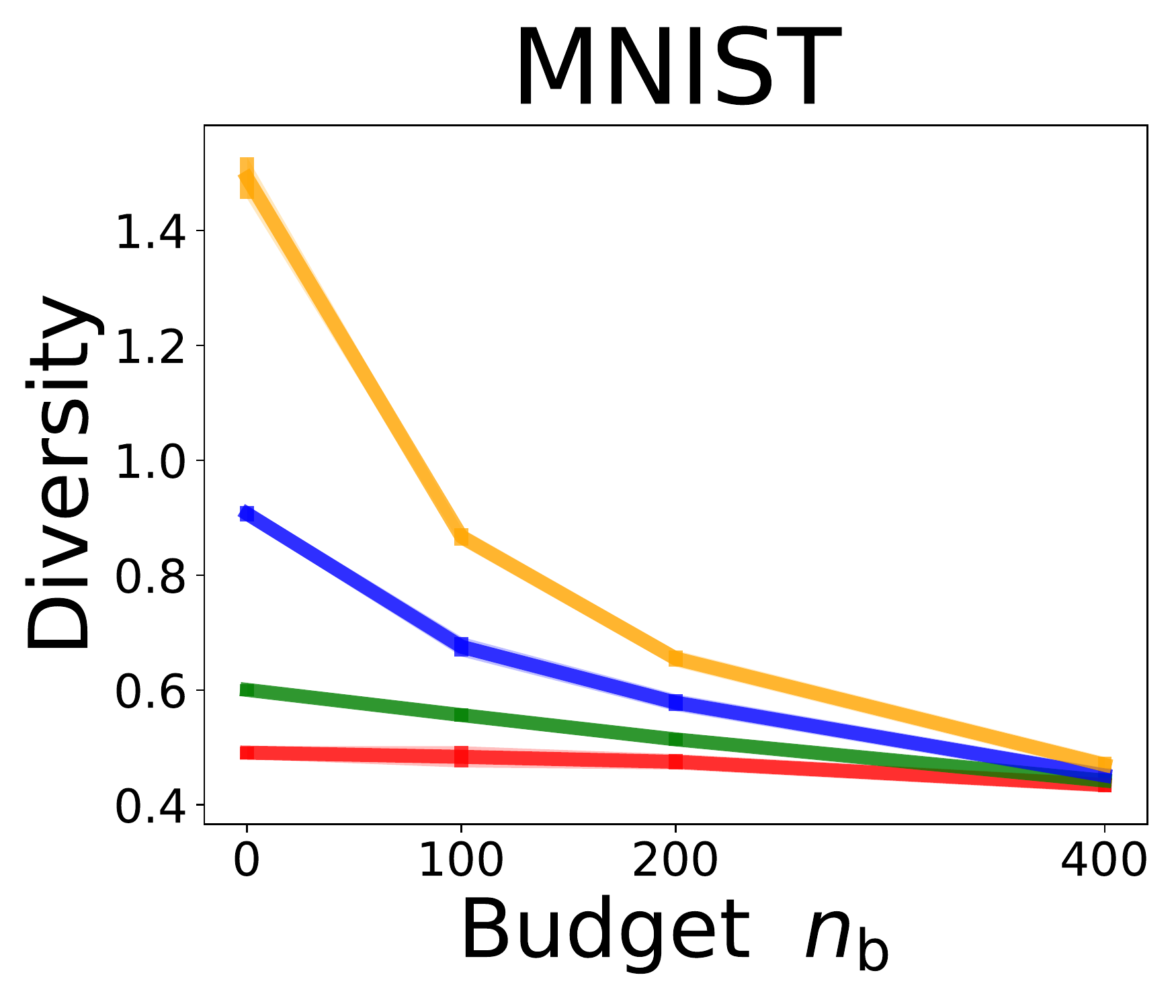}
    \includegraphics[width=0.225\textwidth]{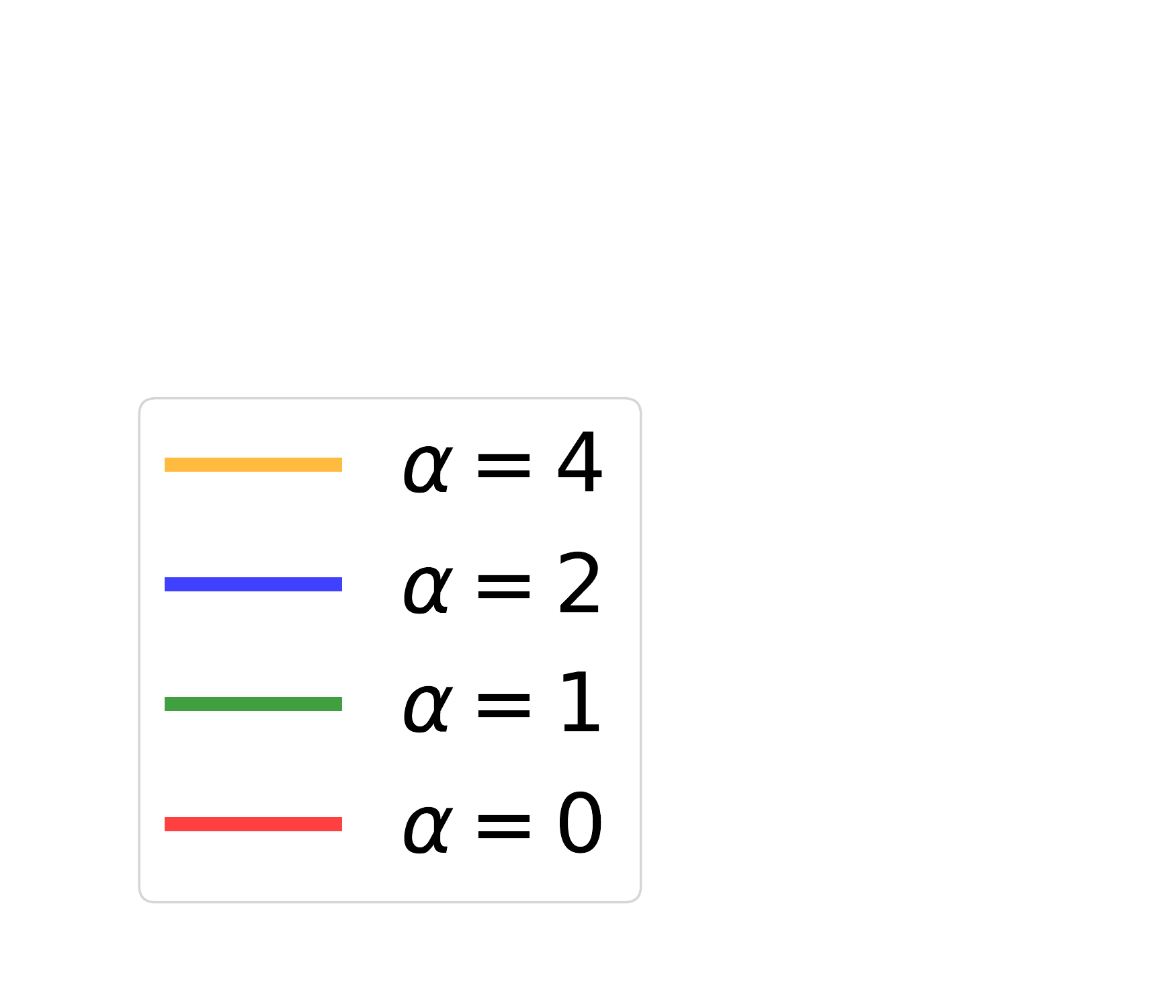}
    \\
    \includegraphics[width=0.225\textwidth]{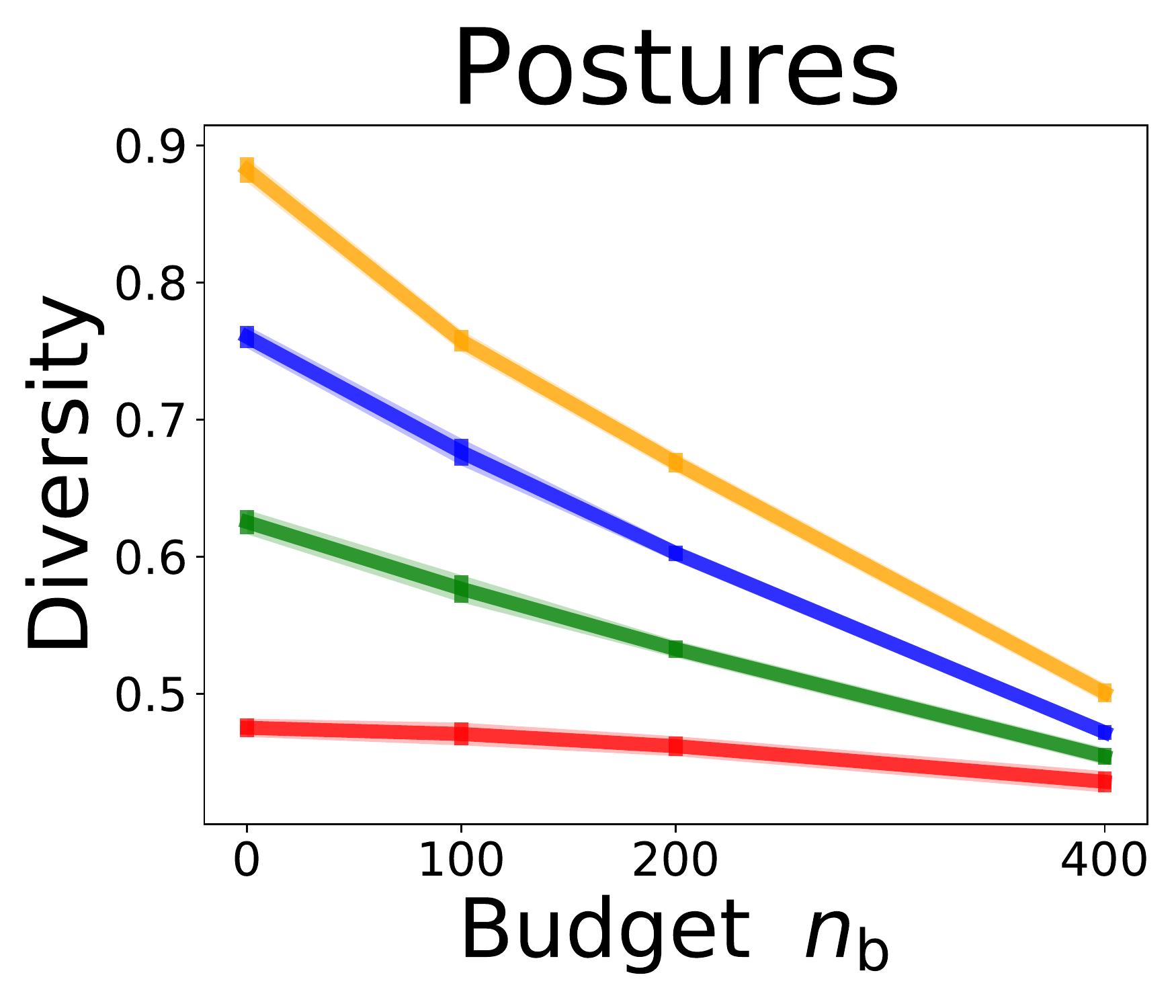}
    \includegraphics[width=0.225\textwidth]{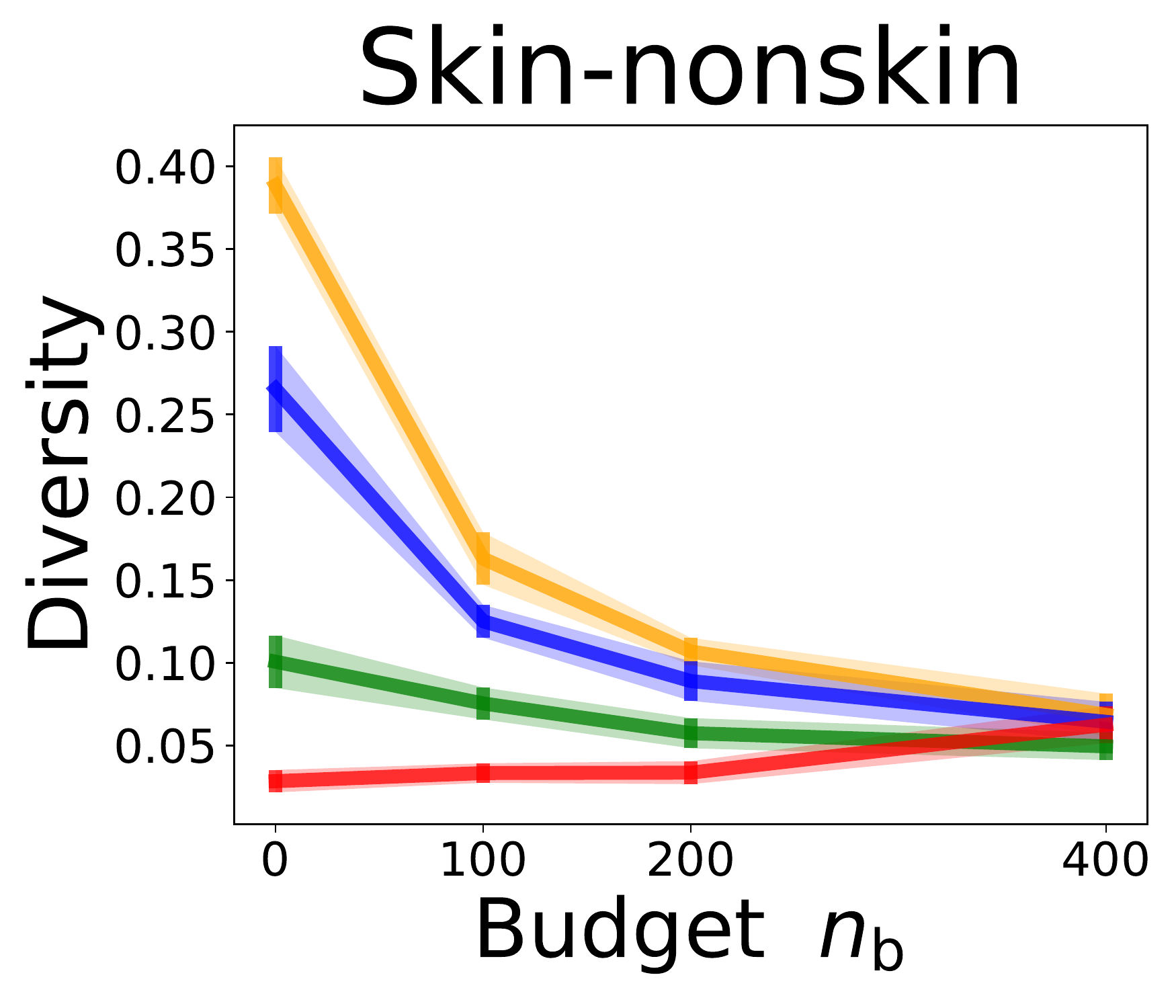}
    \includegraphics[width=0.225\textwidth]{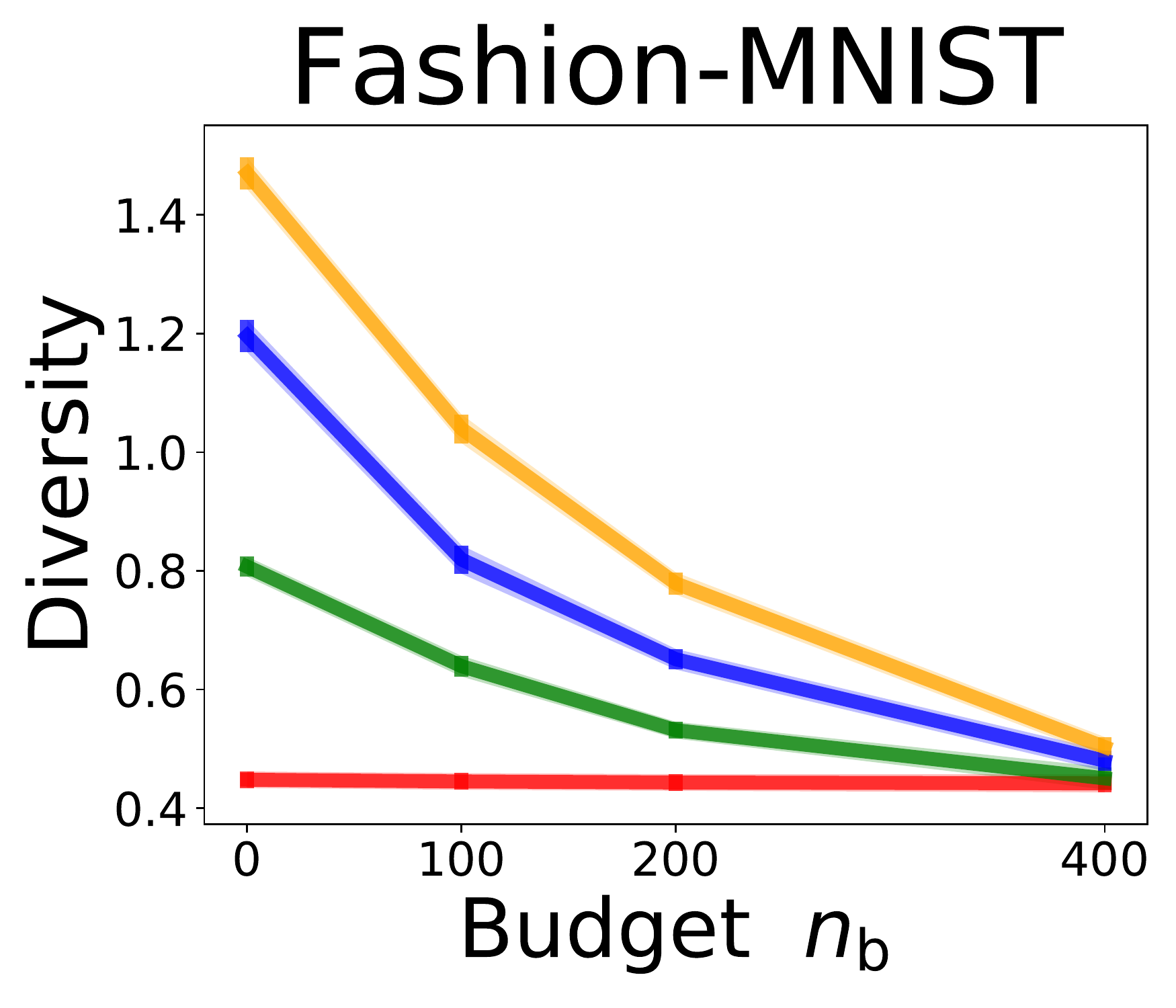}
    \includegraphics[width=0.225\textwidth]{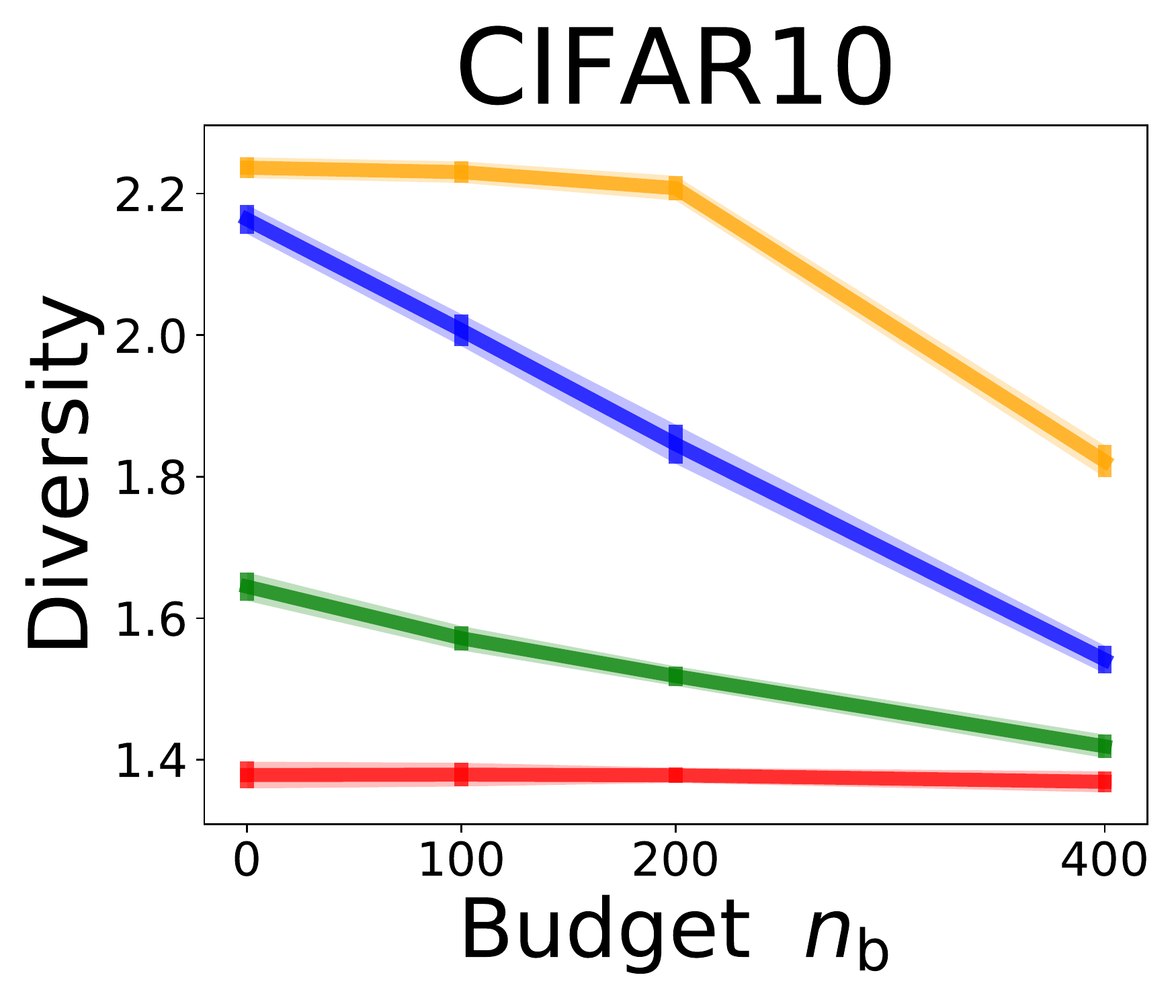}
    \caption{Illustrations of the diversity as a function of the budget $n_{\mathrm{b}}$ for various $\alpha \in \{0,1,2,4\}$ on the seven datasets. Each color indicates different $\alpha$. We denote a 99\% confidence band based on 30 independent runs. Competing ML predictors become similar in the sense that the diversity decreases as the budget increases.}
    \label{fig:diversity}
\end{center}    
\end{figure*}

\begin{figure*}[t]
\begin{center}
    \includegraphics[width=0.4\textwidth]{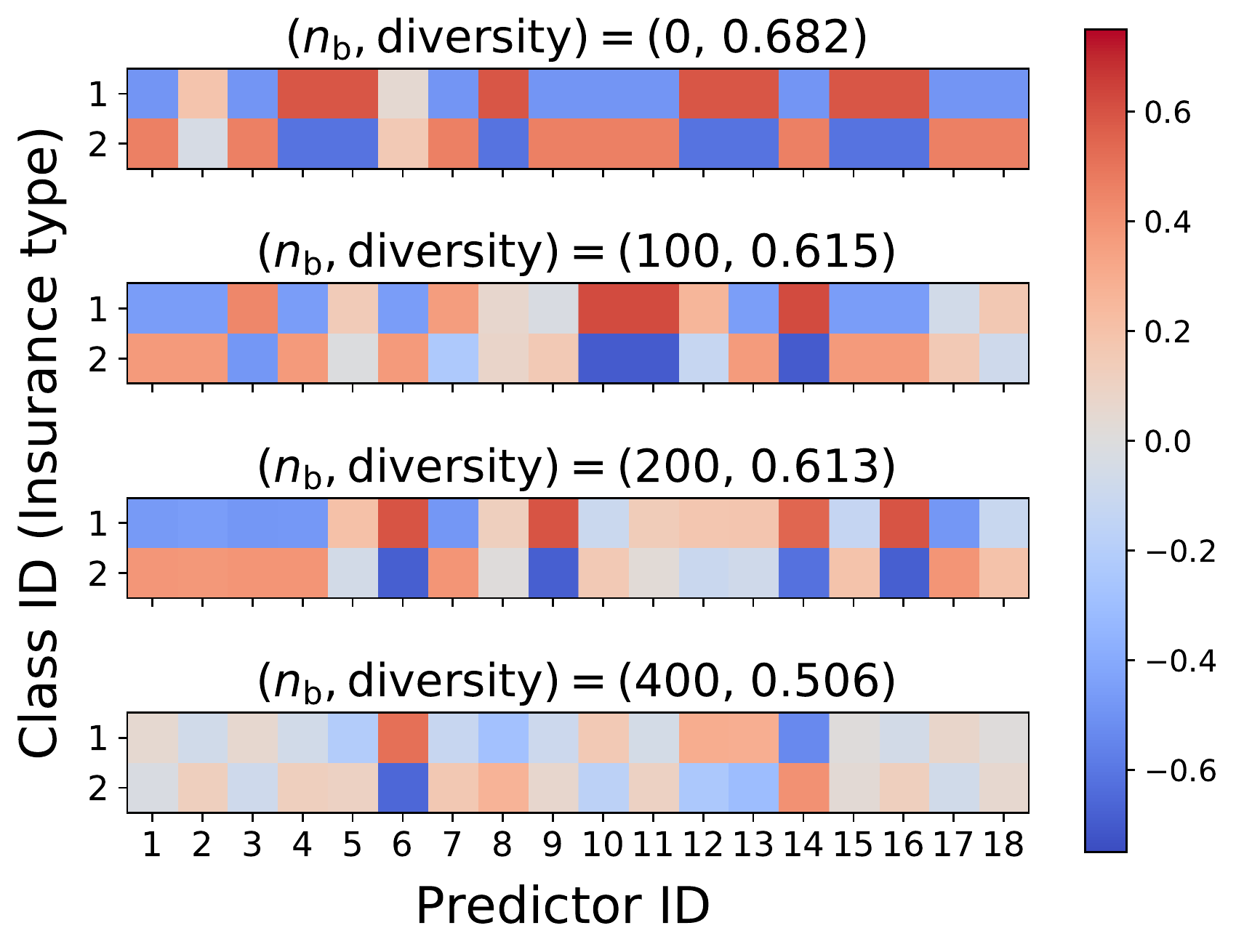}
    \includegraphics[width=0.4\textwidth]{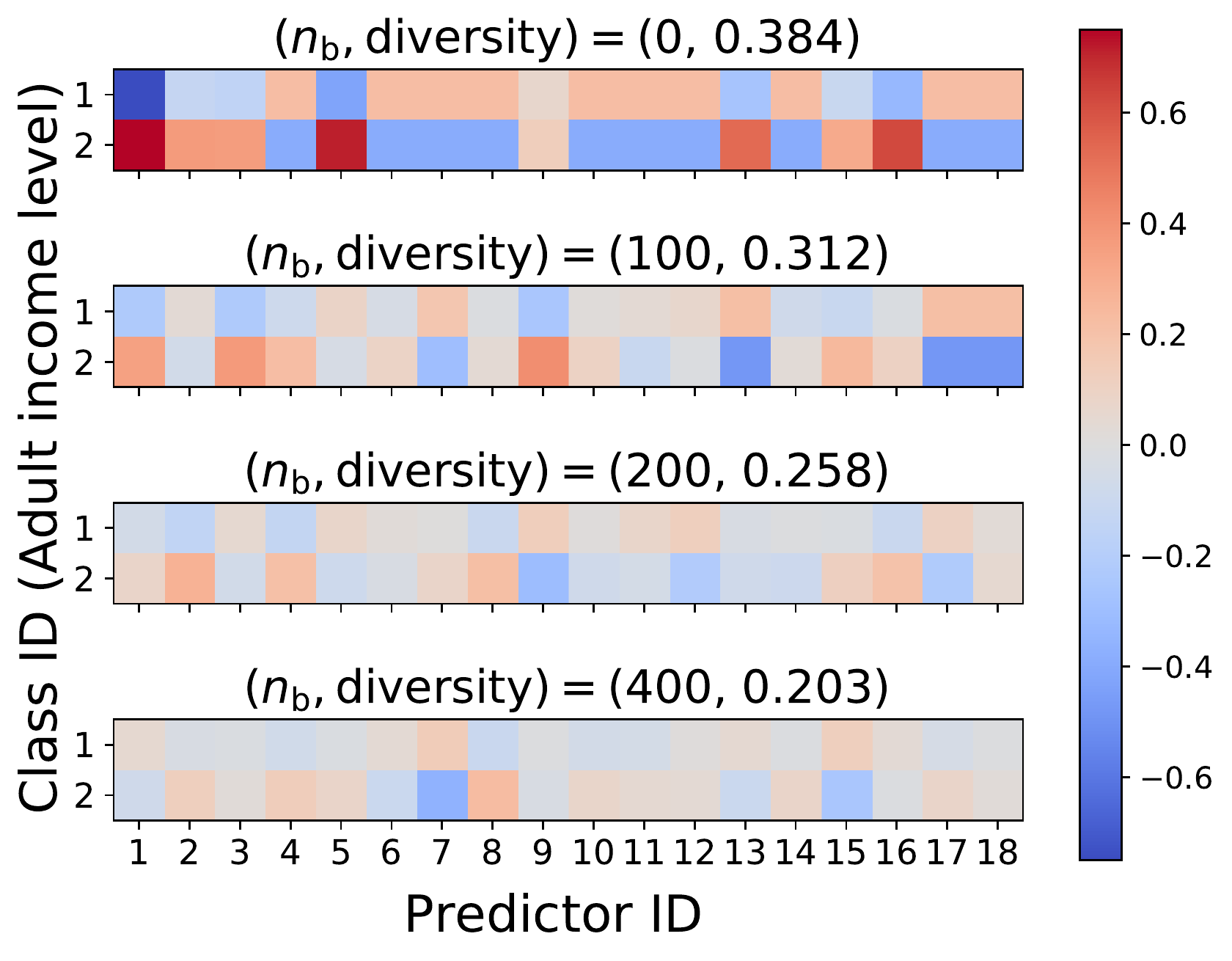}
    \caption{Heatmaps of $Q(j,y)-Q_{\mathrm{avg}}(y)$ for (left) \texttt{Insurance} and (right) \texttt{Adult} datasets. We consider $n_{\mathrm{b}} \in \{0,100,200,400\}$ and $\alpha=4$. The heatmaps in each row represent different $n_{\mathrm{b}}$ but share the same color scale. For each heatmap, a horizontal axis indicates a predictor ID in $\{1,\dots,18\}$ and a vertical axis indicates a class in $\mathcal{Y}=\{1,2\}$. The grid colored red (\textit{resp.} blue) indicates a class-specific quality is higher (\textit{resp.} lower) than average, and the white grid indicates the average. As the budget increases, the diversity decreases, and predictors produce similar class-specific quality.} 
    \label{fig:heatmap}
\end{center}    
\end{figure*}

\begin{figure*}[t]
\begin{center}
    \includegraphics[width=0.225\textwidth]{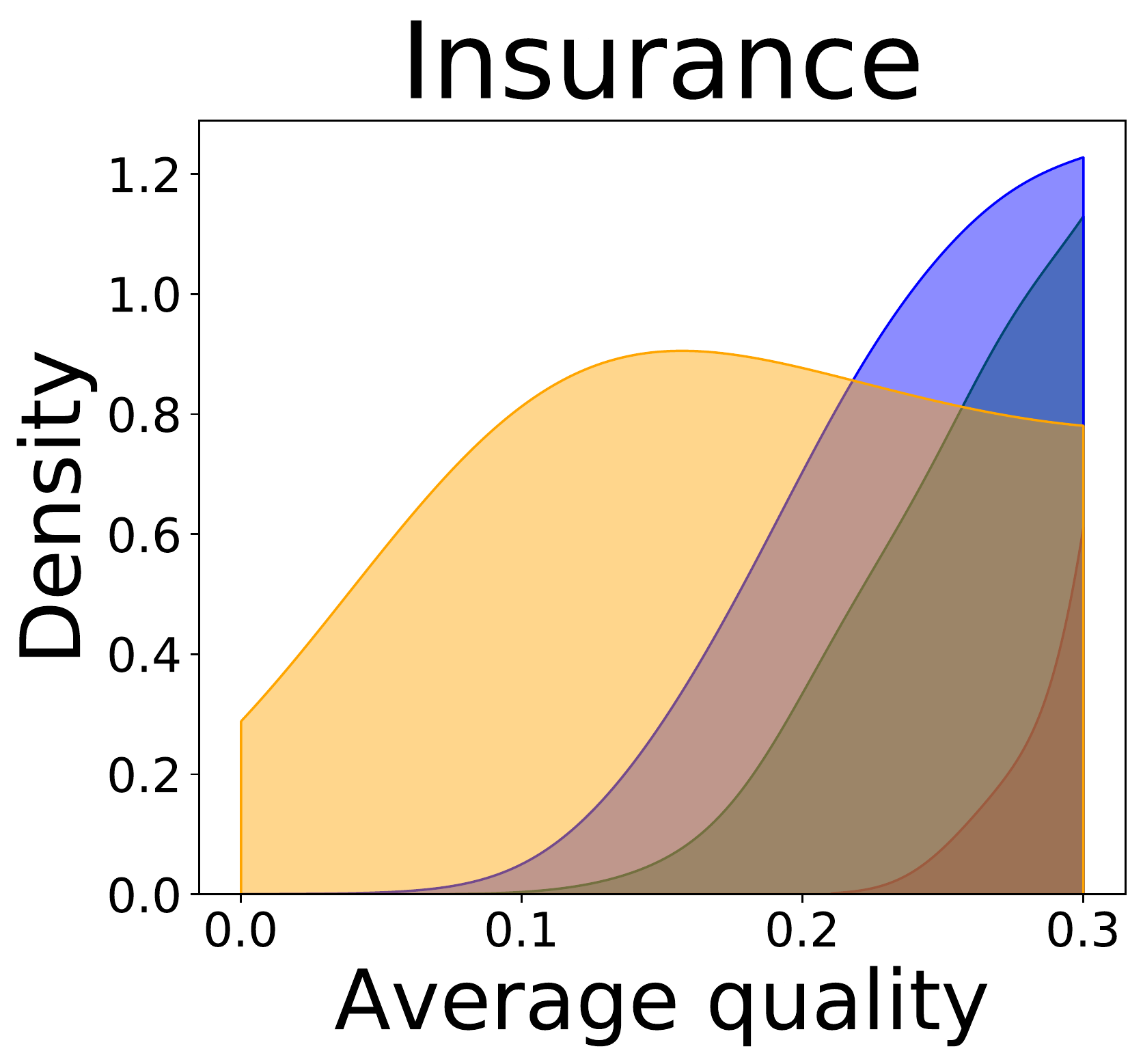}
    \includegraphics[width=0.225\textwidth]{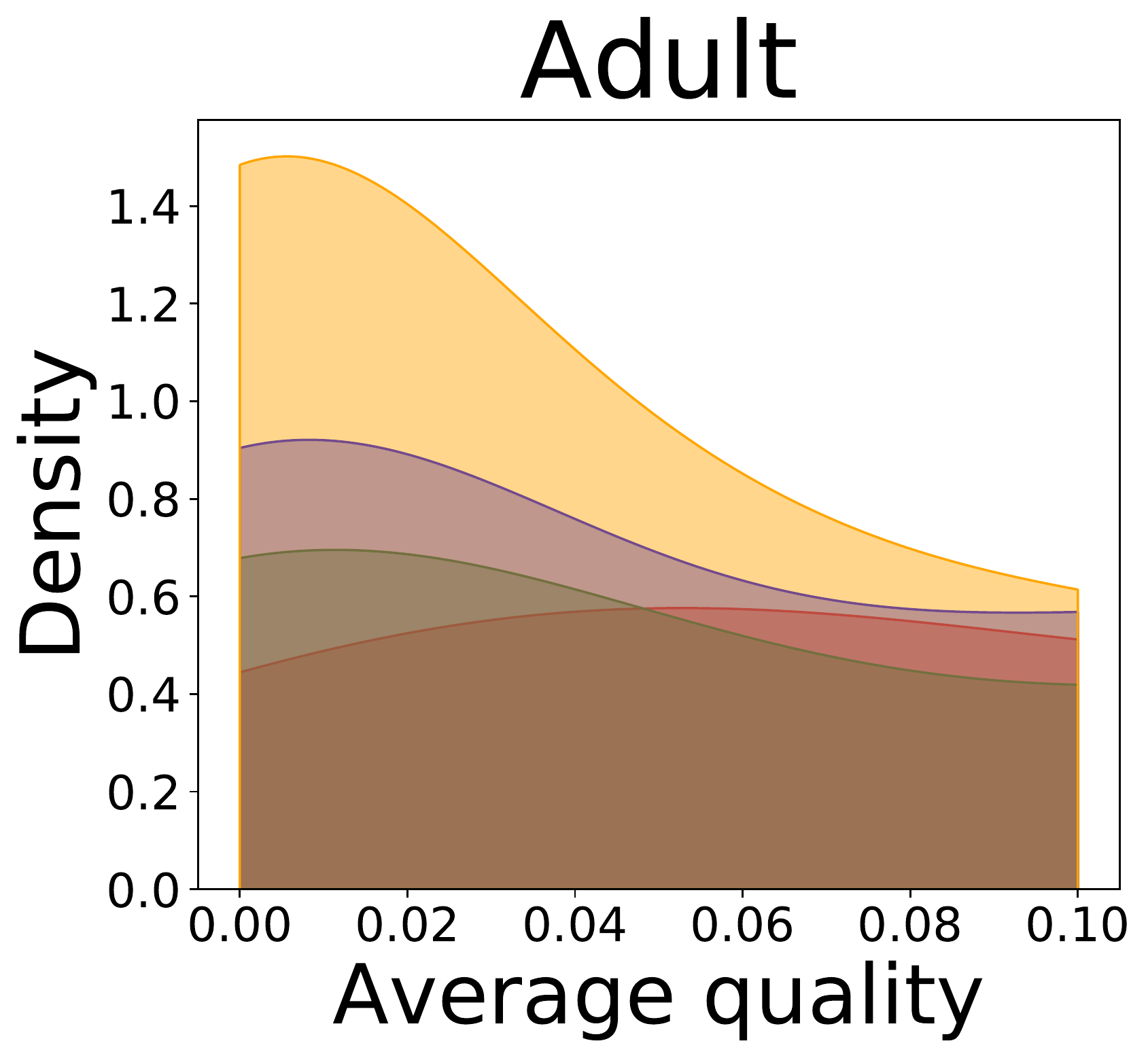}
    \includegraphics[width=0.225\textwidth]{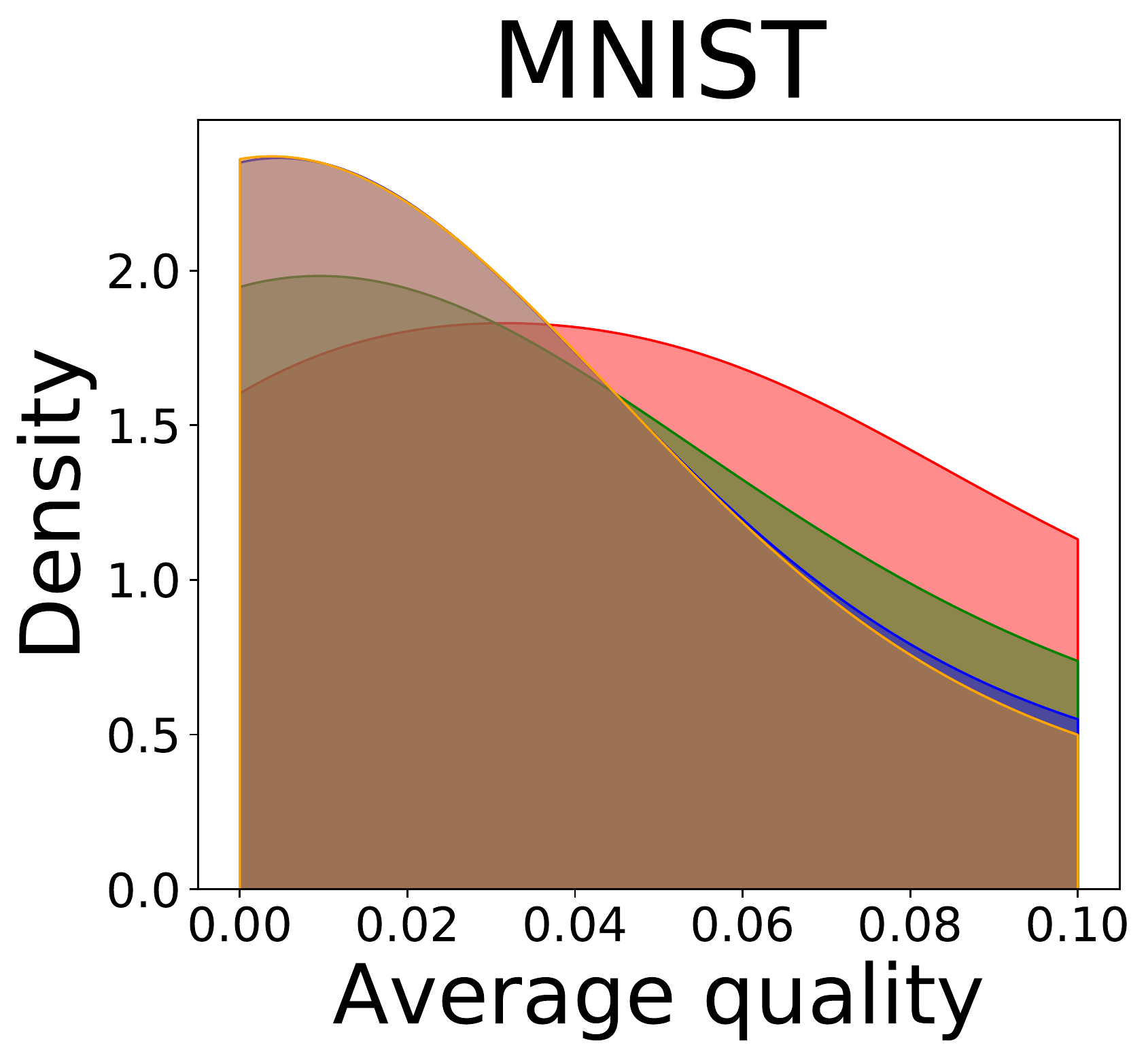}
    \includegraphics[width=0.225\textwidth]{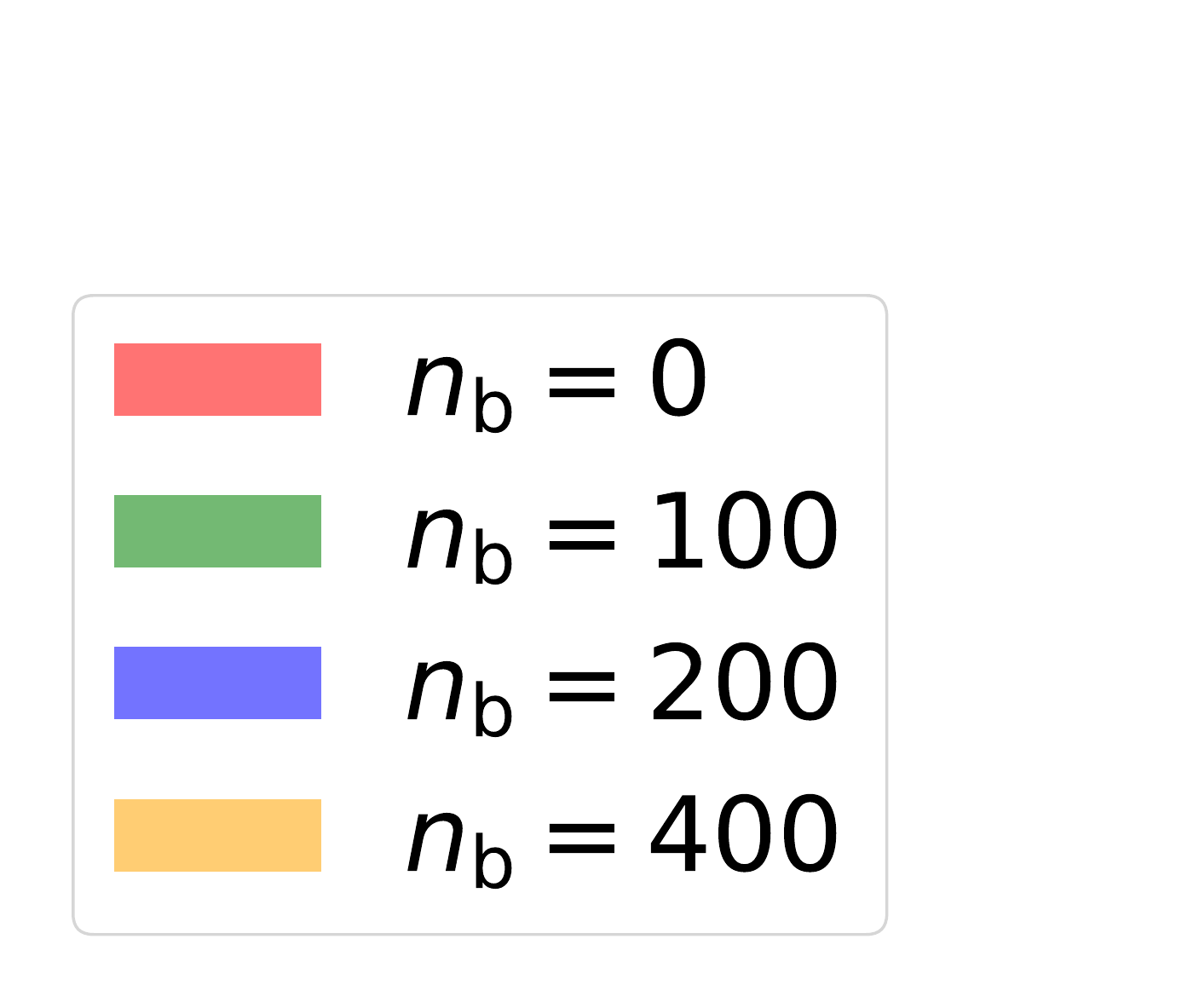}
    \\
    \includegraphics[width=0.225\textwidth]{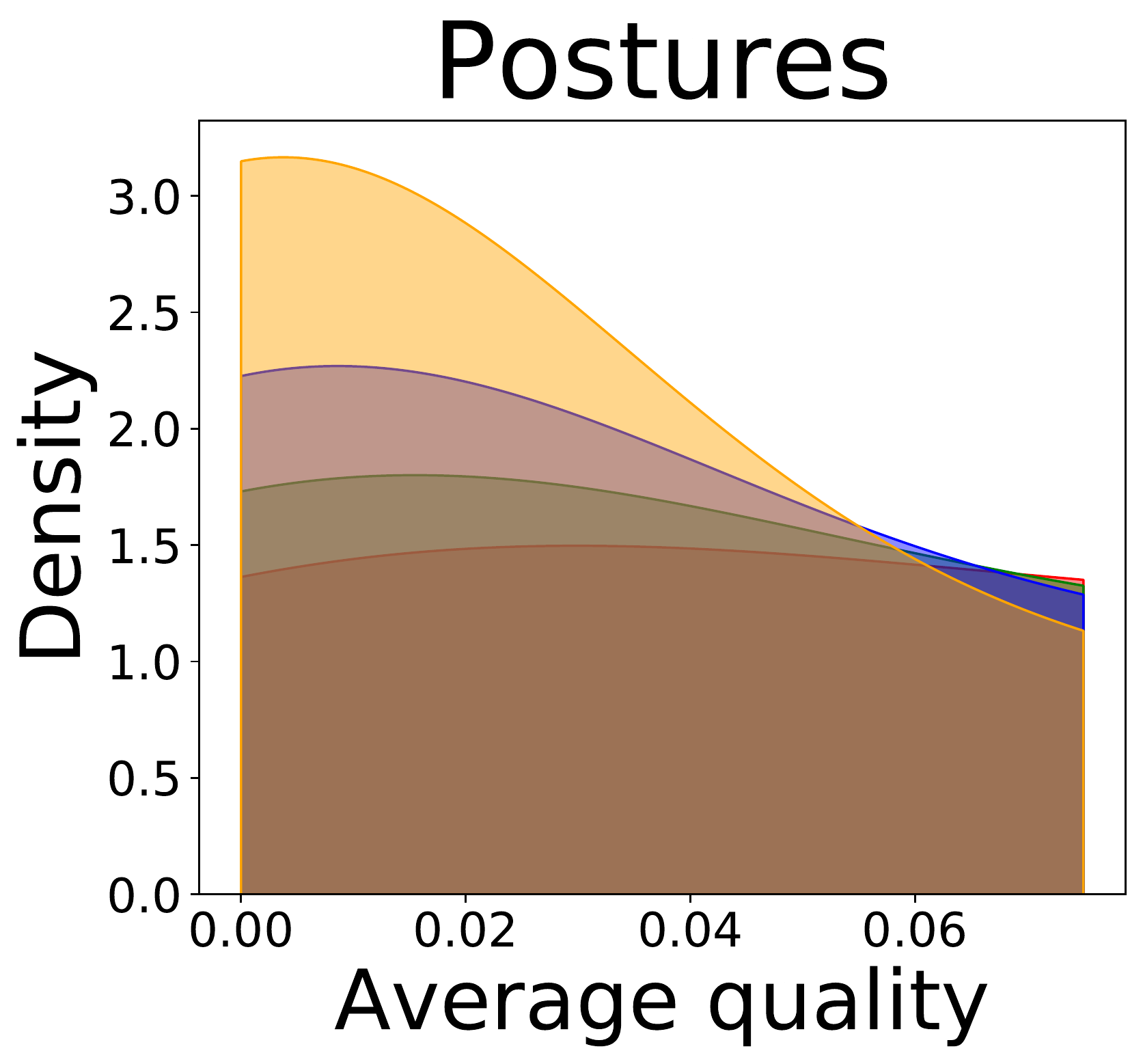}
    \includegraphics[width=0.225\textwidth]{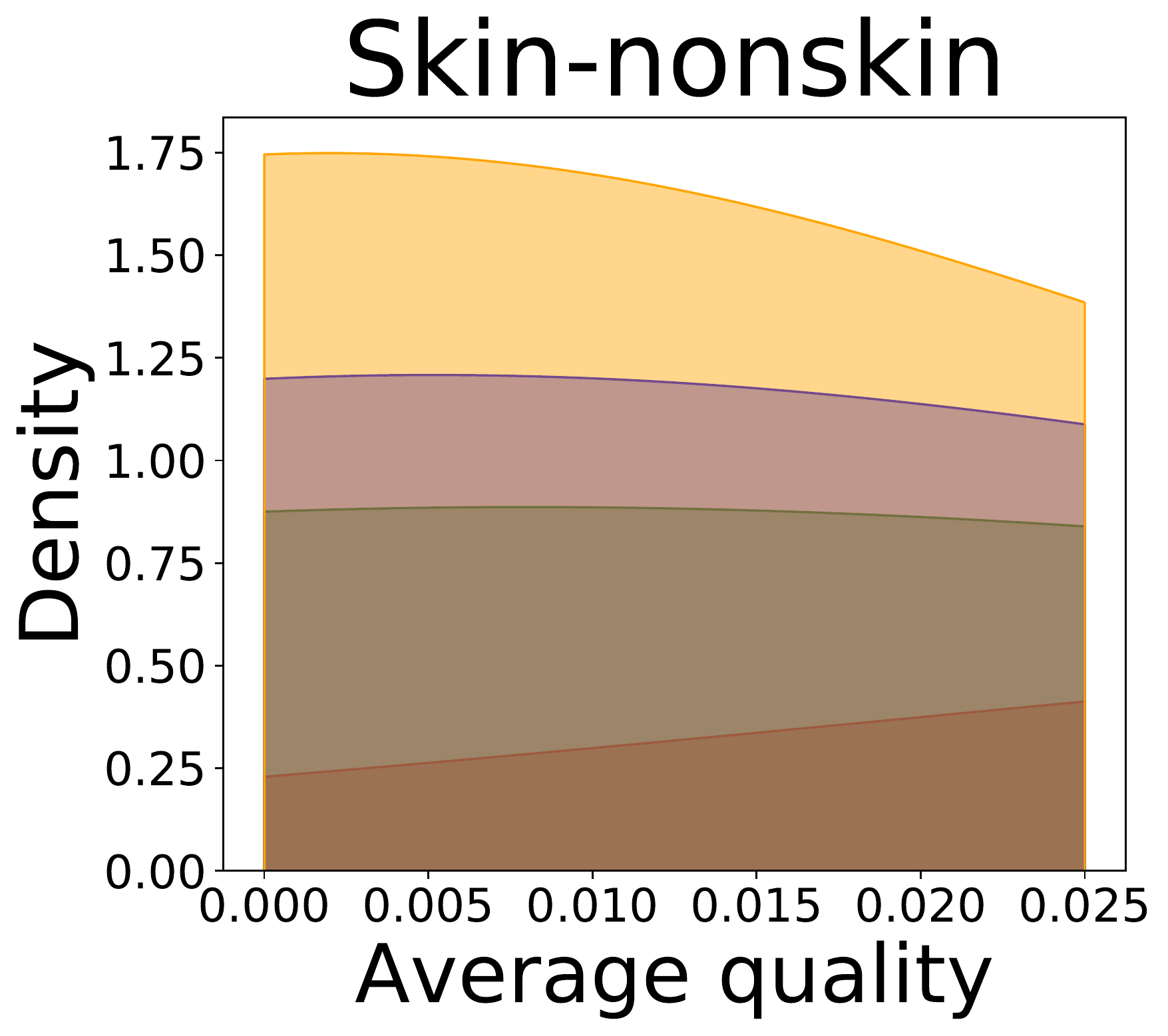}
    \includegraphics[width=0.225\textwidth]{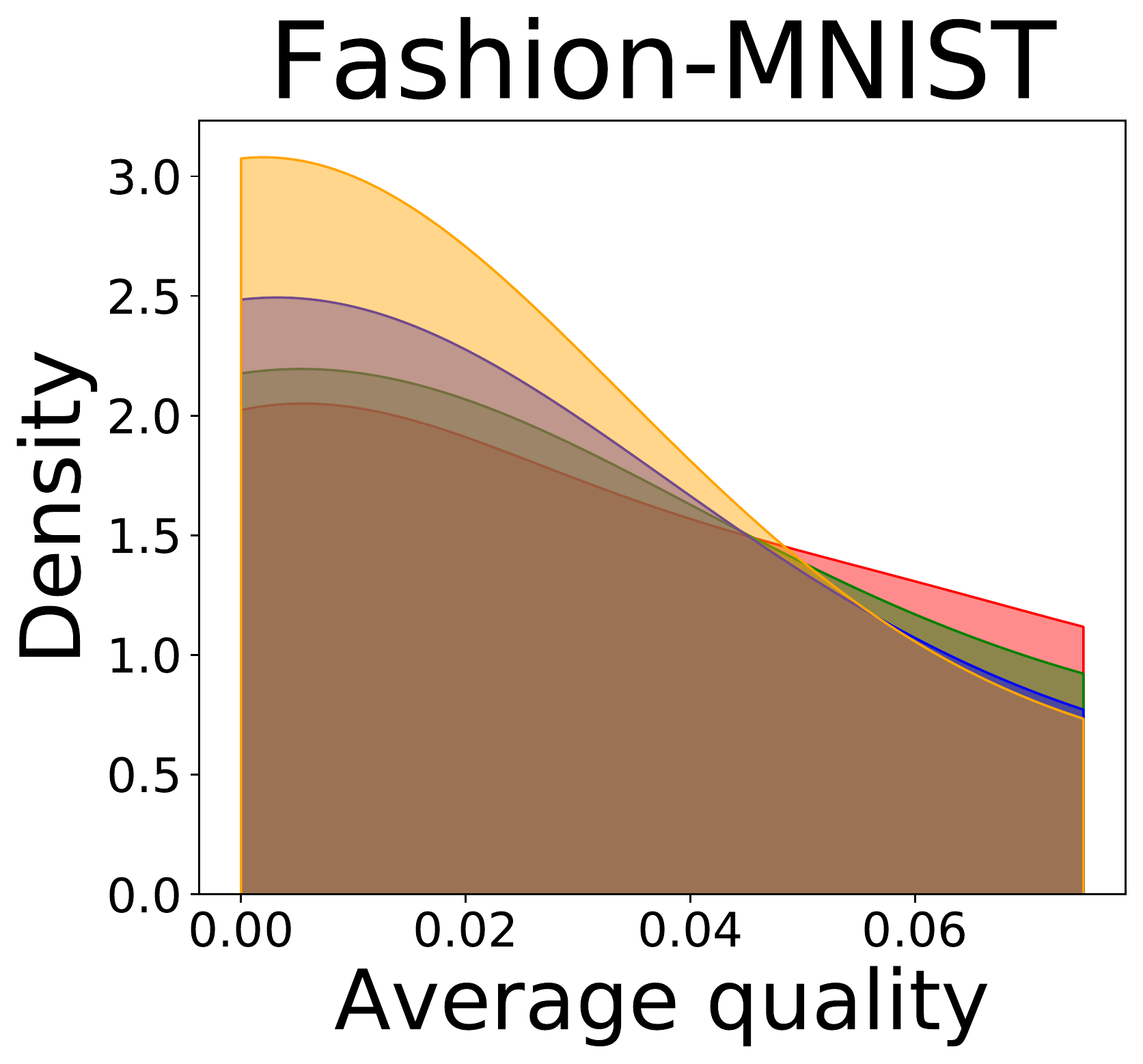}
    \includegraphics[width=0.225\textwidth]{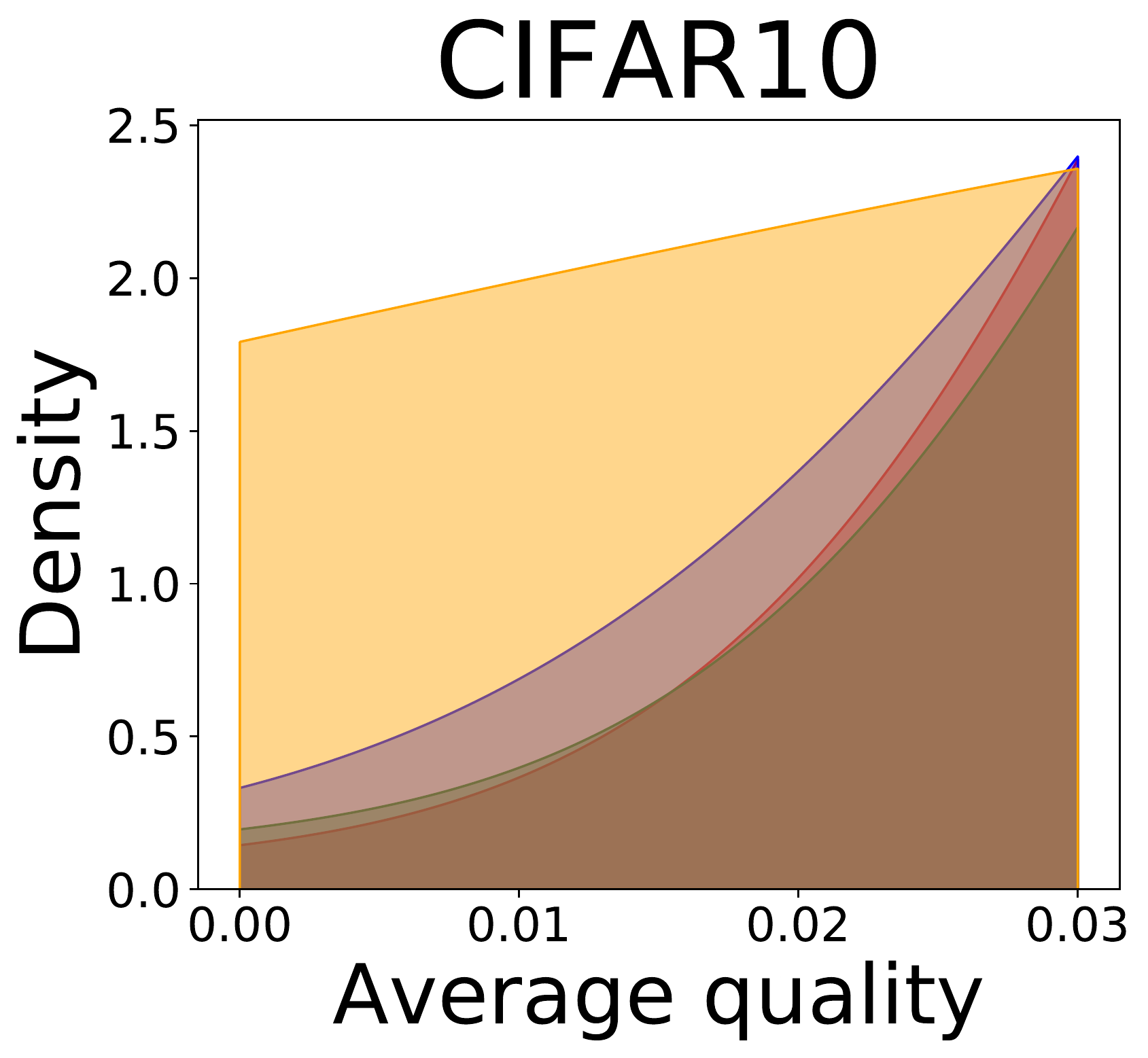}
    \caption{Probability density plots of the average quality $\frac{1}{M} \sum_{j=1} ^M q(Y, f^{(j)}(X))$ at near zero when $n_{\mathrm{b}} \in \{0,100,200,400\}$ and $\alpha=4$. Different color indicates different $n_{\mathrm{b}}$. As $n_{\mathrm{b}}$ increases, the average quality is more likely to be close to zero. That is, the probability that all ML predictors produce low-quality prediction at the same time increases, and users might not be satisfied with the ML predictions after the competing predictors purchase data.}
    \label{fig:density}
\end{center}
\end{figure*}

\subsection{Effects of data purchase on diversity}
\label{s:effects_diversity}
We investigate the effect of data purchase on the diversity. Figure~\ref{fig:diversity} illustrates the diversity as a function of the budget $n_{\mathrm{b}}$ in various competition settings. In general, the diversity monotonically decreases as $n_{\mathrm{b}}$ increases across all datasets. That is, the competing predictors get similar as more budgets are allowed, and users get essentially fewer options when $n_{\mathrm{b}}$ increases. In particular, when $\alpha=4$ and the dataset is \texttt{Adult}, the diversity is $0.371$ on average when $n_{\mathrm{b}}=0$, but it reduces to $0.270$ and $0.187$, which correspond to $27\%$ and $50\%$ reduction, when $n_{\mathrm{b}}=200$ and $n_{\mathrm{b}}=400$, respectively. 

We also compare the class-specific qualities of competing predictors. In Figure~\ref{fig:heatmap}, we illustrate heatmaps of the difference $Q(j,y)-Q_{\mathrm{avg}}(y)$ where $Q(j,y)$ is the class-specific quality defined as $ Q(j,y):=\mathbb{E}  \left[ q \left( Y, f^{(j)} (X) \right) \mid Y=y \right] $
for $j\in[M]$ and $y\in \mathcal{Y}$, and $Q_{\mathrm{avg}}(y)$ for $y \in \mathcal{Y}$ is defined as $\frac{1}{M} \sum_{j=1 } ^M Q(j,y)$. This difference measures the class-specific quality of a company, showing how specialized its ML model is. We use the \texttt{Insurance} and \texttt{Adult} datasets. When $n_{\mathrm{b}}=0$, the \texttt{Adult} heatmap shows that \textsf{predictor 1} and \textsf{predictor 5} so specialize to \textsf{class 2} prediction that they sacrifice their prediction power for \textsf{class 1} compared to other predictors. In other words, the competition encourages each ML model to be very specialized in a small subgroup. However, when $n_{\mathrm{b}}=400$, all predictors have similar levels of class-specific quality. The data purchase makes competing ML predictors similar and helps predictors not too much focus on a subgroup of the population. Similar results are shown in \citet{ginart2020competing}, but one finding that the previous works have not shown is that this specialization can be alleviated when predictors purchase user data. 

\begin{figure*}[t]
% \vskip 0.15in
\begin{center}
    \includegraphics[width=0.235\textwidth]{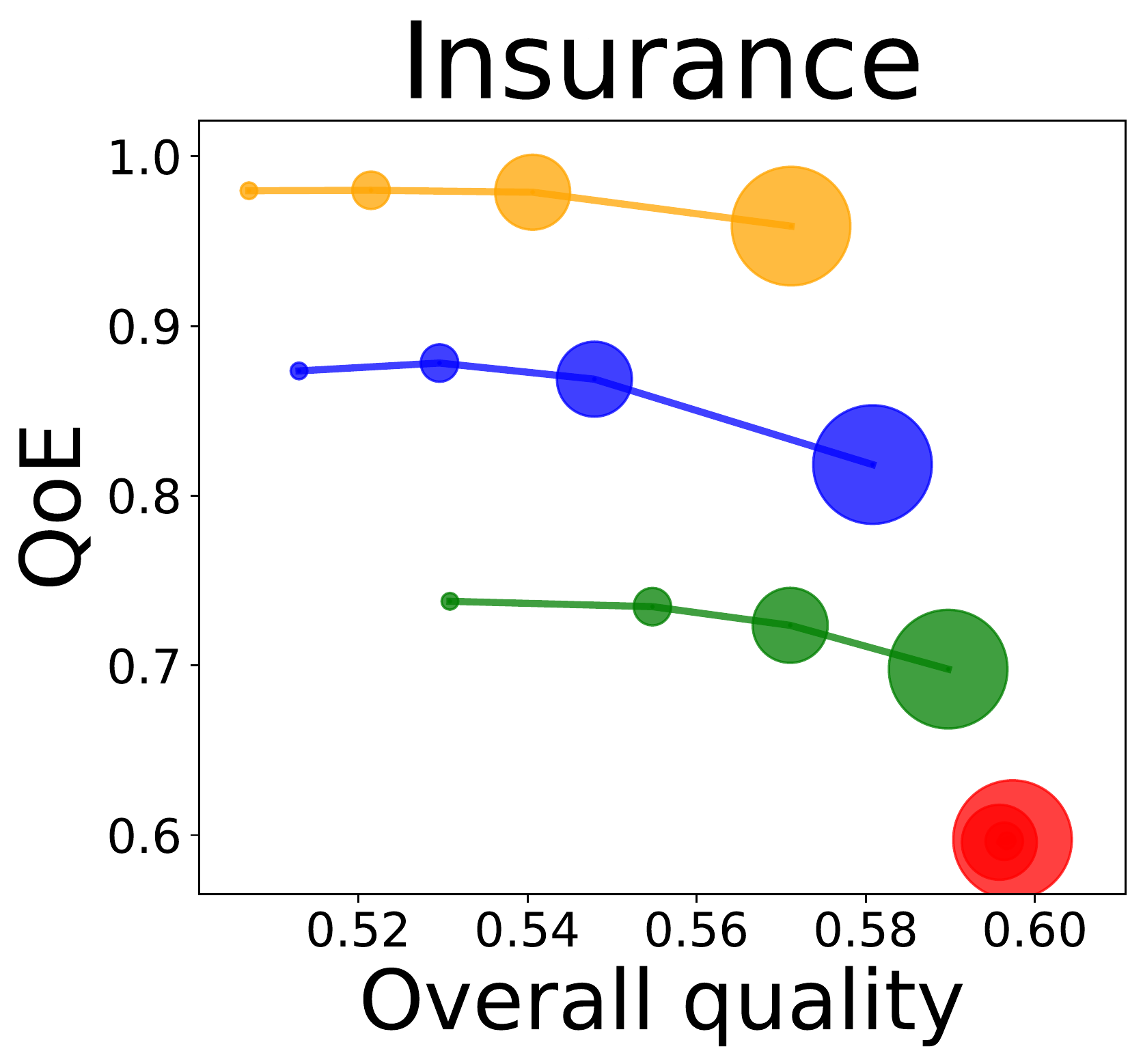}
    \includegraphics[width=0.245\textwidth]{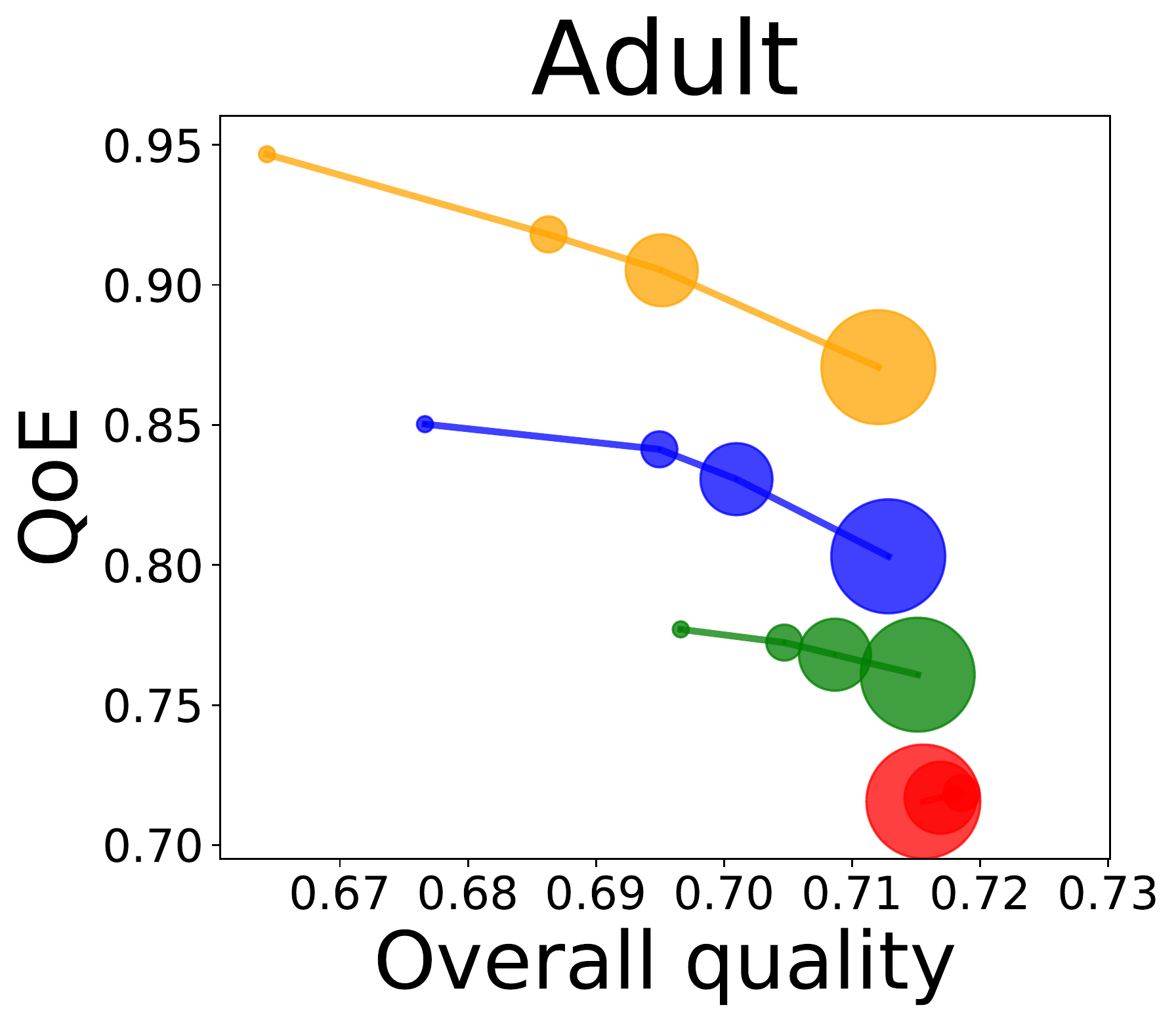}
    \includegraphics[width=0.235\textwidth]{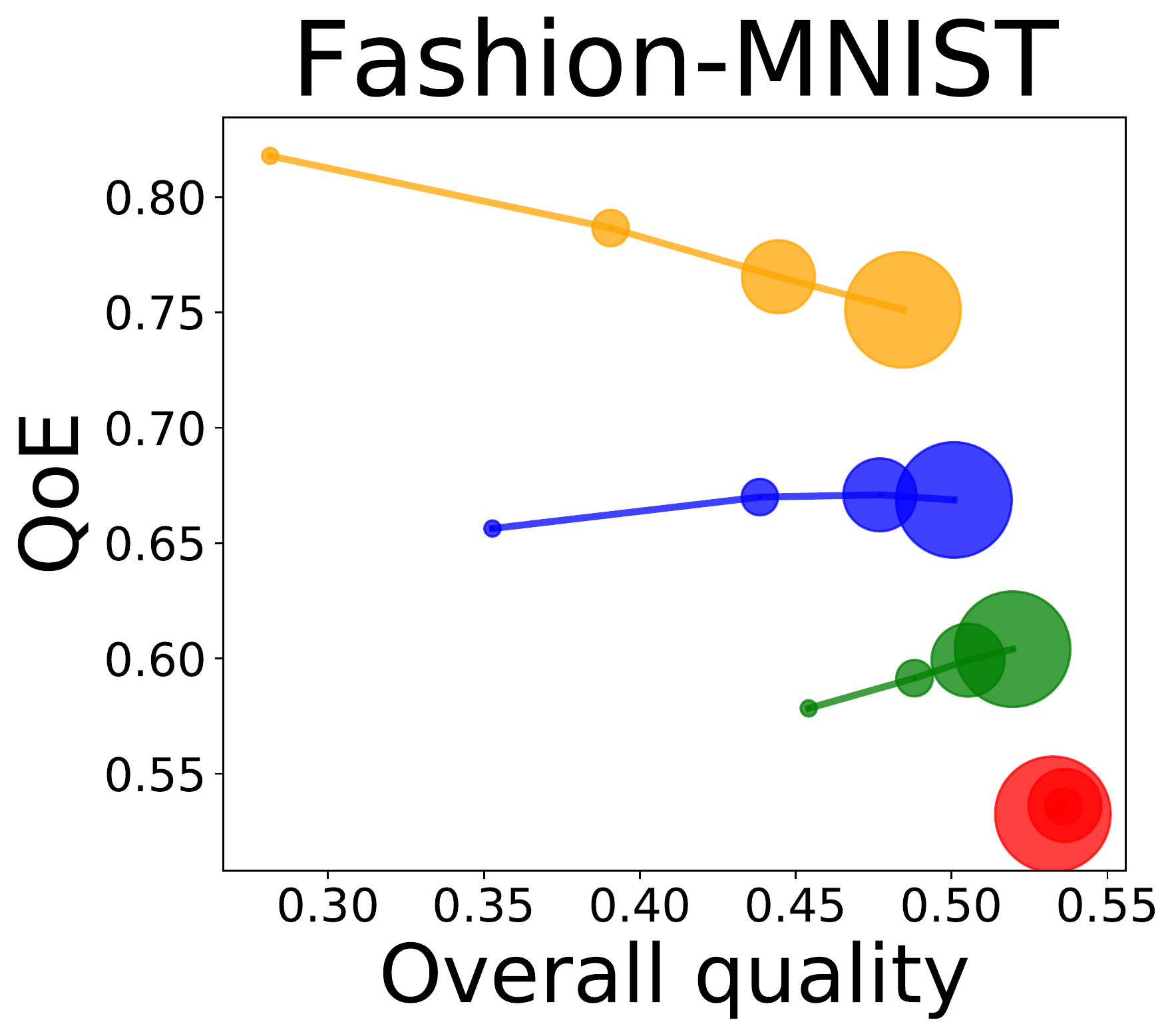}
    \includegraphics[width=0.235\textwidth]{figures/Legend_MQ_vs_QoE.pdf}
    \\
    \includegraphics[width=0.235\textwidth]{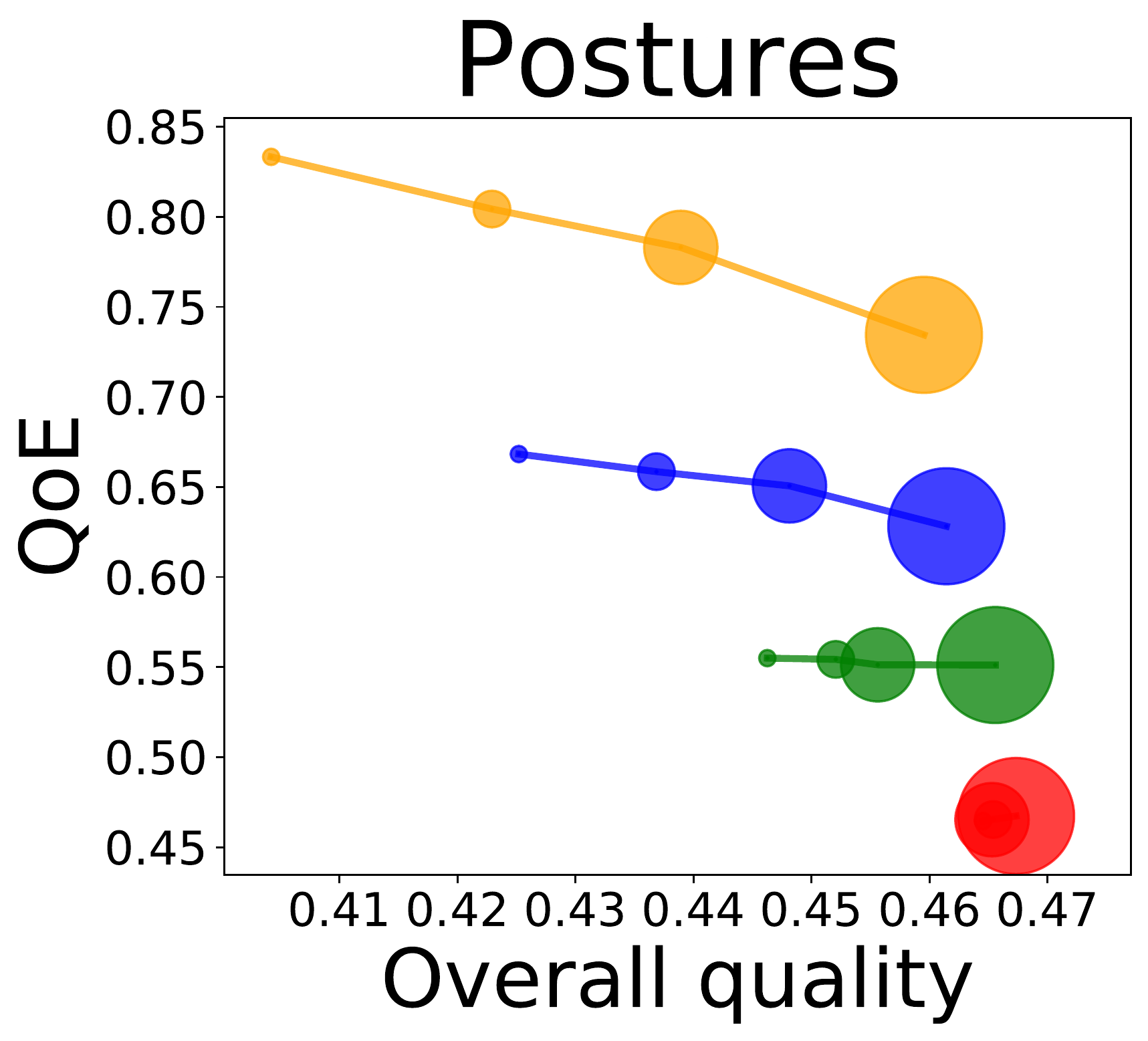}
    \includegraphics[width=0.235\textwidth]{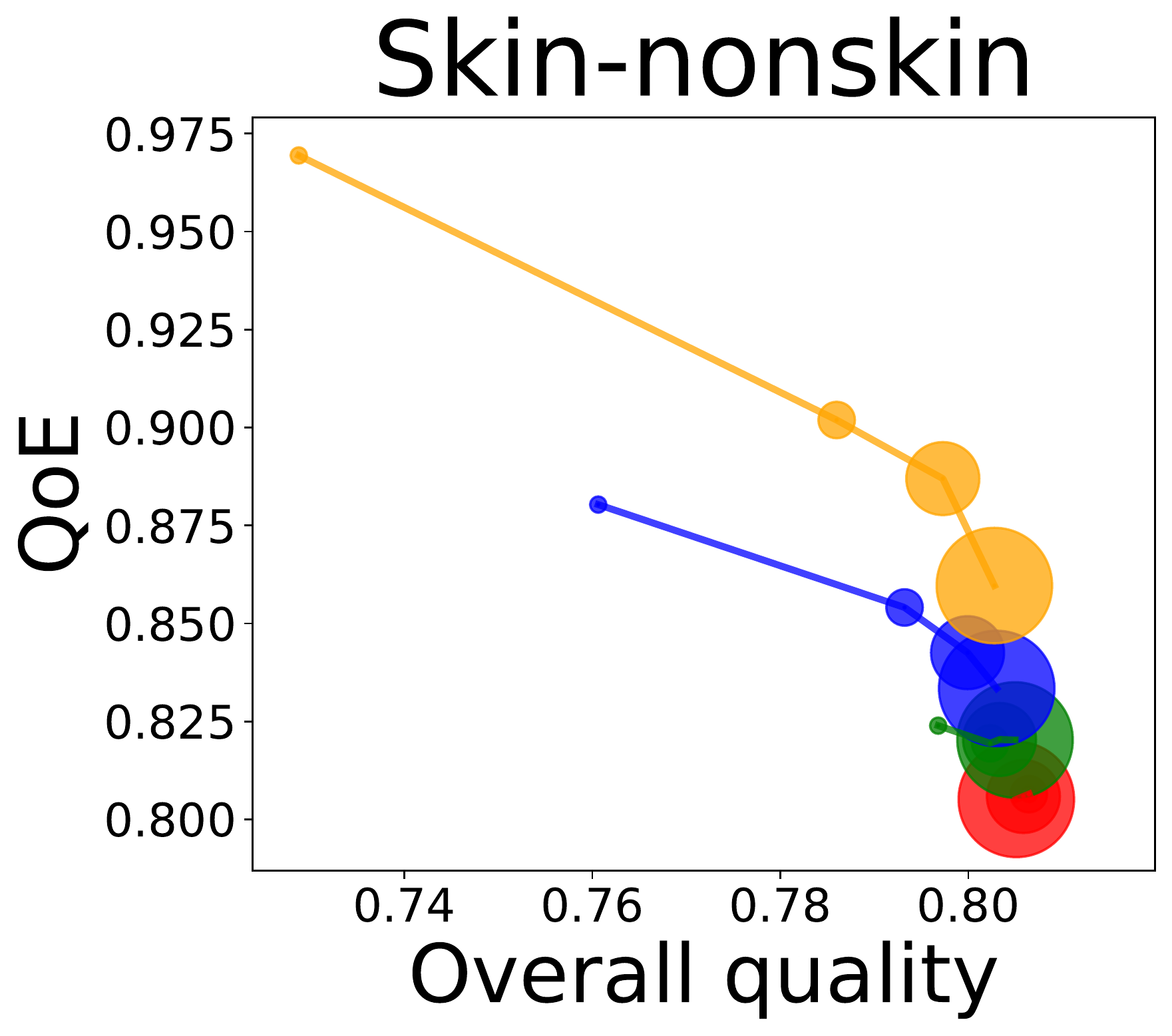}
    \includegraphics[width=0.235\textwidth]{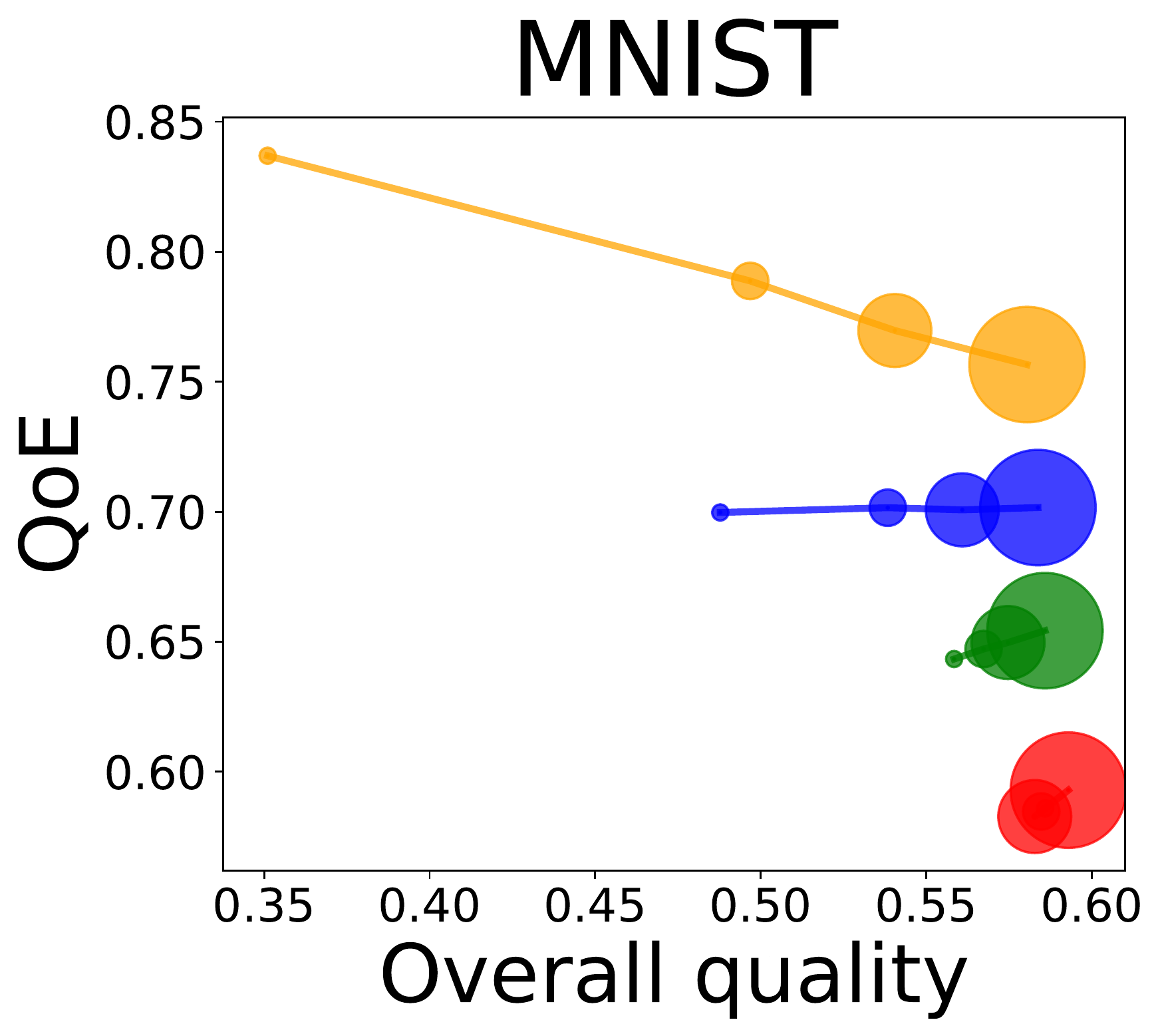}
    \includegraphics[width=0.235\textwidth]{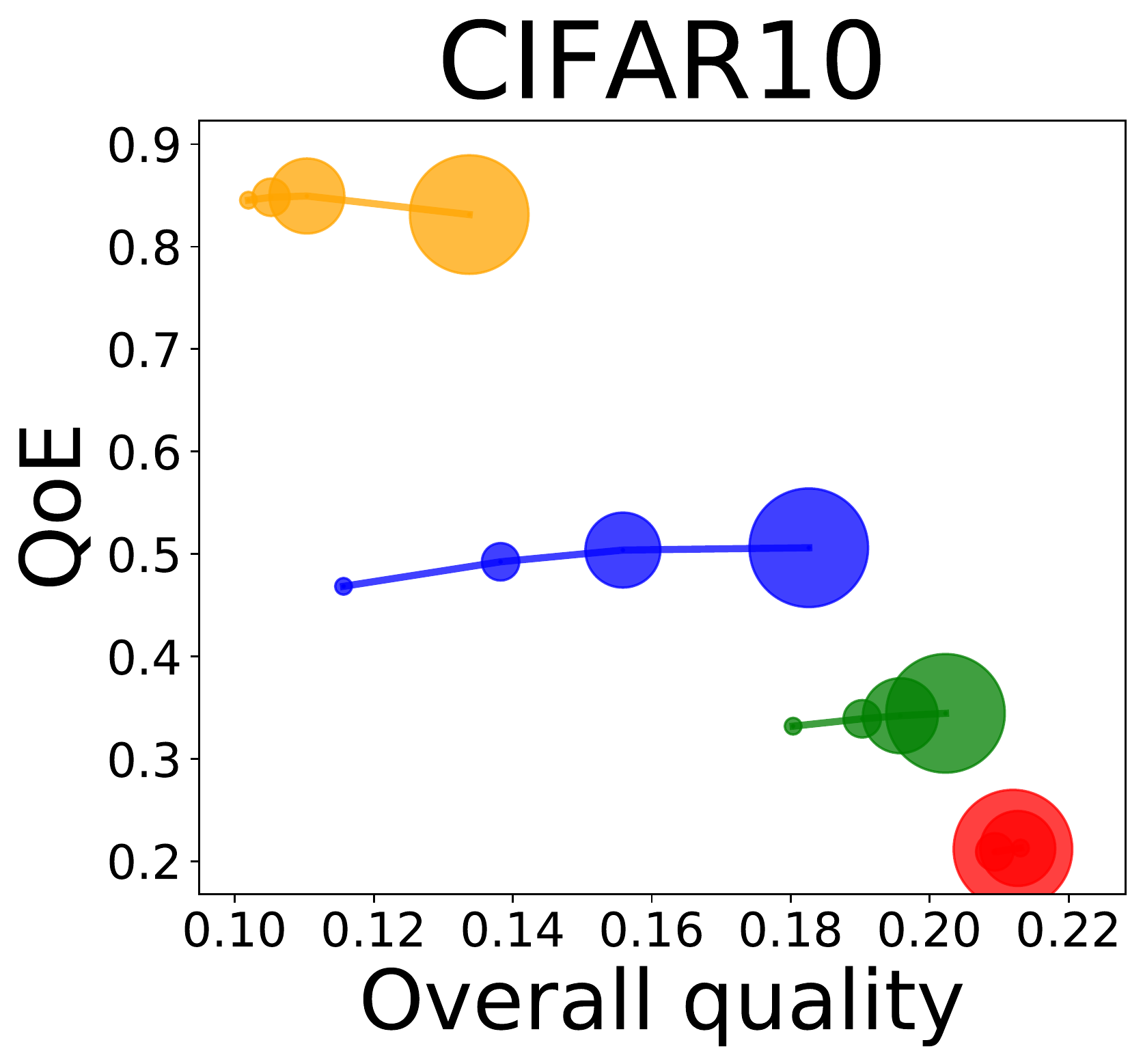}
    \caption{\textbf{Heterogeneous predictors.} Illustrations of QoE as a function of the overall quality when ML predictors have different buying strategies $\pi$. Different color indicates different $\alpha$, and the size of point indicates budgets $n_{\mathrm{b}}$. The larger budget is, the larger the point size is. In several settings, the overall quality increases as more budgets are used, but QoE decreases.} 
    \label{fig:OverallQuality_vs_QoE_hetero}
\end{center}
% \vskip -0.15in    
\end{figure*}

\paragraph{Implications of decreases in diversity}
We now examine the connection between the diversity and the quality of the predictor selected by a user. We demonstrate that the probability of finding low-quality predictions can increase due to the reduction in diversity, explaining how diversity affects QoE. Figure~\ref{fig:density} illustrates the probability density functions of the average quality near zero. It clearly shows that the probability that the average quality is near zero increases as more budgets $n_{\mathrm{b}}$ are used: the areas for $n_{\mathrm{b}}=400$ (colored in yellow) are clearly larger than those for $n_{\mathrm{b}}=0$ (colored in red). That is, as predictions become similar, it is more likely that all ML predictions have a low quality simultaneously. Hence, the probability that users are not satisfied with the predictions increases, and leading to decreases in QoE even when the overall quality increases.

\subsection{Robustness to modeling assumptions}
\label{s:robust_analysis}
We further demonstrate that our findings are robust against different modeling assumptions. We consider the same situation used in the previous sections but ML predictors now can have different buying strategies $\pi$. To be more specific, we consider the three different types of buying strategies by varying the threshold of the uncertainty-based AL method. For $C_{\mathrm{Ent}} \in \{0, 0.3, 0.6\}$, we consider the following buying strategy models $\pi_{C_{\mathrm{Ent}}} (X_t) = \mathbbm{1}(\{ \mathrm{Ent}(p^{(i)} (X_t)) \geq C_{\mathrm{Ent}} \log (|\mathcal{Y}|) \})$, where $\mathrm{Ent}$ is the Shannon's entropy function and $p^{(i)} (X_t)$ is the probability estimate given $X_t$. We assume there are $6$ predictors for each buying strategy $\pi_{0}$, $\pi_{0.3}$, and $\pi_{0.6}$. This modeling assumption considers the situation where there are three groups with different levels of sensitivity to data purchases. For instance, in our setting, $\pi_{0.6}$ is the most conservative data buyer and is less likely to buy new data.

\begin{figure*}[t]
\begin{center}
    \includegraphics[width=0.235\textwidth]{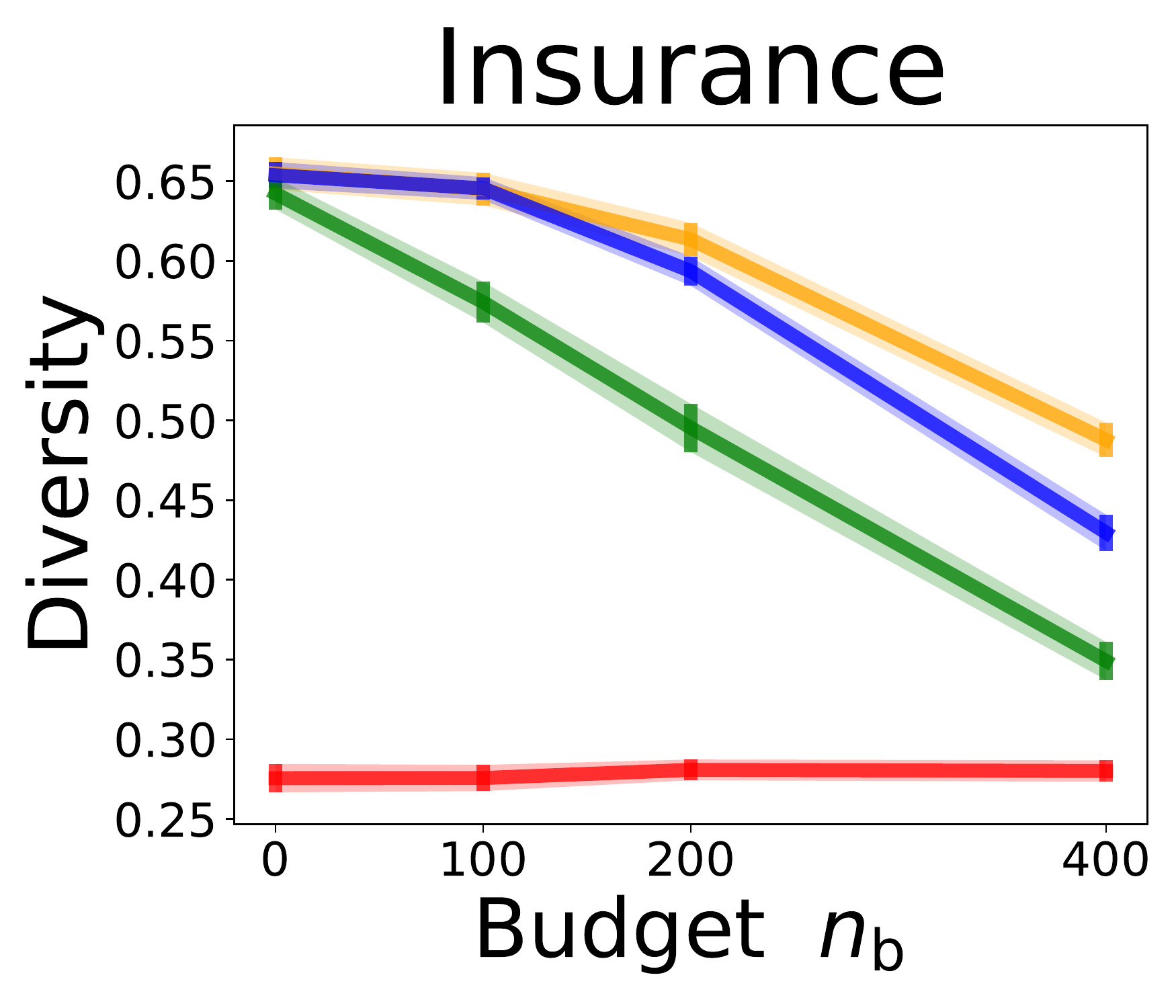}
    \includegraphics[width=0.235\textwidth]{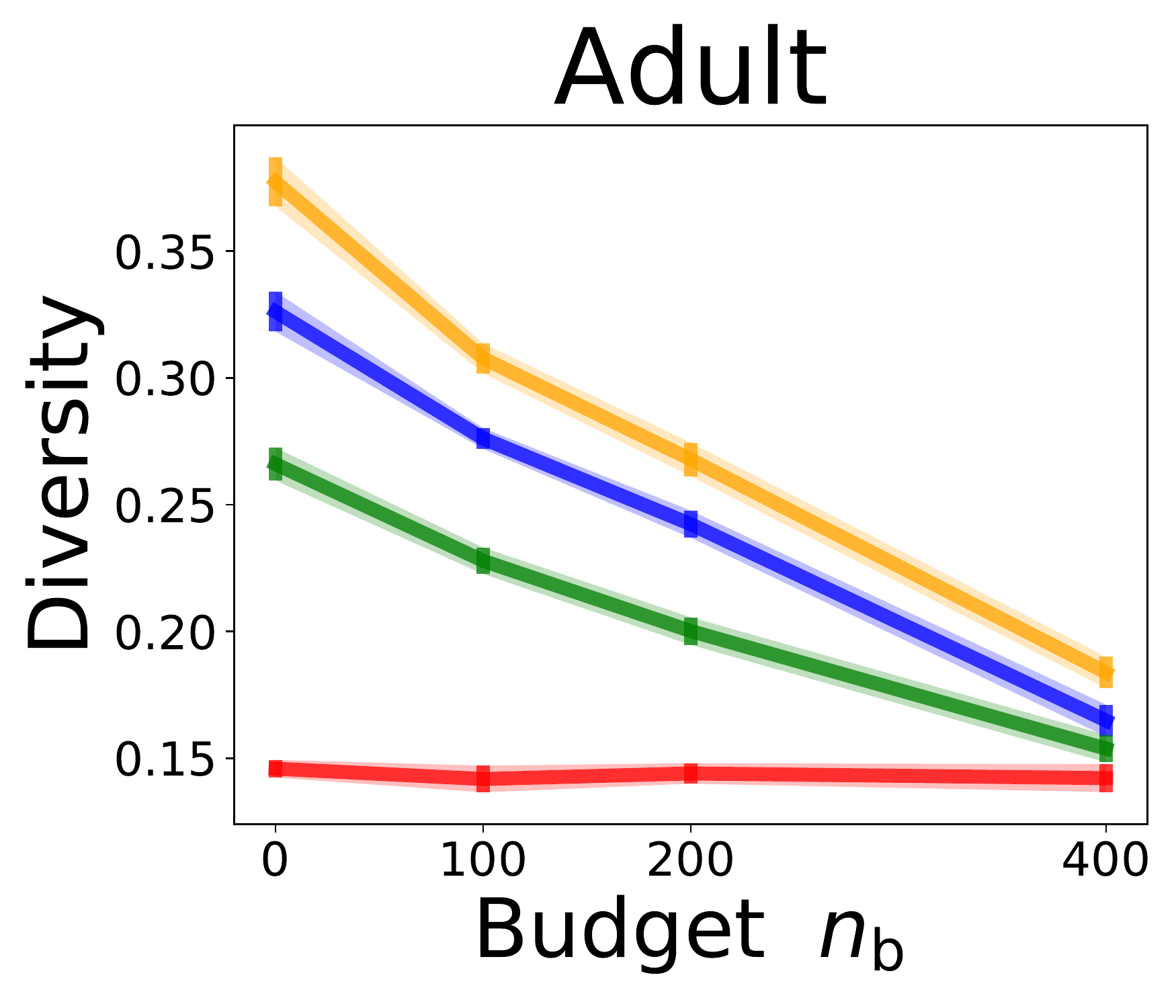}
    \includegraphics[width=0.235\textwidth]{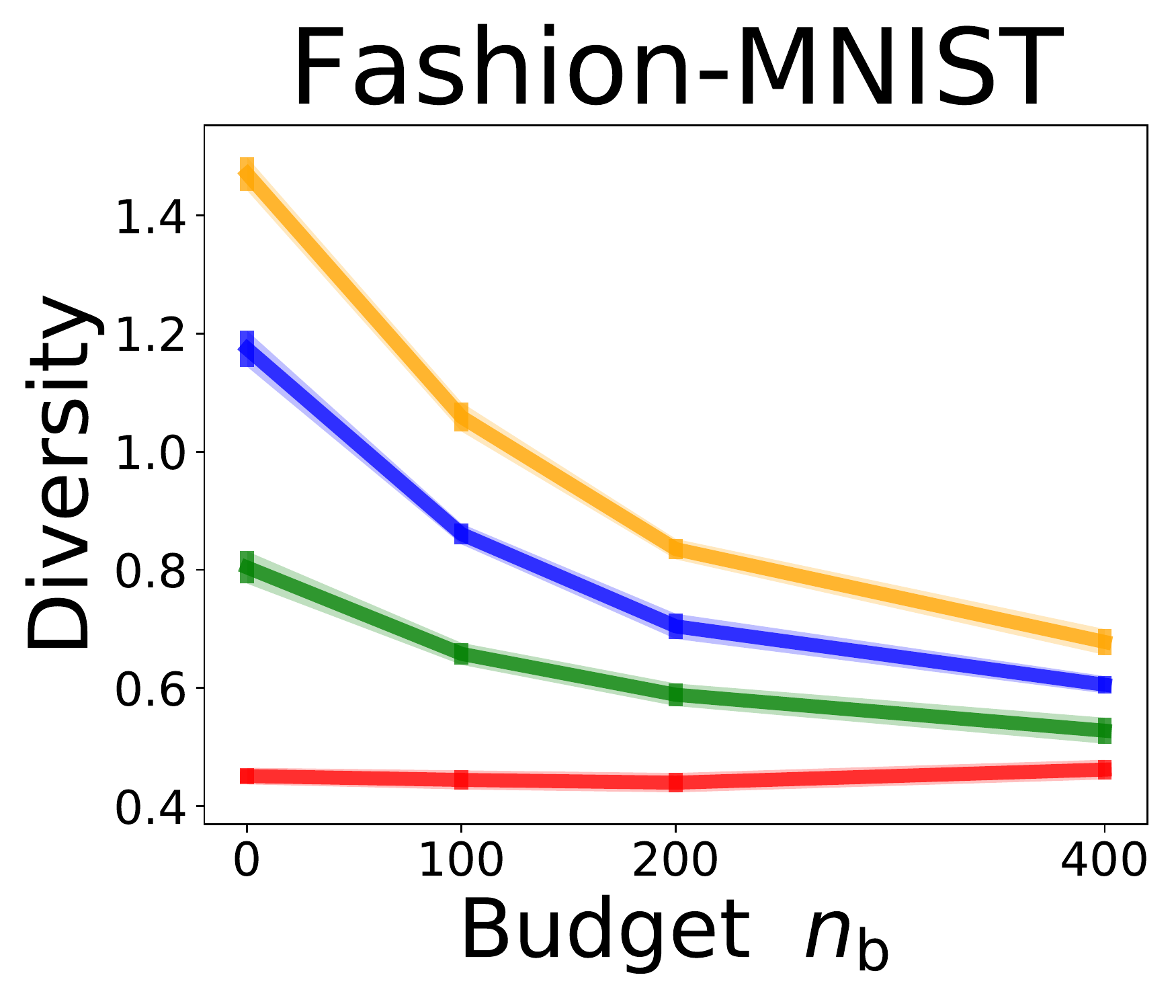}
    \includegraphics[width=0.235\textwidth]{figures/Legend_diversity.pdf}
    \\
    \includegraphics[width=0.235\textwidth]{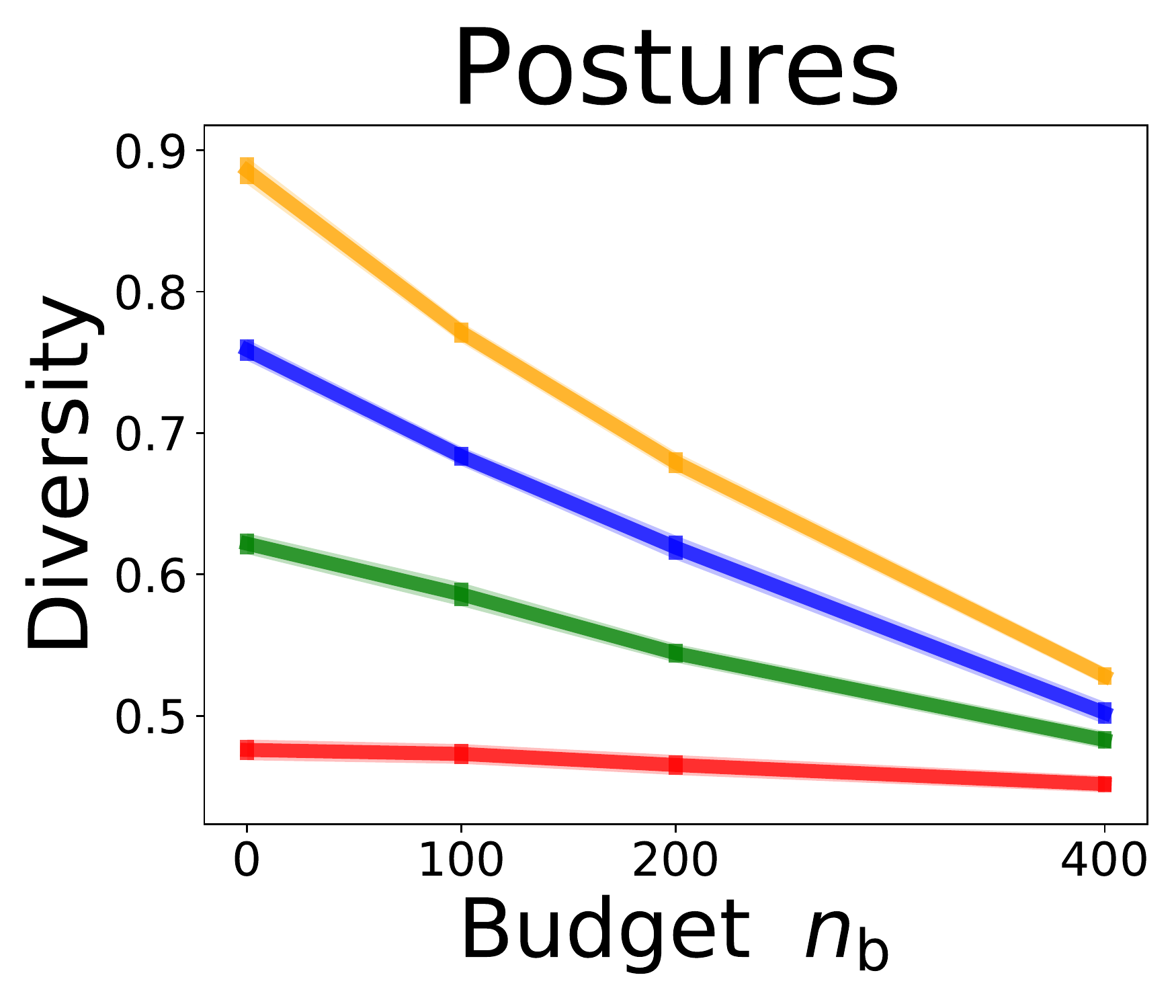}
    \includegraphics[width=0.235\textwidth]{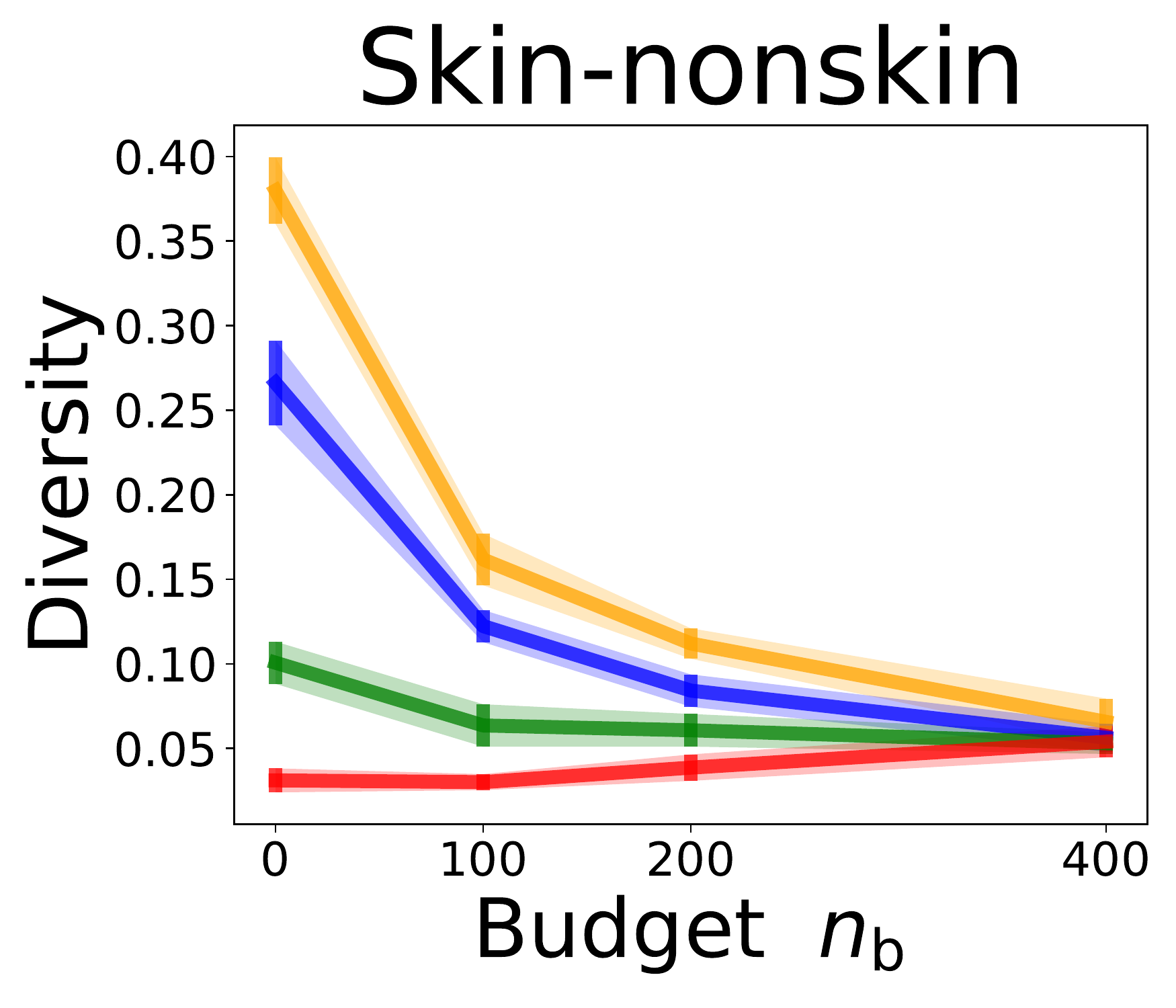}
    \includegraphics[width=0.235\textwidth]{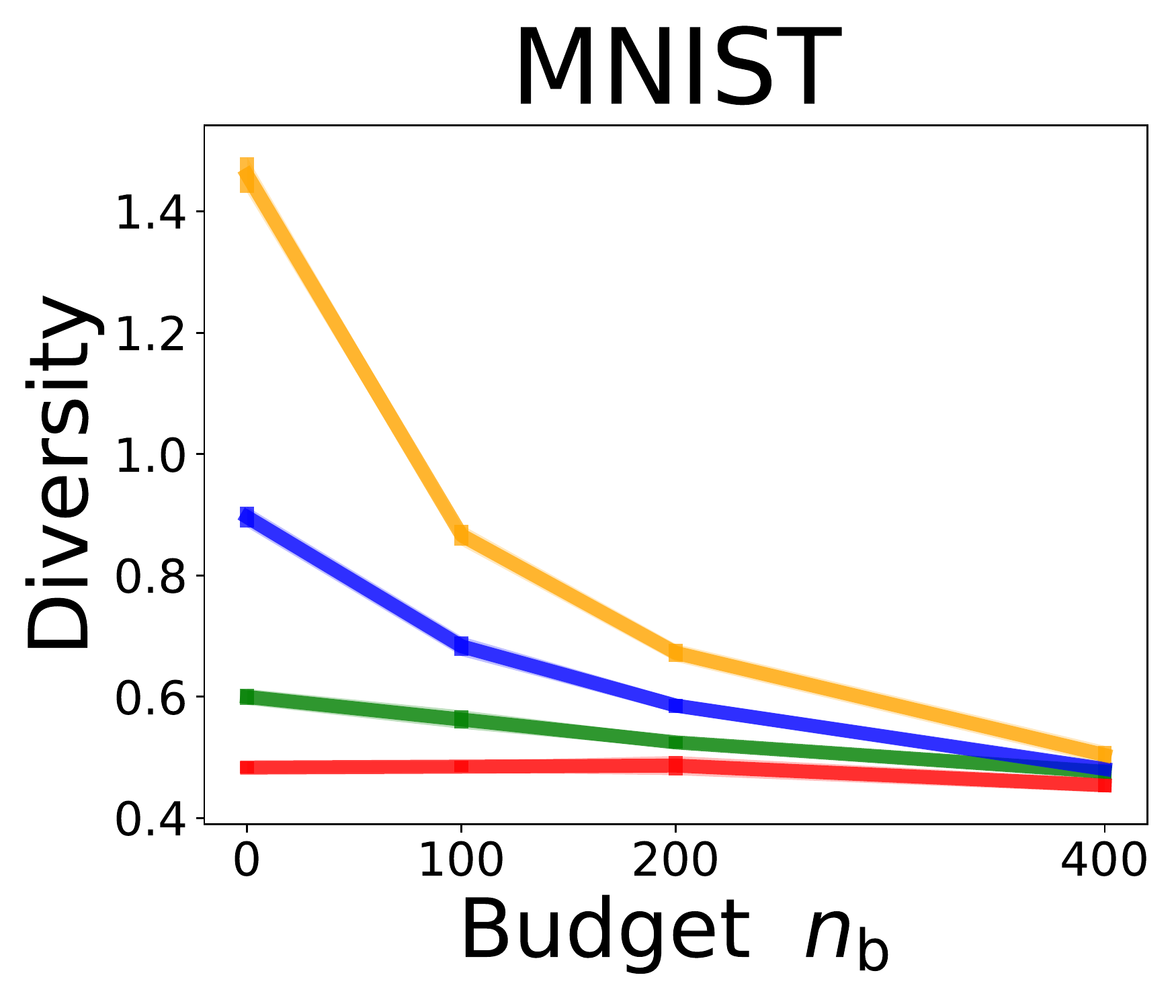}
    \includegraphics[width=0.235\textwidth]{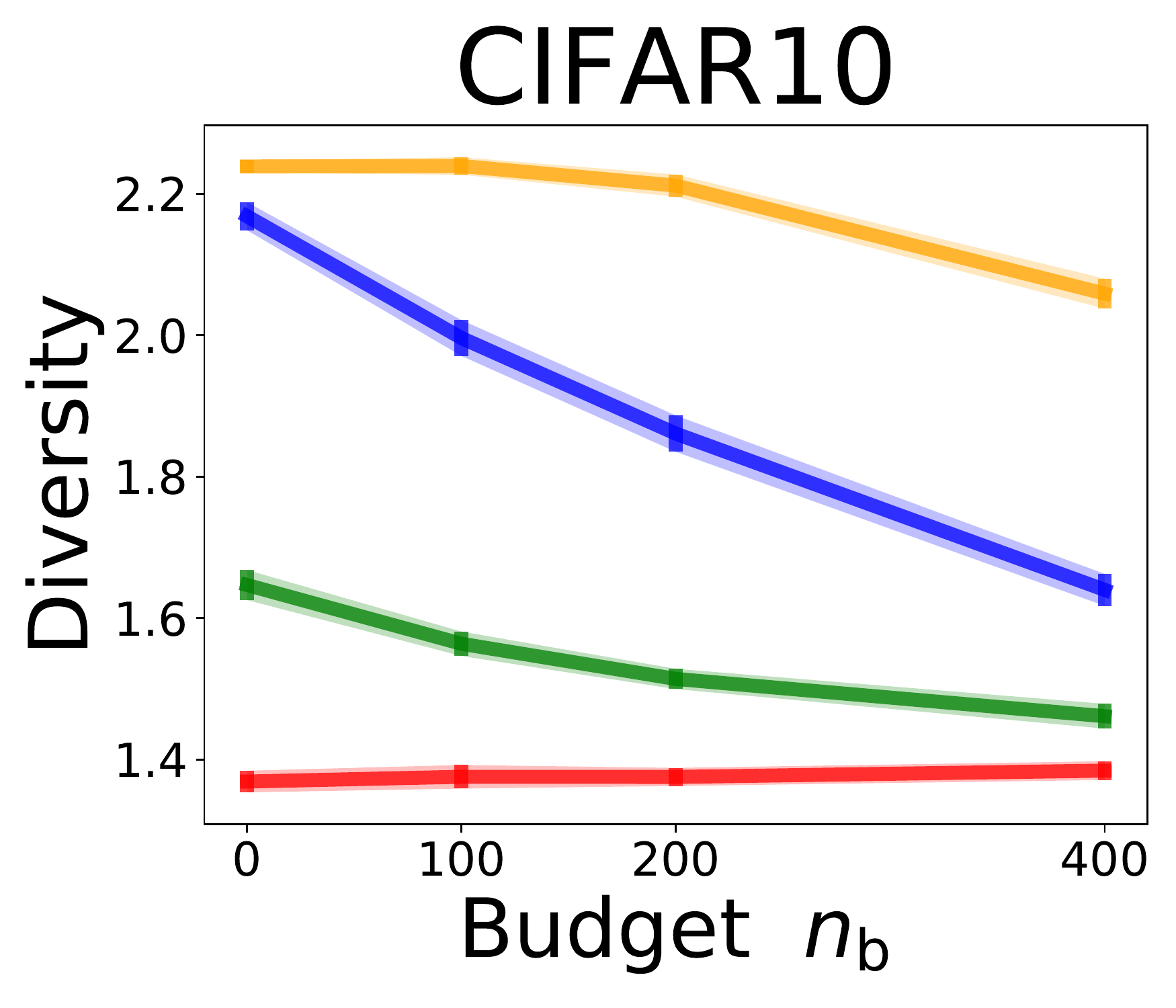}
    \caption{\textbf{Heterogeneous predictors.} Illustrations of the diversity as a function of the budget $n_{\mathrm{b}}$ when ML predictors have different buying strategies $\pi$. Different color indicates different $\alpha$. We denote a 99\% confidence band based on 30 independent runs. As the budget increases, the diversity decreases.}
    \label{fig:DV_hetero} 
\end{center}
\end{figure*}

Figure~\ref{fig:OverallQuality_vs_QoE_hetero} shows the relationship between the QoE and the overall quality when there are heterogeneous ML predictors with different buying strategies. Similar to Figure~\ref{fig:OverallQuality_vs_QoE}, the overall quality increases but QoE generally decreases as the budget increases across all datasets. As for the diversity, as anticipated, Figure~\ref{fig:DV_hetero} shows that the diversity decreases as the budget increases. It demonstrates that our findings are robustly observed against different environment settings. We also conduct more experiments (i) when budgets $n_{\mathrm{b}}$ are different across companies or (ii) when there are different number of predictors $M$. All these additional results are provided in Appendix~\ref{app:add_numerical}.

\section{Theoretical analysis on competition}
\label{s:theory}
In this section, we establish a simple representation for QoE when a quality function is the correctness function. Based on this finding, we theoretically analyze how the diversity-like quantity can affect QoE. Proofs are provided in Appendix~\ref{app:proof}. 

\begin{lemma}[A simple representation for QoE]
Suppose there is a set of $M \geq 2$ predictors $\{ f^{(j)} \}_{j=1} ^M$ and a quality function is the correctness function, \textit{i.e.}, $q(Y_1,Y_2) = \mathbbm{1}(\{Y_1=Y_2\})$ for all $Y_1, Y_2 \in \mathcal{Y}$. Let $Z := \frac{1}{M} \sum_{j=1} ^M q \left( Y, f^{(j)} (X) \right)$ be the average quality for a user $(X,Y)$. For any $\alpha \geq 0$, we have
\begin{align}
    \mathbb{E} [Z] =\text{(Overall quality)}  \leq \mathrm{(QoE)} = \mathbb{E} \left[ \frac{ Z e^\alpha }{ Z e^\alpha + (1-Z) } \right],
    \label{eq:new_QoE}
\end{align}
where the inequality holds when $\alpha=0$ and the expectation is considered over $P_{X,Y}$. 
\label{lem:prediction_quality_analysis_correctness}
\end{lemma}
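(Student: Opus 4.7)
The plan is to compute the QoE by conditioning on $(X,Y)$, exploit the fact that under the correctness quality function each $q(Y,f^{(j)}(X))$ is a Bernoulli indicator, and then compare to $\mathbb{E}[Z]$ via a pointwise monotonicity argument.

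First I would use the tower property to write
\begin{equation*}
\text{QoE} \;=\; \mathbb{E}\!\left[q\!\left(Y, f^{(W)}(X)\right)\right] \;=\; \mathbb{E}\!\left[\,\mathbb{E}\!\left[q\!\left(Y, f^{(W)}(X)\right)\,\middle|\, X, Y\right]\right].
\end{equation*}
Conditional on $(X,Y)$, every $q(Y,f^{(j)}(X))$ takes values in $\{0,1\}$, so I can let $N := \sum_{j=1}^{M} q(Y,f^{(j)}(X))$ be the number of correct predictors, giving $Z = N/M$. In the softmax of Equation~\eqref{eq:user_selection}, each correct predictor has weight $e^{\alpha}$ and each incorrect predictor has weight $1$, so the denominator is $Ne^{\alpha} + (M-N)$. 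Summing $P(W=i\mid X,Y)\cdot q(Y,f^{(i)}(X))$ over $i$ collapses the sum to the $N$ correct indices, yielding
\begin{equation*}
\mathbb{E}\!\left[q\!\left(Y, f^{(W)}(X)\right)\,\middle|\, X, Y\right] \;=\; \frac{N e^{\alpha}}{N e^{\alpha} + (M-N)} \;=\; \frac{Z e^{\alpha}}{Z e^{\alpha} + (1-Z)},
\end{equation*}
after dividing numerator and denominator by $M$. Taking the outer expectation establishes the equality part of \eqref{eq:new_QoE}.

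For the inequality $\mathbb{E}[Z] \le \text{QoE}$, I would prove the pointwise bound
\begin{equation*}
z \;\le\; \frac{z e^{\alpha}}{z e^{\alpha} + (1-z)} \qquad \text{for all } z \in [0,1],\ \alpha \ge 0,
\end{equation*}
and then take expectations. This pointwise bound is immediate by rewriting the right-hand side as $1/(1 + \tfrac{1-z}{z e^{\alpha}})$ and noting $e^{\alpha}\ge 1$ implies $\tfrac{1-z}{ze^{\alpha}} \le \tfrac{1-z}{z}$, so the reciprocal is at least $z$; alternatively, the difference $\tfrac{ze^{\alpha}}{ze^{\alpha}+(1-z)} - z = \tfrac{z(1-z)(e^{\alpha}-1)}{ze^{\alpha}+(1-z)} \ge 0$. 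The boundary case $\alpha=0$ makes both sides equal $z$ pointwise, which gives equality of QoE and $\mathbb{E}[Z]$, matching the claim.

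There is no real obstacle here: once the conditioning step reduces the random softmax index $W$ to a deterministic count $N$ of correct predictors, everything is algebra. The only point worth being careful about is handling the corner case $Z=0$, where both sides of the pointwise inequality equal $0$ and the expression $\tfrac{Ze^{\alpha}}{Ze^{\alpha}+(1-Z)}$ extends continuously; this requires no separate treatment since the formula is already well-defined whenever $Z\in[0,1]$ and $\alpha\ge 0$.
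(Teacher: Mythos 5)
Your proof is correct and follows essentially the same route as the paper's: condition on $(X,Y)$, use the $\{0,1\}$-valued correctness quality to collapse the softmax over the count of correct predictors $N$ (the paper's $N_{\mathrm{cor}}$), and obtain $\frac{Ze^{\alpha}}{Ze^{\alpha}+(1-Z)}$, with the inequality following from monotonicity in $\alpha$. Your explicit pointwise computation $\frac{ze^{\alpha}}{ze^{\alpha}+(1-z)} - z = \frac{z(1-z)(e^{\alpha}-1)}{ze^{\alpha}+(1-z)} \ge 0$ is just a slightly more detailed version of the paper's remark that $k(z,\alpha)$ is increasing, so there is nothing further to add.
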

Lemma \ref{lem:prediction_quality_analysis_correctness} presents a relationship between QoE and the overall quality---QoE is always greater than the overall quality if $\alpha>0$. In addition, it shows that QoE can be simplified as a function of the average quality $Z$ over competitors when a quality function $q$ is the correctness function. When $q$ is not the correctness function, QoE does not have such a comprehensible representation. We present upper and lower bounds for QoE in Appendix~\ref{app:add_the_results}. Using the relationship shown in Lemma~\ref{lem:prediction_quality_analysis_correctness}, we provide a sufficient condition for the overall quality to be greater but the QoE to be less in the following theorem.

\begin{theorem}[Comparison of two competition dynamics]
Suppose there are two sets of $M \geq 2$ predictors, $\mathcal{F}_{1} := \{ f^{(j)}\}_{j=1} ^M$ and $\mathcal{F}_{2} := \{ g^{(j)}\}_{j=1} ^M$. Without loss of generality, the overall quality for $\mathcal{F}_{2}$ is larger than that of $\mathcal{F}_{1}$. For the correctness function $q$, we define $Z_{1} := \frac{1}{M} \sum_{j=1} ^M q(Y, f^{(j)}(X))$ and $Z_{2} := \frac{1}{M} \sum_{j=1} ^M q(Y, g^{(j)}(X))$ as Lemma \ref{lem:prediction_quality_analysis_correctness}. 
If $\alpha \geq C_{\alpha}$ and $\mathrm{Var}[Z_2] \geq C_1 \mathrm{Var}[Z_1]$ for some explicit constants $C_{\alpha}$ and $C_1 \leq 1$, then QoE for $\mathcal{F}_{2}$ is smaller than that for $\mathcal{F}_{1}$. 
\label{thm:purchase-non-purchase}
\end{theorem}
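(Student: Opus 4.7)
The plan is to invoke Lemma~\ref{lem:prediction_quality_analysis_correctness} to rewrite $\mathrm{QoE}_i = \mathbb{E}[h(Z_i)]$, where
\begin{equation*}
  h(z) \;:=\; \frac{z e^{\alpha}}{z e^{\alpha} + (1-z)}.
\end{equation*}
A direct computation gives $h'(z) = e^{\alpha}/[(e^{\alpha}-1)z + 1]^2 > 0$ and $h''(z) = -2 e^{\alpha}(e^{\alpha}-1)/[(e^{\alpha}-1)z + 1]^3 < 0$ for every $\alpha > 0$, so $h$ is strictly concave on $[0,1]$ with $h(0)=0$ and $h(1)=1$. By Jensen's inequality, $\mathbb{E}[h(Z_i)] \le h(\mu_i)$ with $\mu_i := \mathbb{E}[Z_i]$, so variance of $Z_i$ always pulls $\mathrm{QoE}_i$ strictly below $h(\mu_i)$. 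The theorem is precisely a quantitative version of this phenomenon that can overpower the positive mean gap $\mu_2 - \mu_1$.

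The next step is a second-order Taylor expansion of $h$ with integral remainder around each $\mu_i$. Taking expectations and using the sign of $h''$ together with uniform bounds $L_i \le |h''(z)| \le U_i$ on the support of $Z_i$ yields the two-sided estimate
\begin{equation*}
  h(\mu_i) - \tfrac{U_i}{2} \mathrm{Var}(Z_i) \;\le\; \mathbb{E}[h(Z_i)] \;\le\; h(\mu_i) - \tfrac{L_i}{2} \mathrm{Var}(Z_i).
\end{equation*}
Subtracting the $i{=}1$ upper bound from the $i{=}2$ lower bound, and then using concavity to dominate the mean term via $h(\mu_2) - h(\mu_1) \le h'(\mu_1)(\mu_2 - \mu_1)$, gives
\begin{equation*}
  \mathrm{QoE}_2 - \mathrm{QoE}_1 \;\le\; h'(\mu_1)(\mu_2 - \mu_1) + \tfrac{U_1}{2}\mathrm{Var}(Z_1) - \tfrac{L_2}{2}\mathrm{Var}(Z_2).
\end{equation*}
Forcing the right-hand side to be $\le 0$ produces the explicit sufficient condition
\begin{equation*}
  \mathrm{Var}(Z_2) \;\ge\; \frac{U_1}{L_2}\,\mathrm{Var}(Z_1) + \frac{2 h'(\mu_1)}{L_2}(\mu_2 - \mu_1),
\end{equation*}
which I would repackage as $\mathrm{Var}(Z_2) \ge C_1 \mathrm{Var}(Z_1)$ for a constant $C_1 = C_1(\alpha, \mu_1, \mu_2, \mathrm{Var}(Z_1))$.

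Finally, I would choose $C_\alpha$ so that $\alpha \ge C_\alpha$ forces $C_1 \le 1$. Using the closed forms above, for $\mu_1$ bounded away from $0$ one has $h'(\mu_1) = \Theta(e^{-\alpha})$ while $L_2, U_1$ have comparable decay on the non-degenerate part of the support, so the ratio $U_1/L_2$ can be driven below $1$ and the mean-gap correction $2 h'(\mu_1)(\mu_2-\mu_1)/L_2$ can be driven below $\mathrm{Var}(Z_2) - \mathrm{Var}(Z_1)$ for $\alpha$ sufficiently large. Solving for $\alpha$ in these two inequalities produces the explicit $C_\alpha$.

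\textbf{Main obstacle.} The hard part is that $|h''(z)|$ ranges from order $e^{-\alpha}$ near $z=1$ to order $e^{2\alpha}$ near $z=0$, so naive uniform bounds $L_i, U_i$ over the full interval $[0,1]$ are incompatible and lead to a vacuous $C_1$ that blows up in $\alpha$. The proof therefore has to localize: either restrict to the effective support of $Z_i$ (which lives on the grid $\{0,1/M,\dots,1\}$ and, under the hypotheses, is concentrated away from the endpoints), or replace $L_i \mathrm{Var}(Z_i)$ by the sharper weighted quantity $\mathbb{E}\bigl[|h''(Z_i)|(Z_i-\mu_i)^2\bigr]$. Getting genuinely explicit, interpretable constants $C_\alpha, C_1$ (rather than ones that degrade as the support approaches $0$) is the main technical burden.
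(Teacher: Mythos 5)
Your plan breaks down at its final step, and the breakdown is exactly the "main obstacle" you flag but do not resolve. The two-sided Taylor bound $h(\mu_i)-\tfrac{U_i}{2}\mathrm{Var}(Z_i)\le \mathbb{E}[h(Z_i)]\le h(\mu_i)-\tfrac{L_i}{2}\mathrm{Var}(Z_i)$ requires $L_i\le|h''|\le U_i$ on the whole support of $Z_i$, and that support is the grid $\{0,1/M,\dots,1\}$: the endpoint $Z=0$ (all predictors simultaneously wrong) occurs with positive probability in any nontrivial instance, and it is precisely the event the paper emphasizes (the density mass of $Z$ near zero in Figure~\ref{fig:density}). Since $|h''(0)|=2e^{\alpha}(e^{\alpha}-1)\asymp e^{2\alpha}$ while $|h''(1)|\asymp e^{-\alpha}$, any admissible choice forces $U_1/L_2\gtrsim e^{3\alpha}$, so your claim that "$U_1/L_2$ can be driven below $1$ for $\alpha$ sufficiently large" is false, and the sufficient condition you derive becomes vacuous as $\alpha$ grows instead of yielding $C_1\le 1$. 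The proposed rescue---restricting to an "effective support concentrated away from the endpoints"---is not licensed by any hypothesis of the theorem, and it contradicts the regime the theorem is meant to capture: the mechanism by which $\mathrm{Var}[Z_2]$ becomes large after data purchase is precisely that $Z_2$ piles up at $0$ and $1$. (A smaller issue: folding the mean-gap term $2h'(\mu_1)(\mu_2-\mu_1)/L_2$ into $C_1$ makes $C_1$ depend on $\mathrm{Var}(Z_1)$ itself, which is weaker than the advertised statement; the paper instead absorbs the mean gap into the threshold $C_{\alpha}$ on $\alpha$, keeping $C_1$ a function of $\alpha$ and $M$ only.)

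The paper's proof avoids second derivatives entirely by using the exact identity $k(z,\alpha)-z=\frac{z(1-z)(e^{\alpha}-1)}{ze^{\alpha}+(1-z)}$ and sandwiching the factor $\frac{e^{\alpha}-1}{ze^{\alpha}+(1-z)}$ over the interior grid points $z\in\{1/M,\dots,(M-1)/M\}$, giving $C_{\mathrm{low}}\,Z(1-Z)\le k(Z,\alpha)-Z\le C_{\mathrm{upp}}\,Z(1-Z)$ with $C_{\mathrm{low}}=\frac{M(e^{\alpha}-1)}{(M-1)e^{\alpha}+1}$ and $C_{\mathrm{upp}}=\frac{M(e^{\alpha}-1)}{e^{\alpha}+(M-1)}$; the endpoints $z\in\{0,1\}$ cause no trouble because both sides vanish there. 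This is the structural fix your "weighted remainder" remark is groping toward: the correct weight is $z(1-z)$, which kills the endpoint singularity, and the resulting envelope ratio $C_{\mathrm{upp}}/C_{\mathrm{low}}=\frac{(M-1)e^{\alpha}+1}{e^{\alpha}+M-1}\le M-1$ stays bounded in $\alpha$. Taking expectations and using $\mathbb{E}[Z(1-Z)]=\mu(1-\mu)-\mathrm{Var}(Z)$ then converts the comparison of QoEs into a comparison of means and variances, yielding the explicit $C_1=C_{\mathrm{low}}/C_{\mathrm{upp}}=\frac{e^{\alpha}+(M-1)}{(M-1)e^{\alpha}+1}\le 1$, with the remaining mean-gap term handled by requiring $e^{\alpha}\ge\frac{(M-1)\mu_1(1-\mu_1)-\mu_2(1-\mu_2)}{(M-1)\mu_2(1-\mu_2)-\mu_1(1-\mu_1)}$, i.e.\ $\alpha\ge C_{\alpha}$. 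To complete your argument you would need to replace the uniform $L_i,U_i$ bounds by this $z(1-z)$-weighted comparison (or an equivalent device); as written, the concavity/Taylor route does not produce the claimed constants.
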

Theorem \ref{thm:purchase-non-purchase} compares two competition dynamics, $\mathcal{F}_{1}$ and $\mathcal{F}_{2}$, providing a sufficient condition for when QoE for $\mathcal{F}_2$ is smaller than that for $\mathcal{F}_1$ whereas the associated overall quality is larger. For ease of understanding, we can regard $\mathcal{F}_1$ (\textit{resp.} $\mathcal{F}_2$) as a set of ML predictors when $n_{\mathrm{b}}^{(i)}=0$ (\textit{resp.} when $n_{\mathrm{b}}^{(i)}>0$). Theorem \ref{thm:purchase-non-purchase} implies that QoE can decrease when $\mathrm{Var}[Z_{2}]$ is large enough compared to $\mathrm{Var}[Z_{1}]$. Considering our results in Figures~\ref{fig:diversity}, \ref{fig:heatmap}, and \ref{fig:density} that data purchase makes competing predictors similar when $\alpha$ is large enough, the average quality is more likely to become zero or one. As a result, it increases variance $\mathrm{Var}[Z_{2}]$ because the variance is maximized when random variables spread over $[0,1]$. Theorem \ref{thm:purchase-non-purchase} explains why QoE decreases when ML predictors can actively acquire user data through the data purchase system, supporting our main findings in experiments.

\section{Related works}
\label{s:related_works}
This work builds off and extends the recent paper, \citet{ginart2020competing}, which studied the impacts of the competition. We extend this setting by incorporating the data purchase system into competition systems. Note that the setting by \citet{ginart2020competing} is a special case of ours when $n_{\mathrm{b}} ^{(i)}= 0$ for all $i \in [M]$. Our environment enables us to study the impacts of data acquisition in competition, which is not considered in the previous work. Compared to the previous work, which showed competing predictors become too focused on sub-populations, our work suggests that this can be a good thing in the sense that it provides a variety of different options and better quality of the predictors selected by users.

A related field of our work is the stream-based AL, the problem of learning an algorithm that effectively finds data points to label from a stream of data points. \citep{settles2009active, ren2020survey}. AL has been shown to have better sample complexity than passive learning algorithms \citep{kanamori2007pool, hanneke2011rates, el2012active}, and it is practically effective when the training sample size is small \citep{konyushkova2017learning, gal2017deep, sener2018active}.  However, our competition environment is significantly different from AL. In AL, since there is only one agent, competition cannot be established. In addition, while an agent in AL collects data only from label queries, competing predictors in our environment can obtain data from data purchase as well as regular competition. These differences create a unique competition environment, and this work studies the impacts of data purchase in competitive systems.

Competition has been studied in multi-agent reinforcement learning (MARL), which is a problem of optimizing goals in a setting where a group of agents in a common environment interact with each other and with the environment \citep{lowe2017multi, foerster2017stabilising}. Competing agents in MARL maximize their own objective goals that could conflict with others. This setting is often characterized by zero-sum Markov games and is applied to video games such as Pong or Starcraft II \citep{littman1994markov,tampuu2017multiagent,vinyals2019grandmaster}. We refer to \citet{zhang2019multi} for a complementary literature survey of MARL. 

Although MARL and our environment have some similarities, the user selection and the data purchase in our environment uniquely define the interactions between users and ML predictors. In MARL, it is assumed that all agents observe information drawn from the shared environment. Different agents may observe different statuses and rewards, but all agents receive information and use them to update the policy function. In contrast, in our environment, the only selected predictor obtains the label and updates the predictor, which might be more realistic. In addition, ML predictors can collect data points through the data purchase. These settings have not been considered in the field of MARL.

\section{Conclusion}
\label{s:discussion}
In this paper, characterizing the nature of competition and data purchase, we propose a new competition environment in which ML predictors are allowed to actively acquire user labels and improve their models. Our results show that even though the data purchase improves the quality that predictors provide, it can decrease the quality that users experience. We explain this counterintuitive finding by demonstrating that data purchase makes competing predictors similar to each other in various situations.

\section*{Broader impact statement}
Our findings can broadly benefit the ML communities by providing insights into how competition over datasets and data acquisitions can affect a user's experiences. In order to derive tractable analysis and experiments, we have to make some modeling simplifications. Similar simplifications are commonly used in ML and multi-agent literature and are necessary here, especially because there lacks systematic previous analysis of data purchase in competition. For example, one assumption in our environment for simplicity is that the user distribution does not change over time. In practice, customer behavior can change or evolve over time \citep{jin2019buying,reimers2019impacts}. It is therefore important to expand this direction of research with other models in future works.

%%%%%%%%%%%%%%%%%%%%%%%%%%%%%%%%%%%%%%%%%%%%%%%%%%%%%%%%%%%%%%%%%%%%%%%%

%%% The acknowledgments section is defined using the "acks" environment
%%% (rather than an unnumbered section). The use of this environment 
%%% ensures the proper identification of the section in the article 
%%% metadata as well as the consistent spelling of the heading.

% \begin{acks}
% If you wish to include any acknowledgments in your paper (e.g., to 
% people or funding agencies), please do so using the `\texttt{acks}' 
% environment. Note that the text of your acknowledgments will be omitted
% if you compile your document with the `\texttt{anonymous}' option.
% \end{acks}

%%%%%%%%%%%%%%%%%%%%%%%%%%%%%%%%%%%%%%%%%%%%%%%%%%%%%%%%%%%%%%%%%%%%%%%%

%%% The next two lines define, first, the bibliography style to be 
%%% applied, and, second, the bibliography file to be used.

\bibliography{main}
\bibliographystyle{tmlr}

\newpage

\appendix
\section*{Appendix}
In this appendix, we provide implementation details in Appendix~\ref{app:imple}, additional numerical experiments in Appendix~\ref{app:add_numerical}, and proofs and additional theoretical results in Appendix~\ref{app:proof}.

\section{Implementation details}
\label{app:imple}
In this section, we provide implementation details. We explain the user distribution, ML predictors, and the proposed environment.

\paragraph{Datasets (user distribution)}
As for the datasets (user distribution $P_{X,Y}$), we used the following seven datasets for our experiments: \texttt{Insurance} \citep{van2000coil}, \texttt{Adult} \citep{dua2019UCI}, \texttt{Postures} \citep{gardner20143d}, \texttt{Skin-nonskin} \citep{chang2011libsvm}, \texttt{FashionMNIST} \citep{xiao2017fashion}, \texttt{MNIST} \citep{lecun2010mnist}, and \texttt{CIFAR10} \citep{krizhevsky2009learning}.
For all datasets, we first split a dataset into competition and evaluation datasets: the competition dataset is used during the $T=10^4$ competition rounds and the evaluation dataset is used for evaluate metrics after the competition. For \texttt{FashionMNIST}, \texttt{MNIST}, and \texttt{CIFAR10}, we use the original training and test datasets for competition and evaluation datasets, respectively. For \texttt{Insurance}, \texttt{Adult}, \texttt{Postures}, and \texttt{Skin-nonskin}, we randomly sample $5000$ data points from the original dataset to make the evaluation dataset and use the remaining data points as the competition dataset. At each round of competition, we randomly sample one data point from the competition dataset. After the $T$ competition rounds, we randomly sample $3000$ points from the evaluation dataset and evaluate the metrics (the overall quality, QoE, and diversity). Note that all of experiment results are based on the evaluation dataset. Table \ref{tab:summary_real_datasets} shows a summary of the seven datasets used in our experiments.

\begin{table}[ht]
    \centering
    \caption{A summary of datasets used in our experiments.}
    \begin{tabular}{lcccccccccccc}
    \toprule
    \multirow{2}{*}{Dataset} & The size of  & The size of & \multirow{2}{*}{Input dimension} & \multirow{2}{*}{\# of classes $|\mathcal{Y}|$}  \\ 
    & competition dataset & evaluation dataset & \\
    \midrule
    \texttt{Insurance} & 13823 & 5000 & 16 & 2 \\
    \texttt{Adult} & 43842 & 5000 & 108 & 2 \\
    \texttt{Postures} & 69975 & 5000 & 15 & 5 \\
    \texttt{Skin-nonskin} & 239057 & 5000 & 3 & 2 \\
    \texttt{Fashion-MNIST} & 60000 & 10000 & 784 & 10 \\
    \texttt{MNIST} & 60000 & 10000 & 784 & 10 \\
    \texttt{CIFAR10} & 50000 & 10000  & 3072 & 10 \\
    \bottomrule
    \end{tabular}
    \label{tab:summary_real_datasets}
\end{table}

As for the preprocessing, we apply the standardization to have zero mean and one standard deviation for \texttt{Skin-nonskin}. For the two image datasets, \texttt{MNIST} and \texttt{CIFAR10} we apply the channel-wise standardization. Other than the three datasets, we do not apply any other preprocessing. To reflect the customers' randomness in their selection, we apply a random noise on the original label. We assign a random label with 30\% for every dataset. This random perturbation is applied to both the competition and evaluation datasets.
 
\paragraph{ML predictors}
We fix the number of predictors to $M=18$ throughout our experiments. For each dataset, which makes one competition environment, we consider a homogeneous setting, \textit{i.e.}, all predictors have the same number of seed data $n_{\mathrm{s}} ^{(i)}$, a budget $n_{\mathrm{b}} ^{(i)}$, a model $f^{(i)}$, and a buying strategy $\pi^{(i)}$. As for the buying strategy, we fix $\pi ^{(i)} (X_t) = \mathbbm{1}(\{ \mathrm{Ent}(p^{(i)}(X_t)) \geq 0.3 \log(|\mathcal{Y}|) \})$, where $\mathrm{Ent}(p^{(i)}(X_t))$ is the Shannon's entropy of $p^{(i)}(X_t)$, and $p^{(i)}(X_t)$ is the corresponding probability estimate for $P(Y=Y_t)$. That is, if the entropy is higher than the pre-defined threshold $0.3\log(|\mathcal{Y}|)$, a predictor decides to buy the user data. Note that $\log(|\mathcal{Y}|)$ is the Shannon's entropy of the uniform distribution on $\mathcal{Y}$.

Table \ref{tab:summary_predictors} shows a summary information for the seed data $n_{\mathrm{s}}$ and the model $f$ for each dataset. Every ML predictor initially trains with the $n_{\mathrm{s}}$ seed data points. For all experiments, we use the Adam optimization \citep{kingma2014adam} with the specified learning rate and epochs. The batch size is fixed to $64$. If an predictor is selected, then its ML model is updated with one iteration with the newly obtained data point, and we retrain the model whenever the `retrain period' new samples are obtained.

\begin{table}[ht]
    \centering
    \caption{A summary of hyperparameters by datasets. Logistic denotes a logistic model and NN denotes a neural network one hidden layer.}
    \begin{tabular}{lcccccccccccc}
    \toprule
    \multirow{2}{*}{Dataset} & \multirow{2}{*}{Seed data $n_{\mathrm{s}}$} & \multicolumn{4}{c}{ML predictor $f$} &  \\ \cmidrule{3-7}
    & & Model & \# of hidden nodes  & Epoch & Learning rate & Retrain period  \\
    \midrule
    \texttt{Insurance} & 100 & Logistic & - & 10 & $5\times 10^{-3}$ & 50 \\
    \texttt{Adult} & 100 & Logistic & - & 10 & $10^{-2}$ & 50 \\
    \texttt{Postures} & 200 & Logistic & - & 10 & $3\times10^{-2}$ & 50 \\
    \texttt{Skin-nonskin} & 50 & Logistic & - & 10 & $3\times10^{-2}$ & 50 \\
    \texttt{Fashion-MNIST} & 50 & NN & 400 & 30 & $10^{-4}$ & 150 \\
    \texttt{MNIST} & 50 & NN & 400 & 30 & $10^{-4}$ & 150  \\
    \texttt{CIFAR10} & 100 & NN & 400 & 30 & $10^{-4}$ & 150 \\
    \bottomrule
    \end{tabular}
    \label{tab:summary_predictors}
\end{table}

\section{Additional numerical experiments}
\label{app:add_numerical}
In this section, we provide additional experimental results to demonstrate the robustness of our findings against different modeling assumptions in the heterogeneous setting. As for the heterogeneous settings, we consider different budgets in the subsection \ref{app:diff_budgets} and different number of competing predictors in the subsection \ref{app:diff_numbers}. All additional results again show the robustness of our experimental findings against different modeling assumptions. 

\subsection{Different budgets}
\label{app:diff_budgets}
We use the same setting used in the homogeneous setting but with different budgets. We use the \texttt{Insurance}, \texttt{Adult}, and \texttt{Skin-nonskin} datasets.
For $n_{\mathrm{b}} ^{(i)} \in \{0,100,200,400\}$, we assume that the first $9$ predictors have $n_{\mathrm{b}} ^{(i)}$ budgets, but the last $9$ predictors have $n_{\mathrm{b}} ^{(i)}/2$ budgets. That is, half of the predictors have half the budget compared to the other group. This situation can be considered as some groups of companies have a larger amount of capital than others. Figure~\ref{fig:hetero_budgets} shows that the main findings appear again even when different budgets are used (QoE generally decreases, overall quality increases, and diversity decreases). 

\begin{figure}[ht]
% \vskip 0.15in
\begin{center}
    \includegraphics[width=0.2\textwidth]{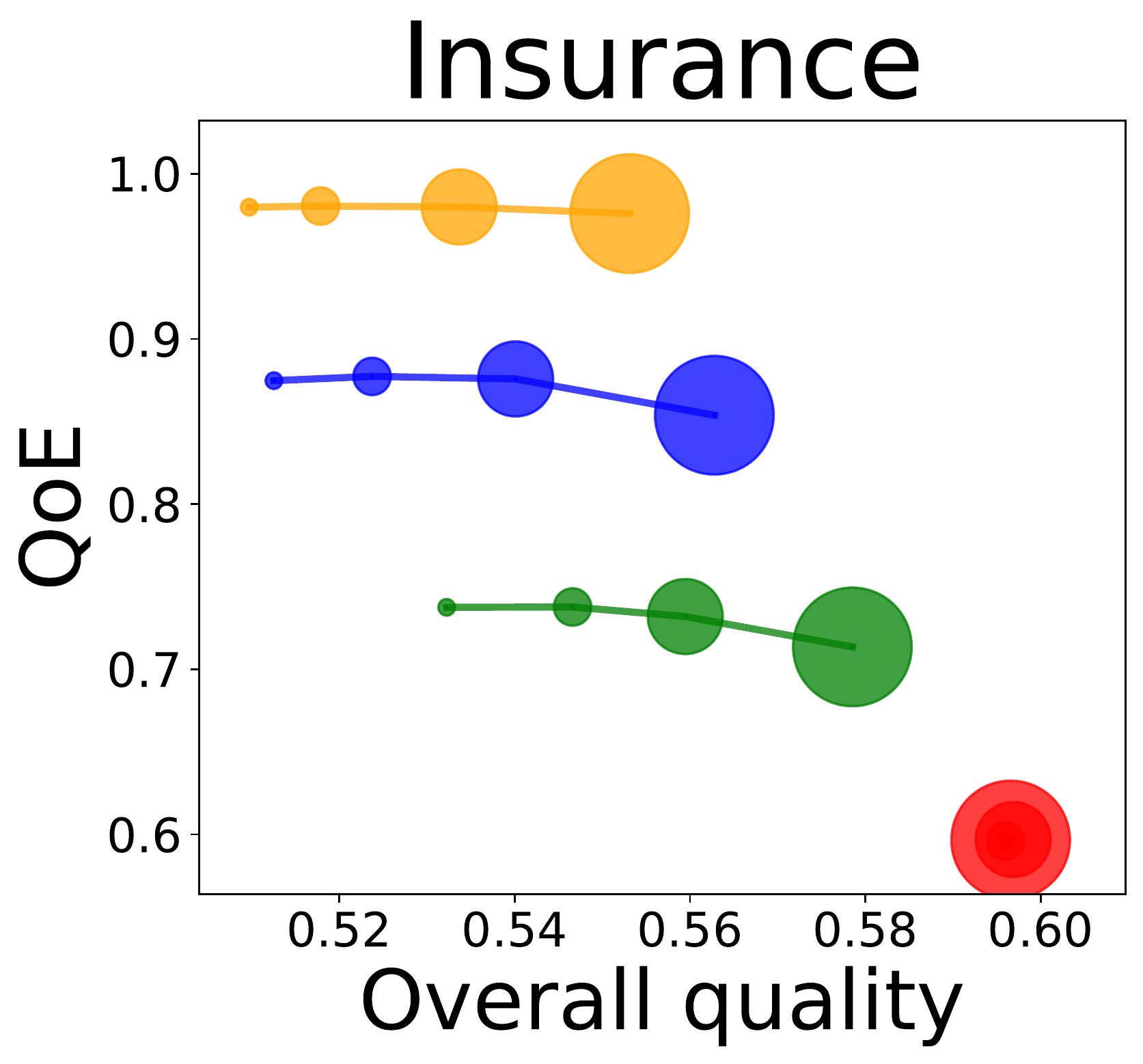}
    \includegraphics[width=0.2\textwidth]{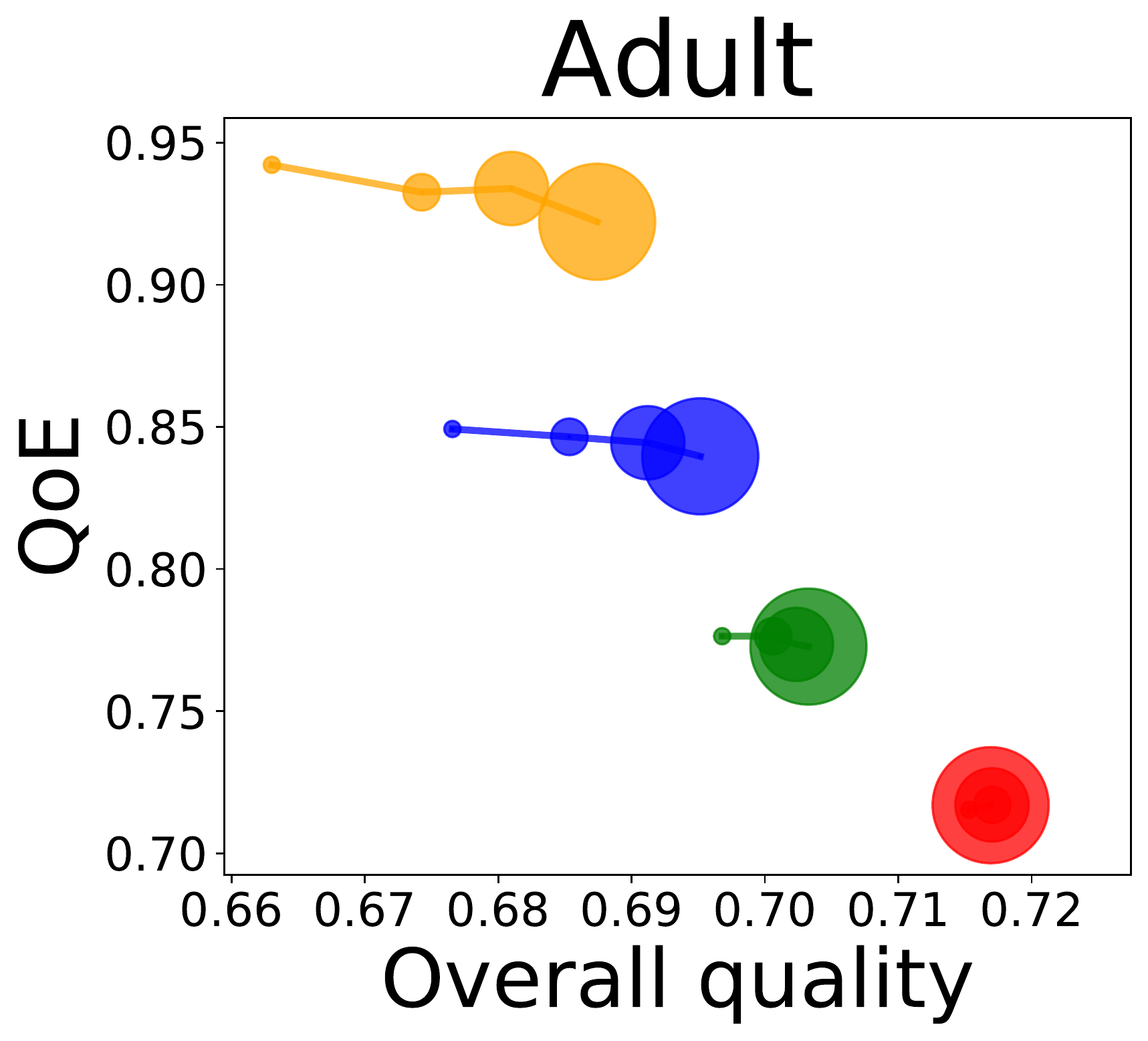}
    \includegraphics[width=0.2\textwidth]{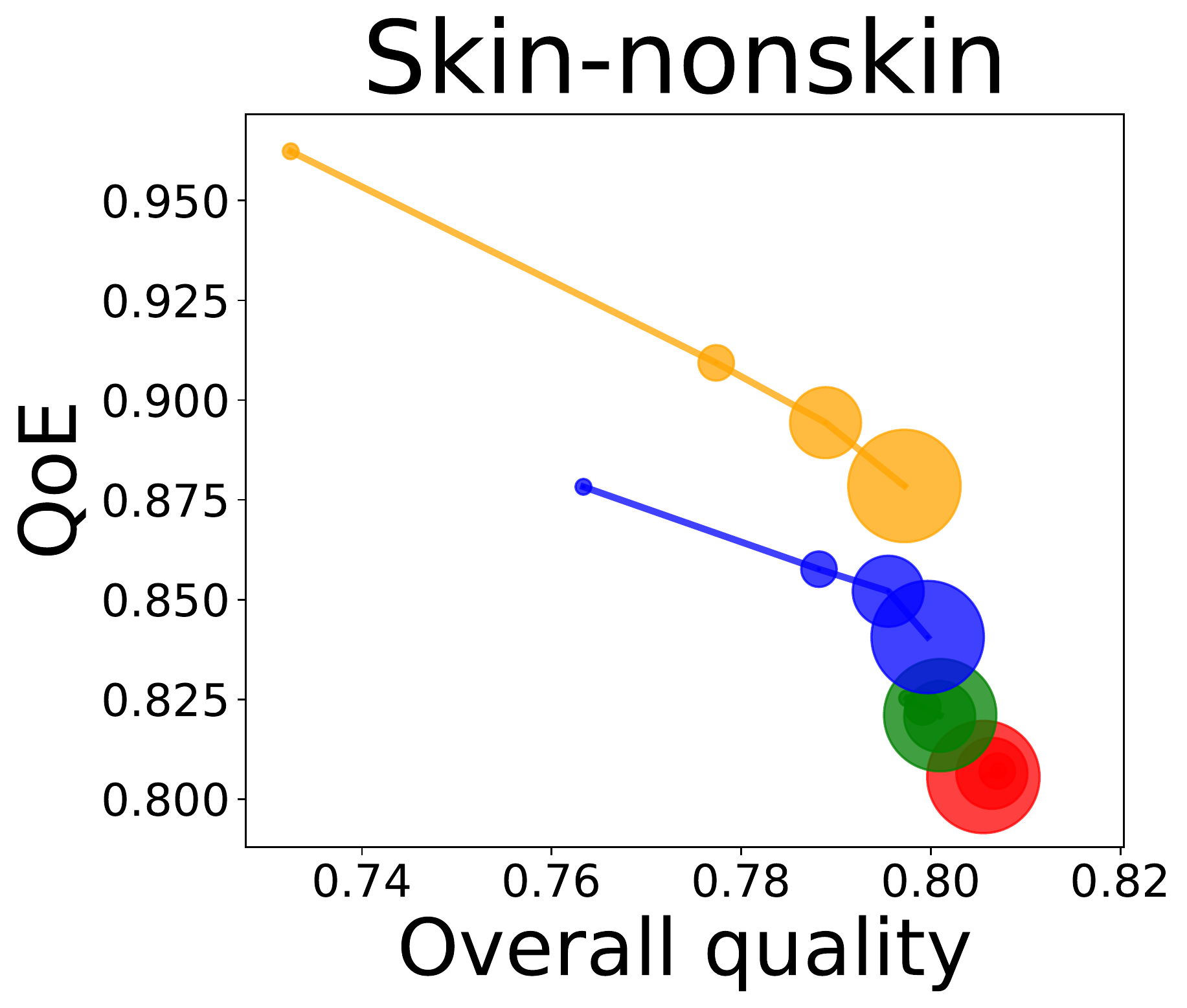}
    \includegraphics[width=0.2\textwidth]{figures/Legend_MQ_vs_QoE.pdf}
    \\
    \includegraphics[width=0.2\textwidth]{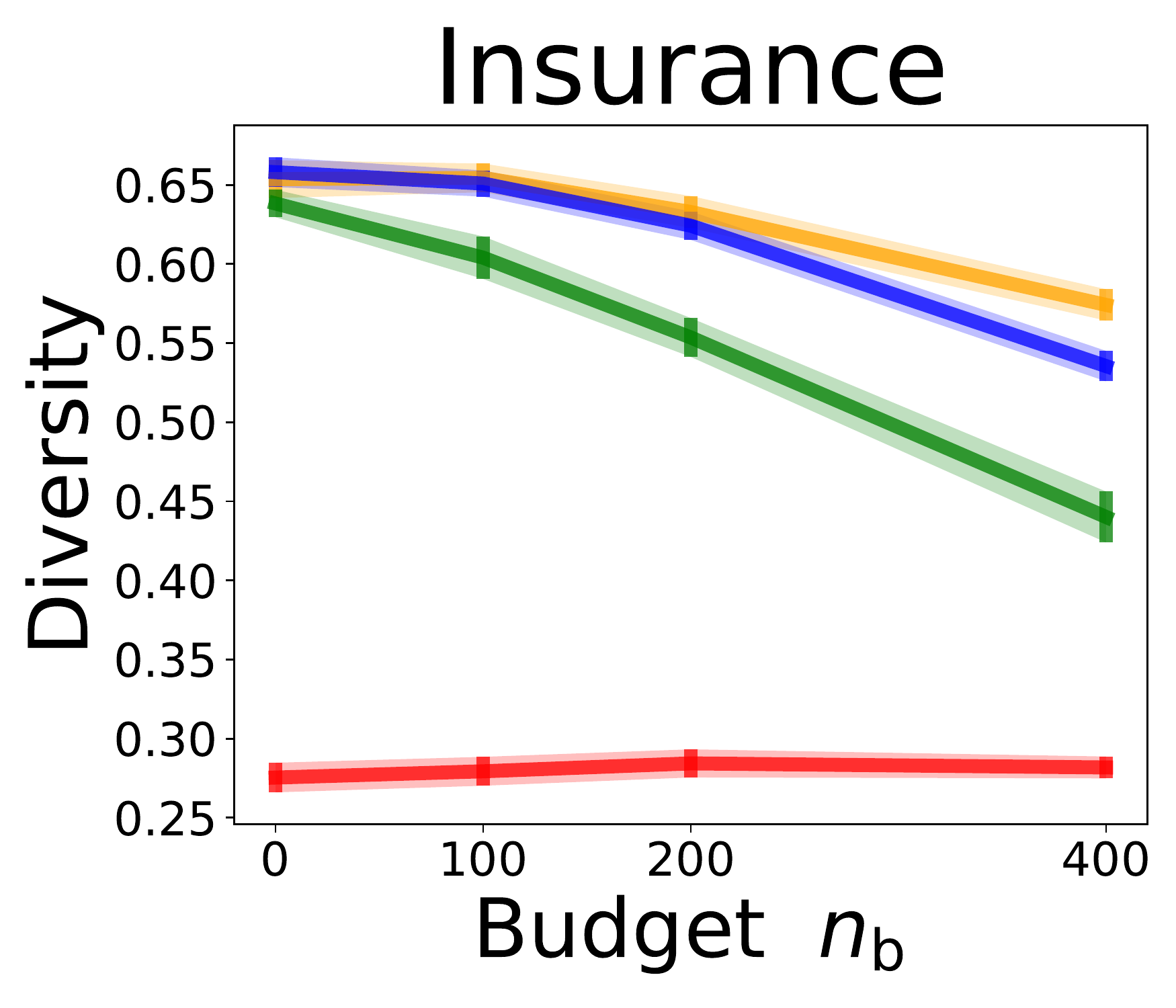}
    \includegraphics[width=0.2\textwidth]{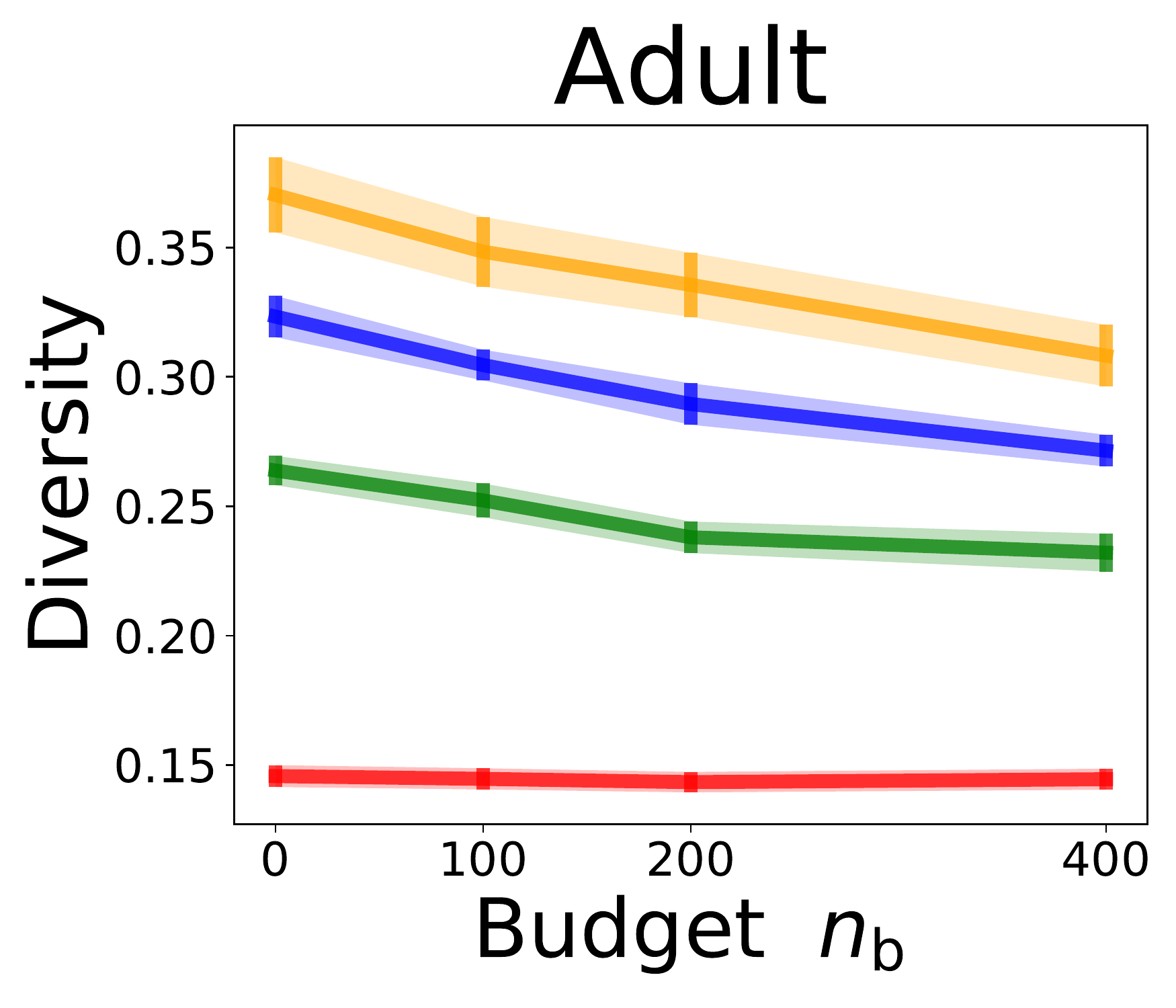}
    \includegraphics[width=0.2\textwidth]{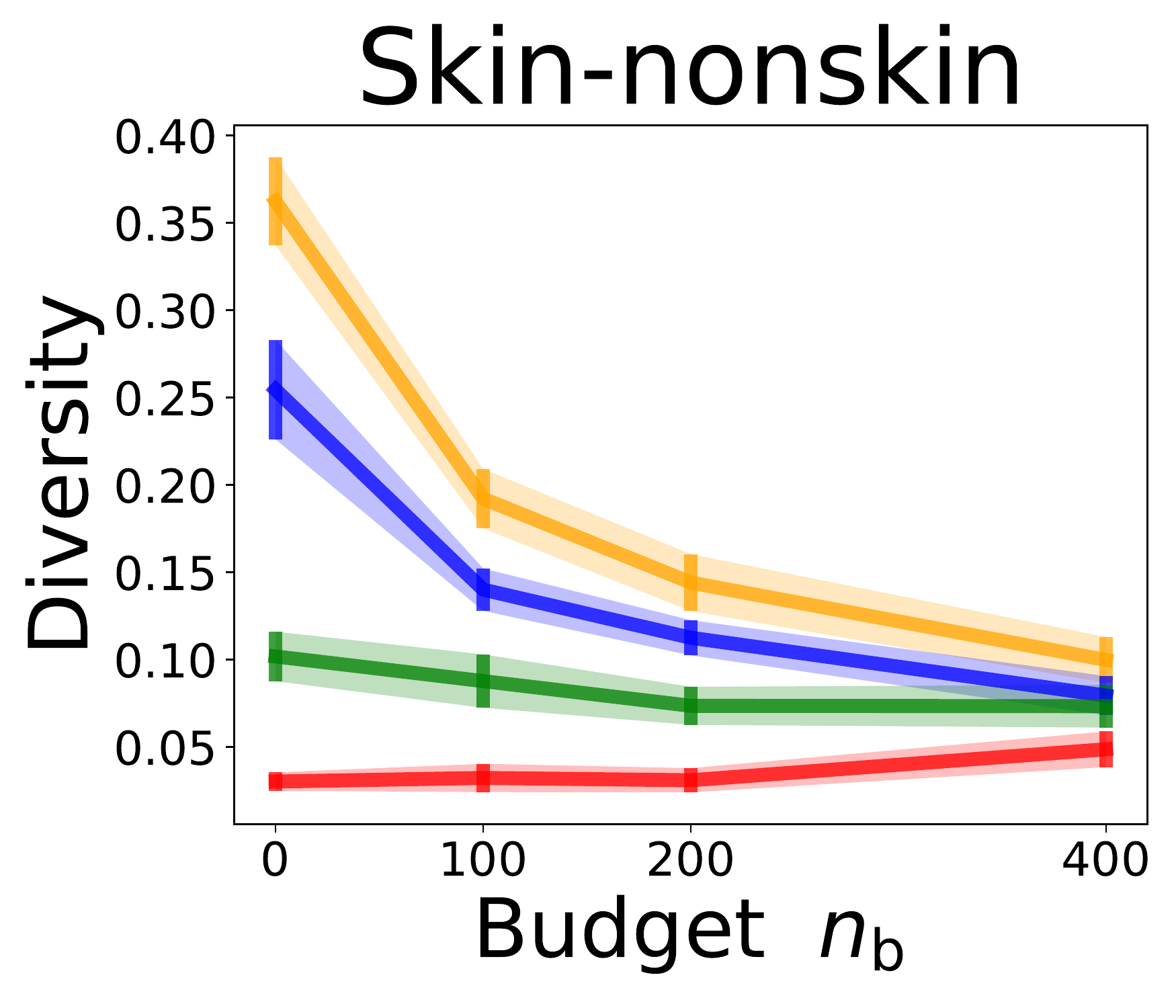}
    \includegraphics[width=0.2\textwidth]{figures/Legend_diversity.pdf}
    \\
    \includegraphics[width=0.2\textwidth]{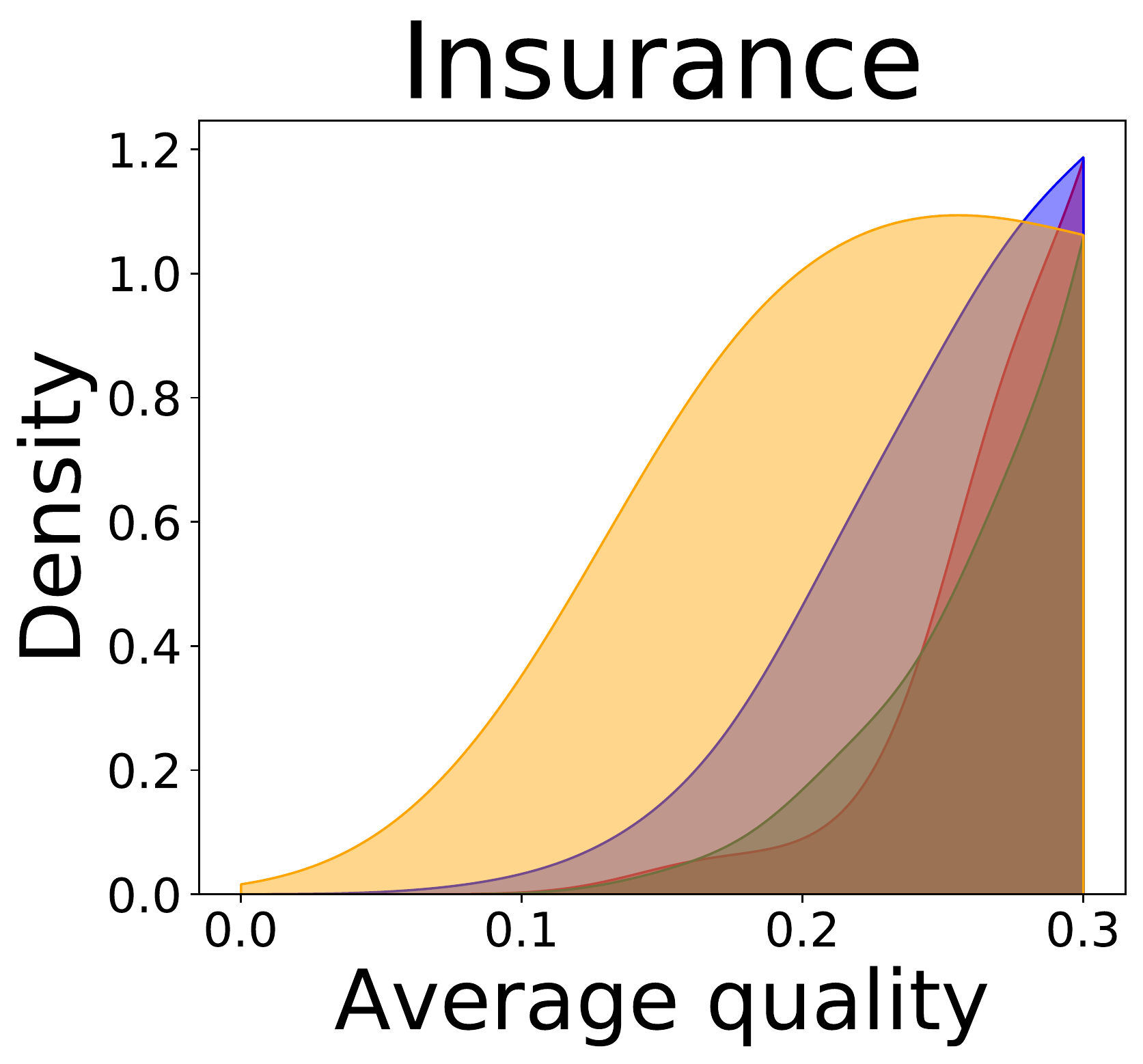}
    \includegraphics[width=0.2\textwidth]{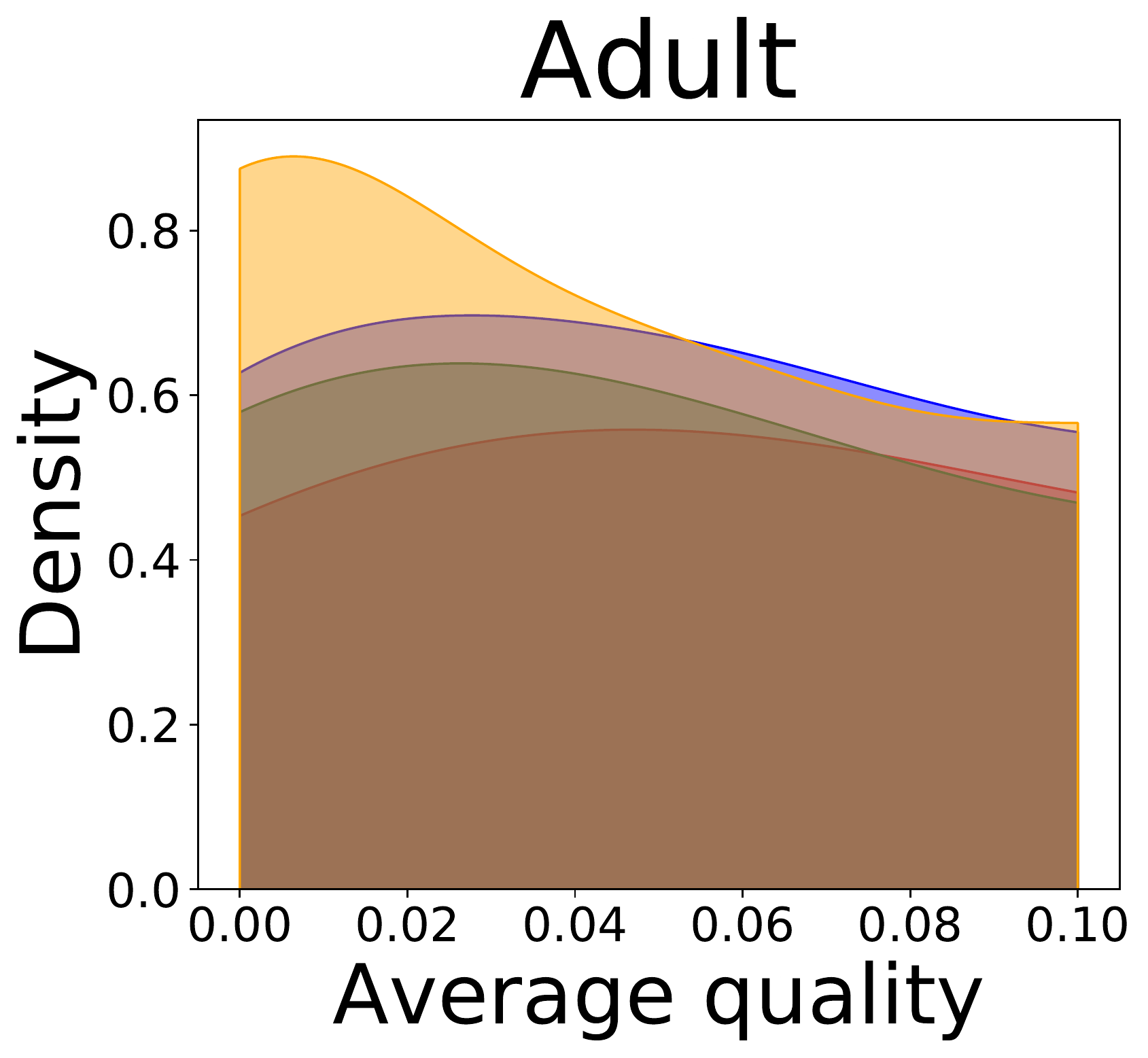}
    \includegraphics[width=0.2\textwidth]{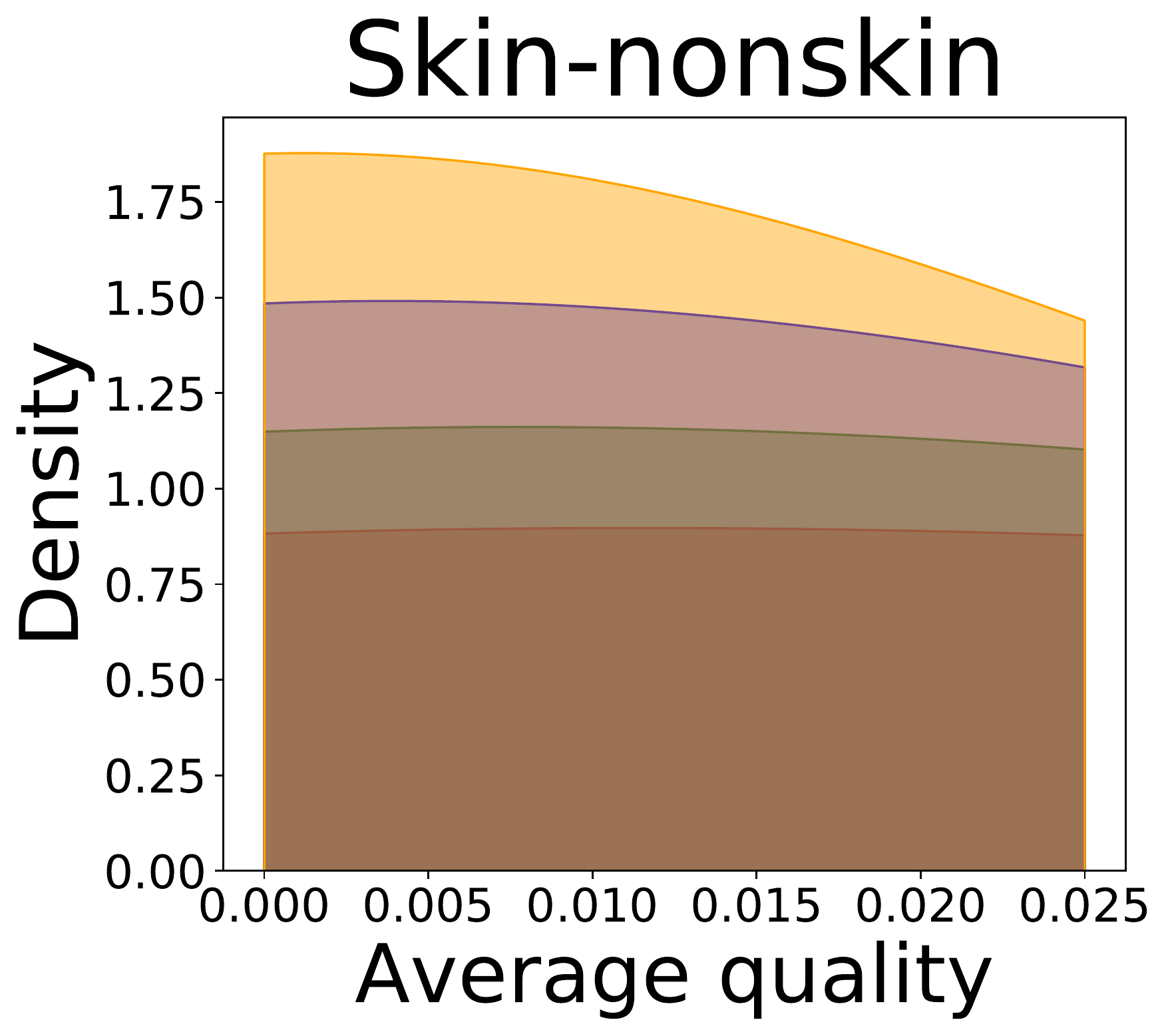}
    \includegraphics[width=0.2\textwidth]{figures/Legend_density.pdf}
    \caption{\textbf{Heterogeneous predictors.} Main figures when competing predictors use different budgets. The results are similar to the homogeneous setting, showing the robustness of our main findings.} 
    \label{fig:hetero_budgets}
\end{center}
% \vskip -0.15in    
\end{figure}

\subsection{Different number of competing predictors}
\label{app:diff_numbers}
We also show that our findings are consistent for the different number of competing predictors in the market. All the experiments in Section~\ref{s:experiment} of the manuscript consider $M=18$. Here, we consider the homogeneous setting but a different number of competing predictors $M=9$ or $M=12$. As Figures~\ref{fig:agent_9_predictors} and~\ref{fig:agent_12_predictors} show, the main findings are captured again when the number of predictors are used.

\begin{figure}[ht]
\begin{center}
    \includegraphics[width=0.2\textwidth]{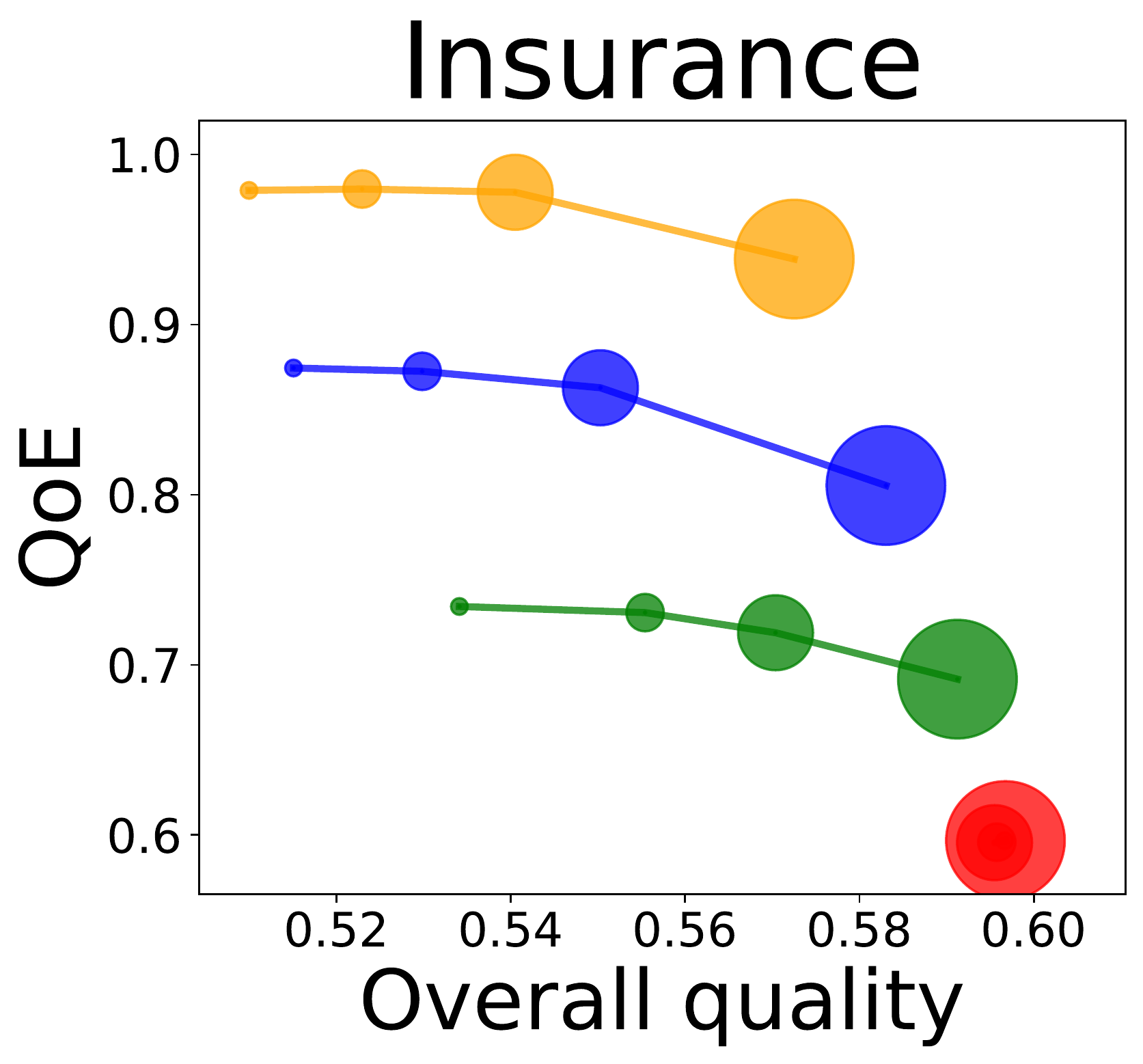}
    \includegraphics[width=0.2\textwidth]{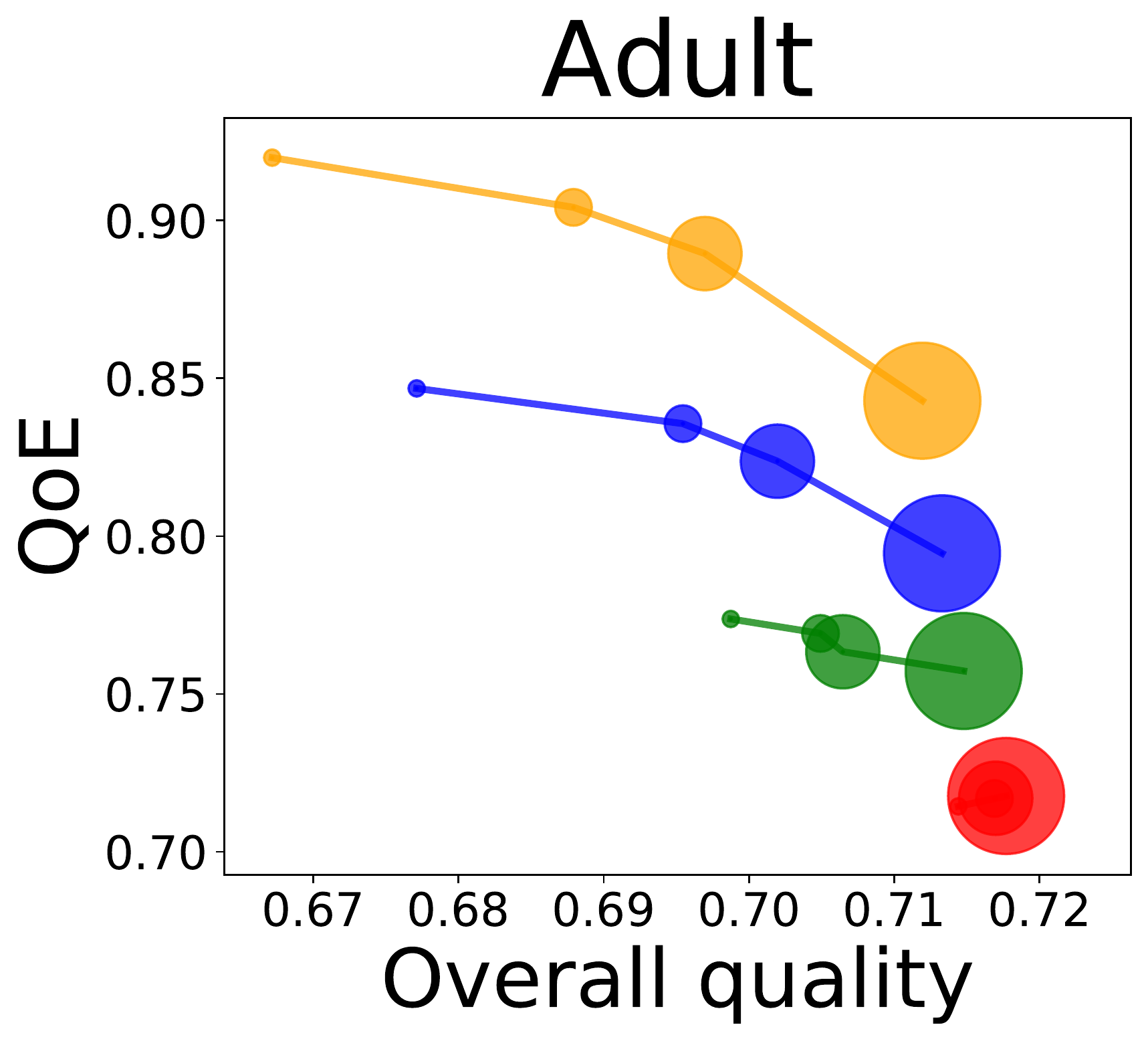}
    \includegraphics[width=0.2\textwidth]{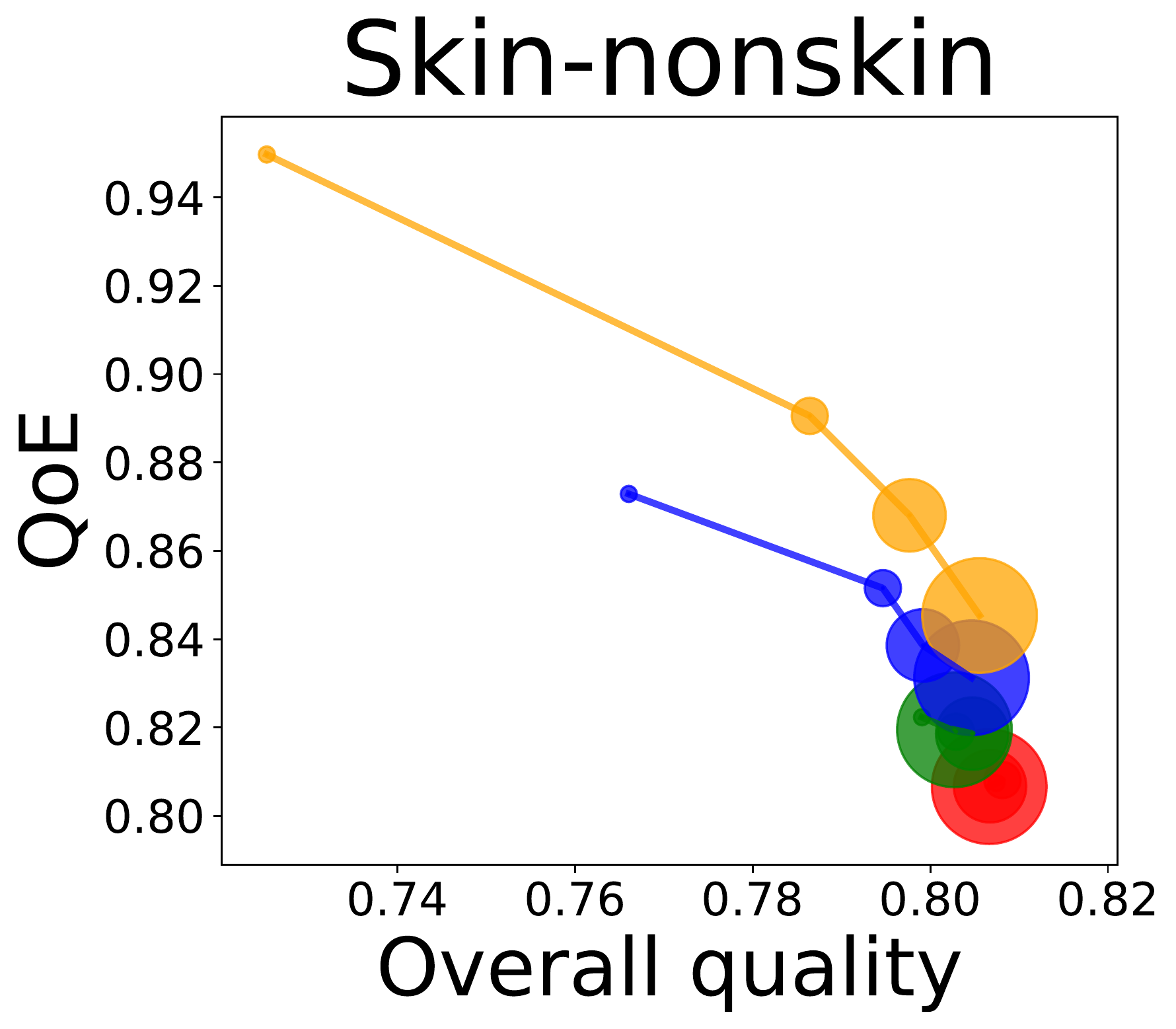}
    \includegraphics[width=0.2\textwidth]{figures/Legend_MQ_vs_QoE.pdf}
    \\
    \includegraphics[width=0.2\textwidth]{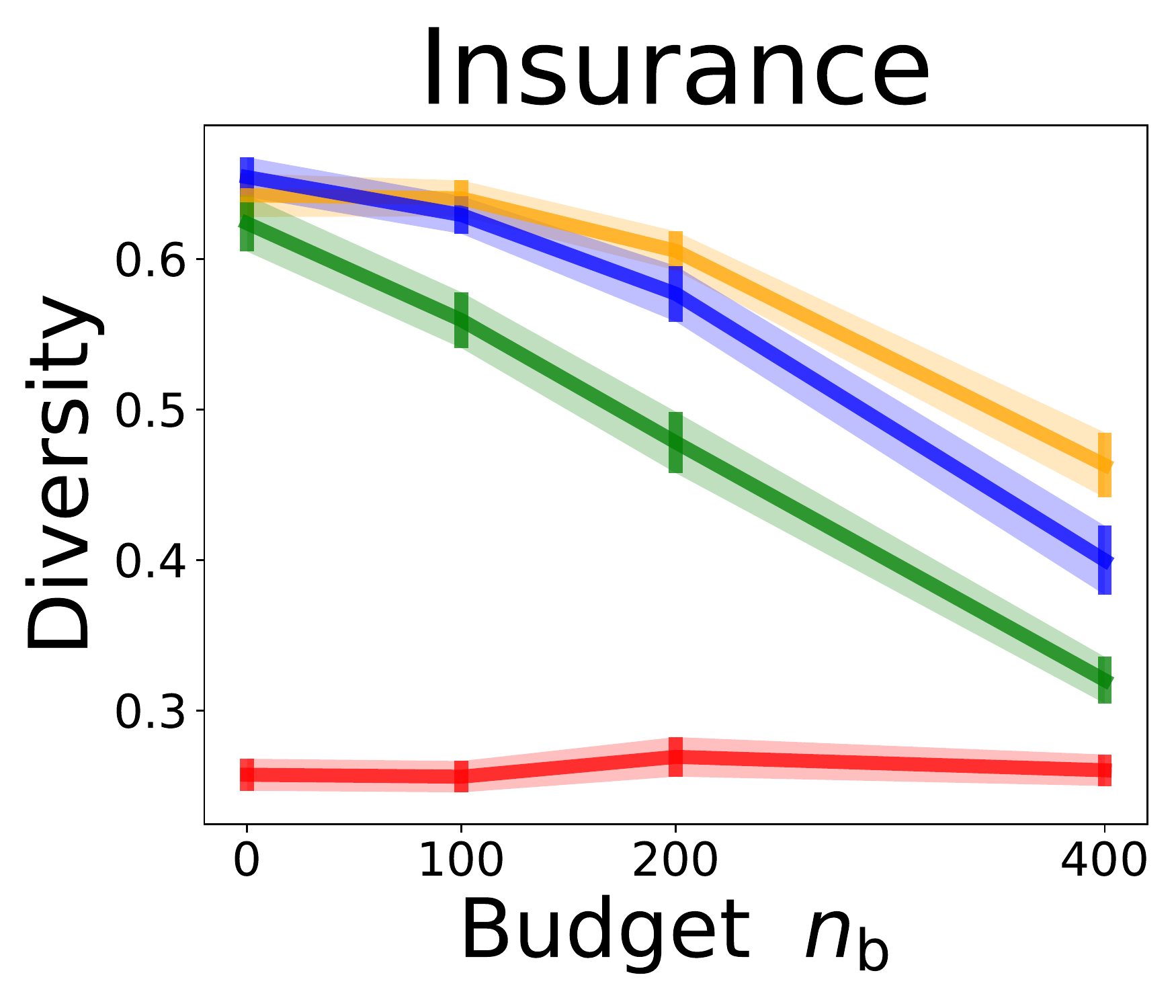}
    \includegraphics[width=0.2\textwidth]{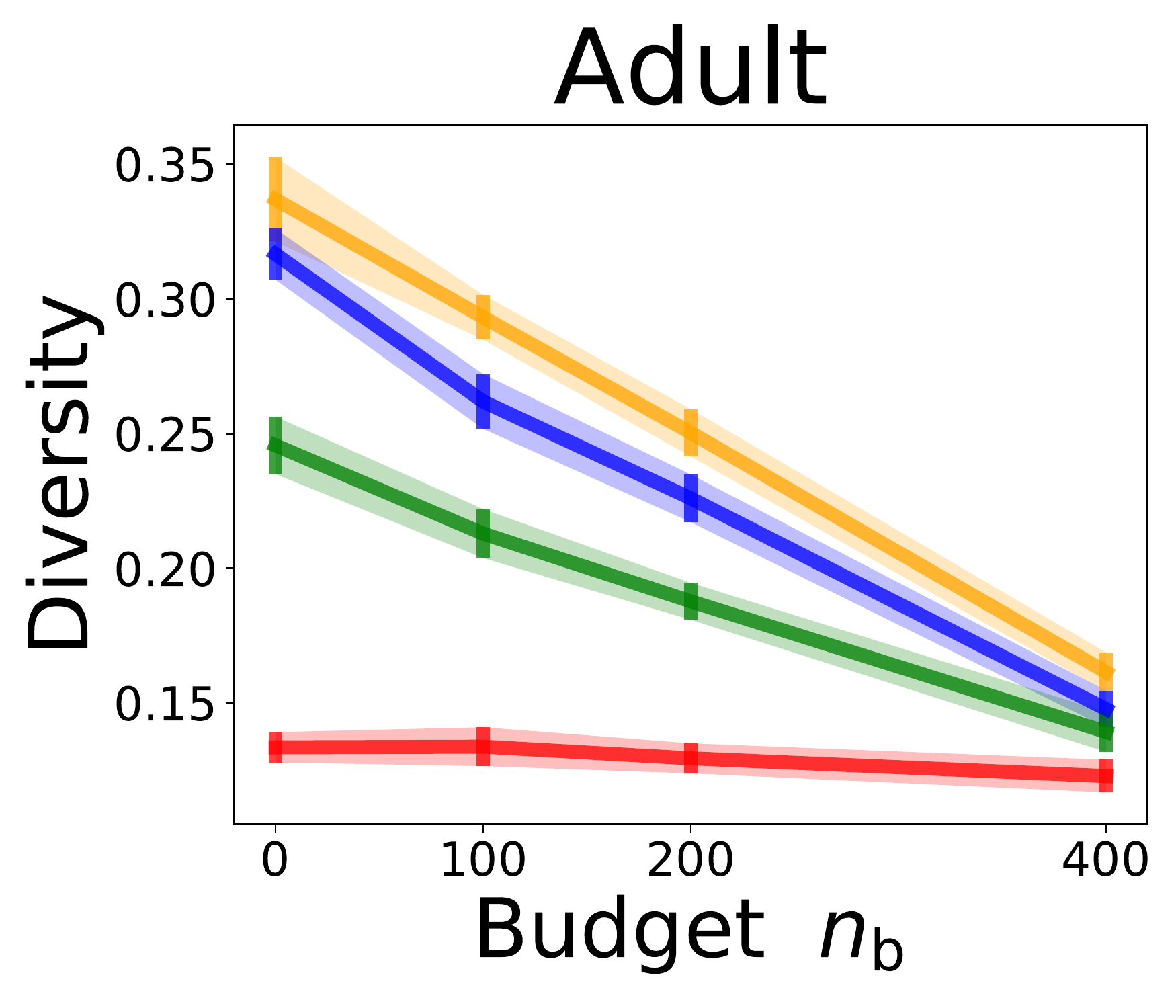}
    \includegraphics[width=0.2\textwidth]{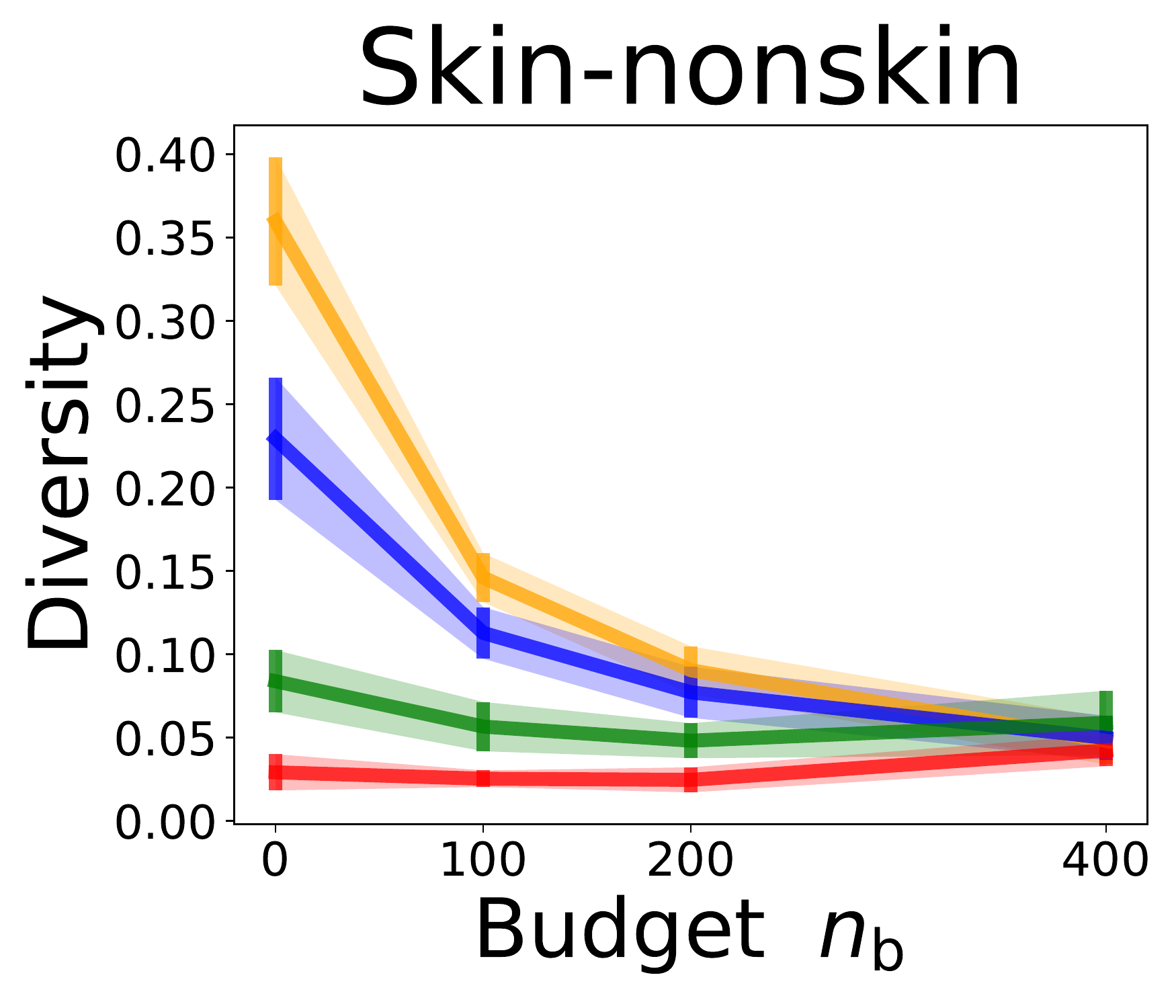}
    \includegraphics[width=0.2\textwidth]{figures/Legend_diversity.pdf}
    \\
    \includegraphics[width=0.2\textwidth]{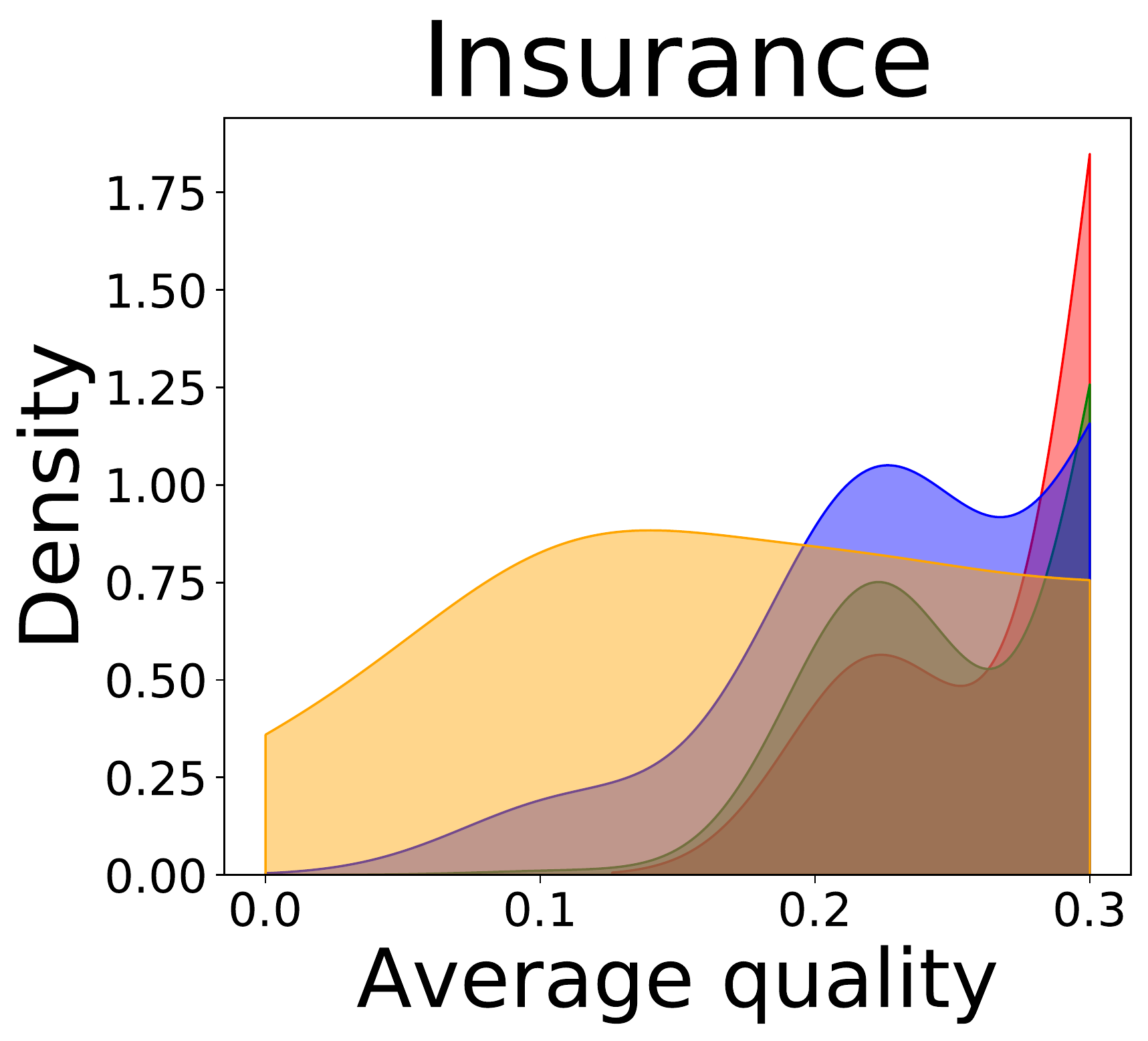}
    \includegraphics[width=0.2\textwidth]{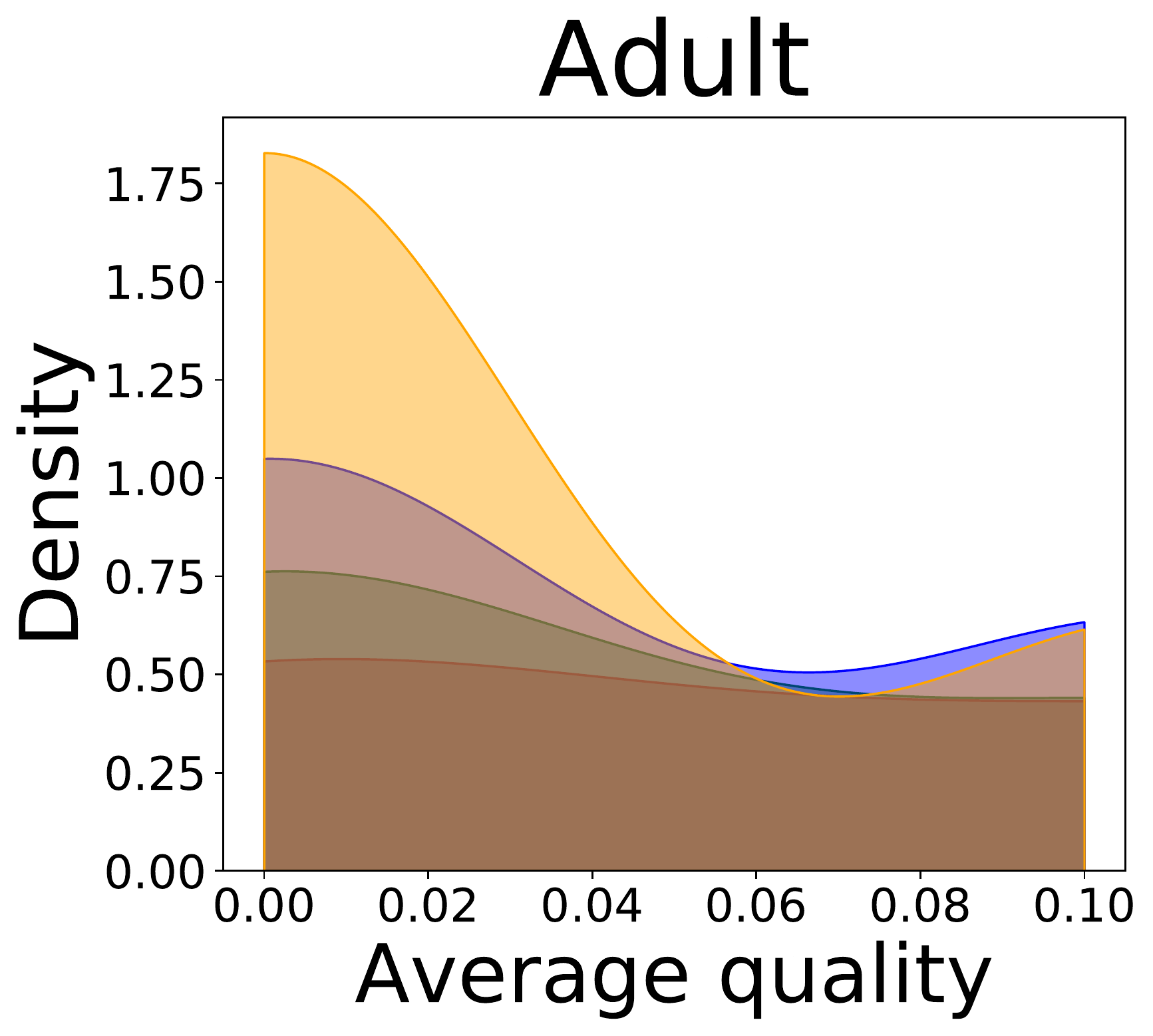}
    \includegraphics[width=0.2\textwidth]{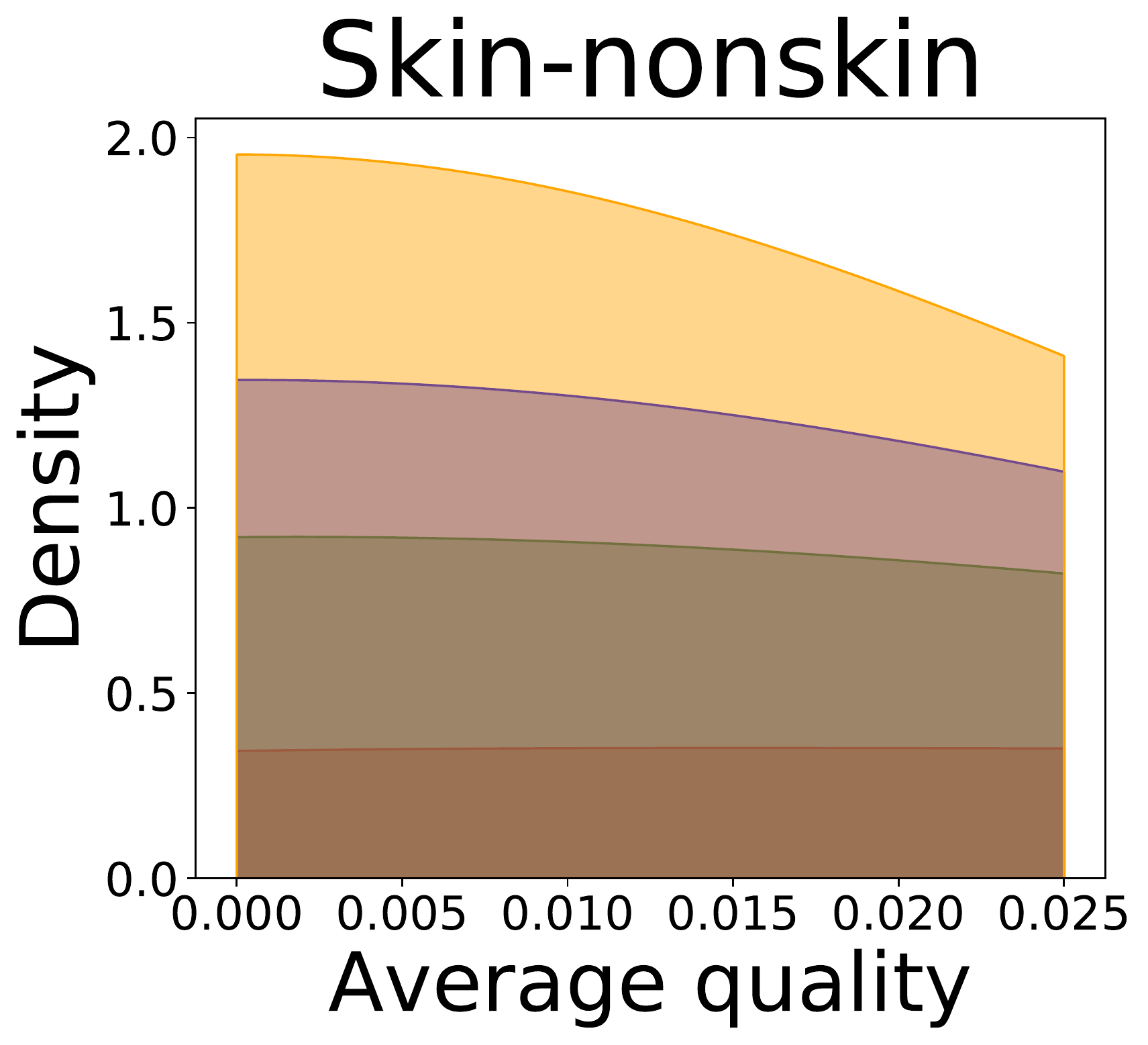}
    \includegraphics[width=0.2\textwidth]{figures/Legend_density.pdf}
    \caption{\textbf{Heterogeneous predictors.} Main figures when there are $M=9$ competing predictors. The results are similar to the $M=18$, showing the robustness of our main results.} 
    \label{fig:agent_9_predictors}
\end{center}
\end{figure}

\begin{figure}[t]
\begin{center}
    \includegraphics[width=0.2\textwidth]{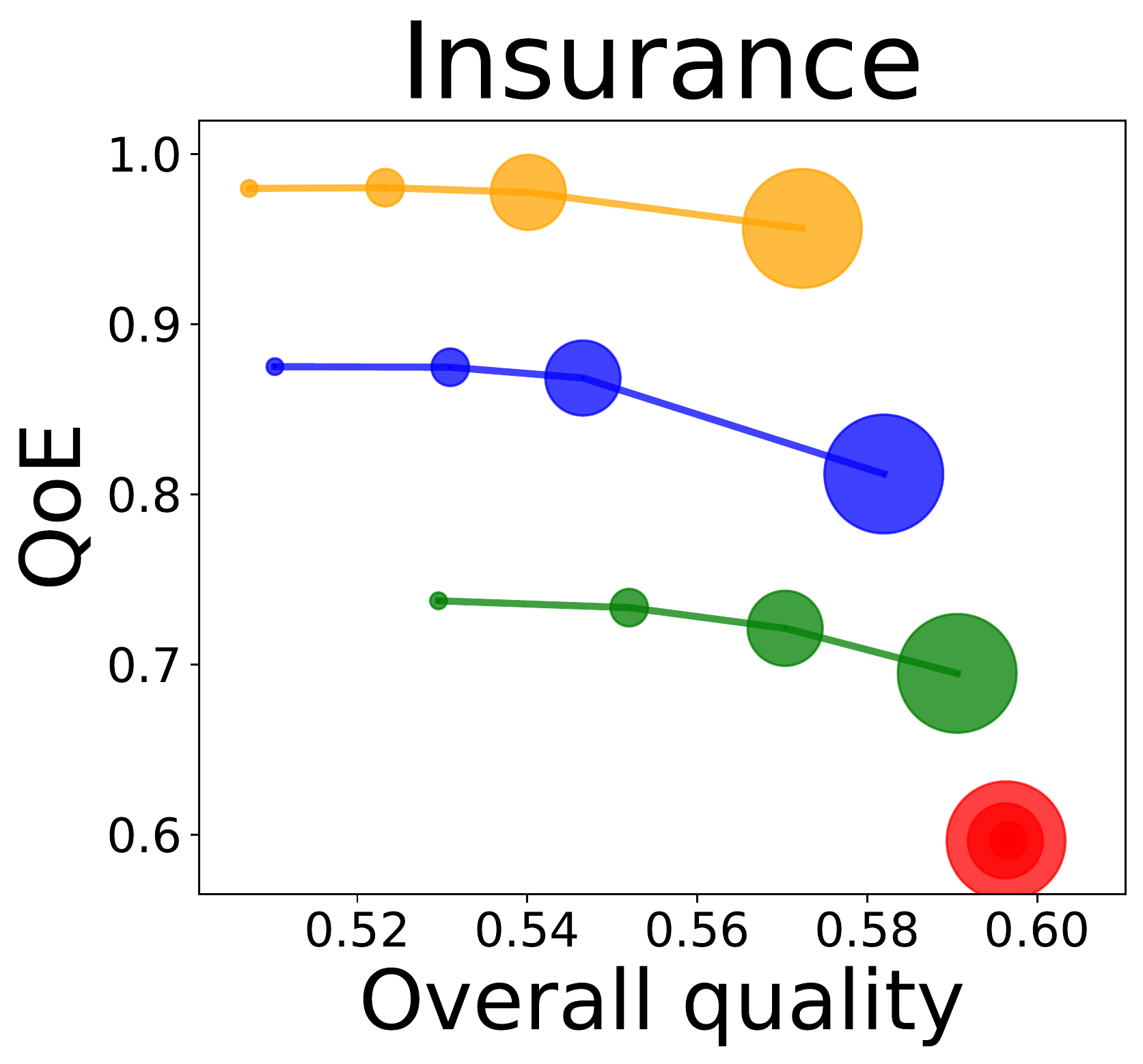}
    \includegraphics[width=0.2\textwidth]{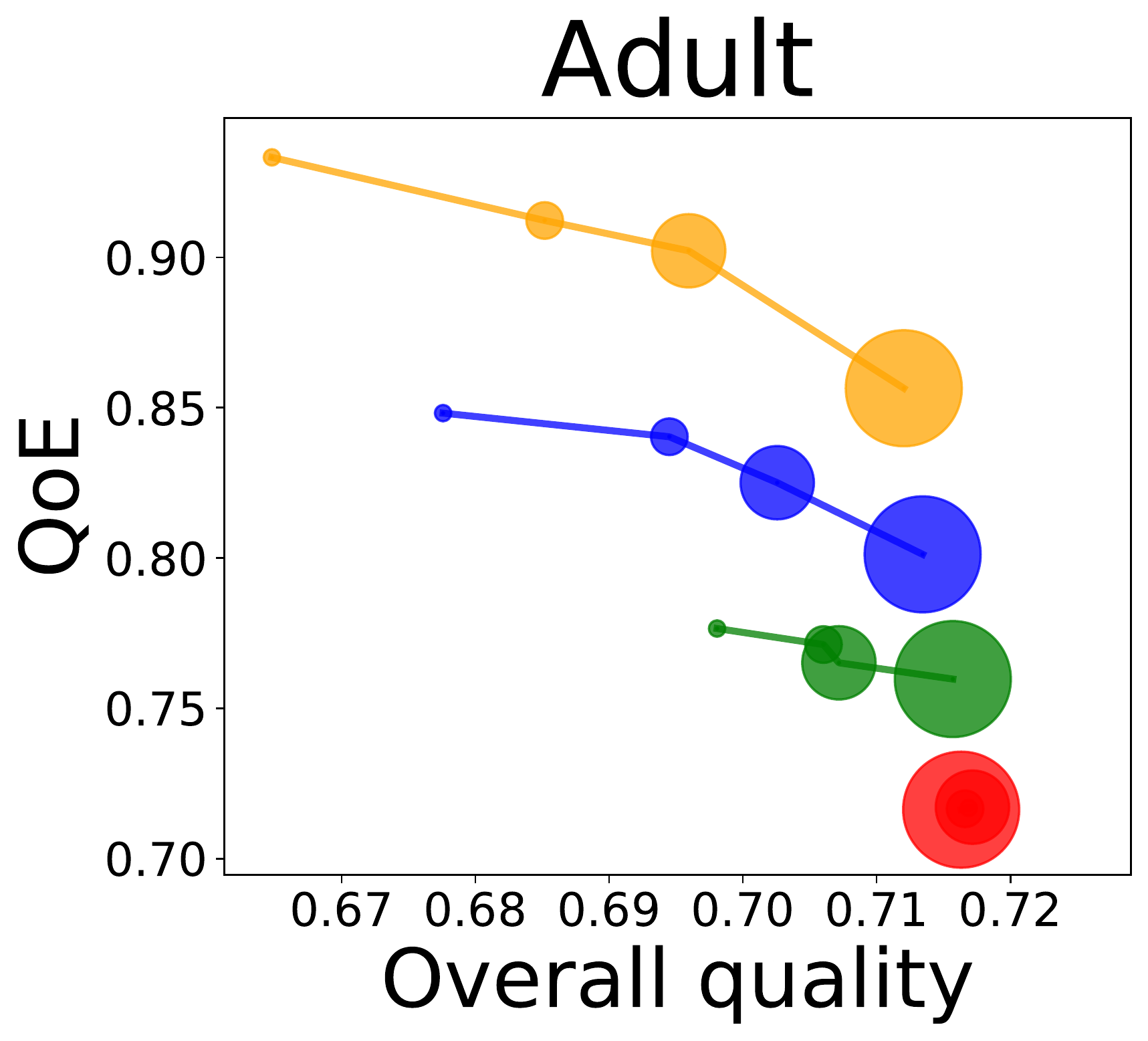}
    \includegraphics[width=0.2\textwidth]{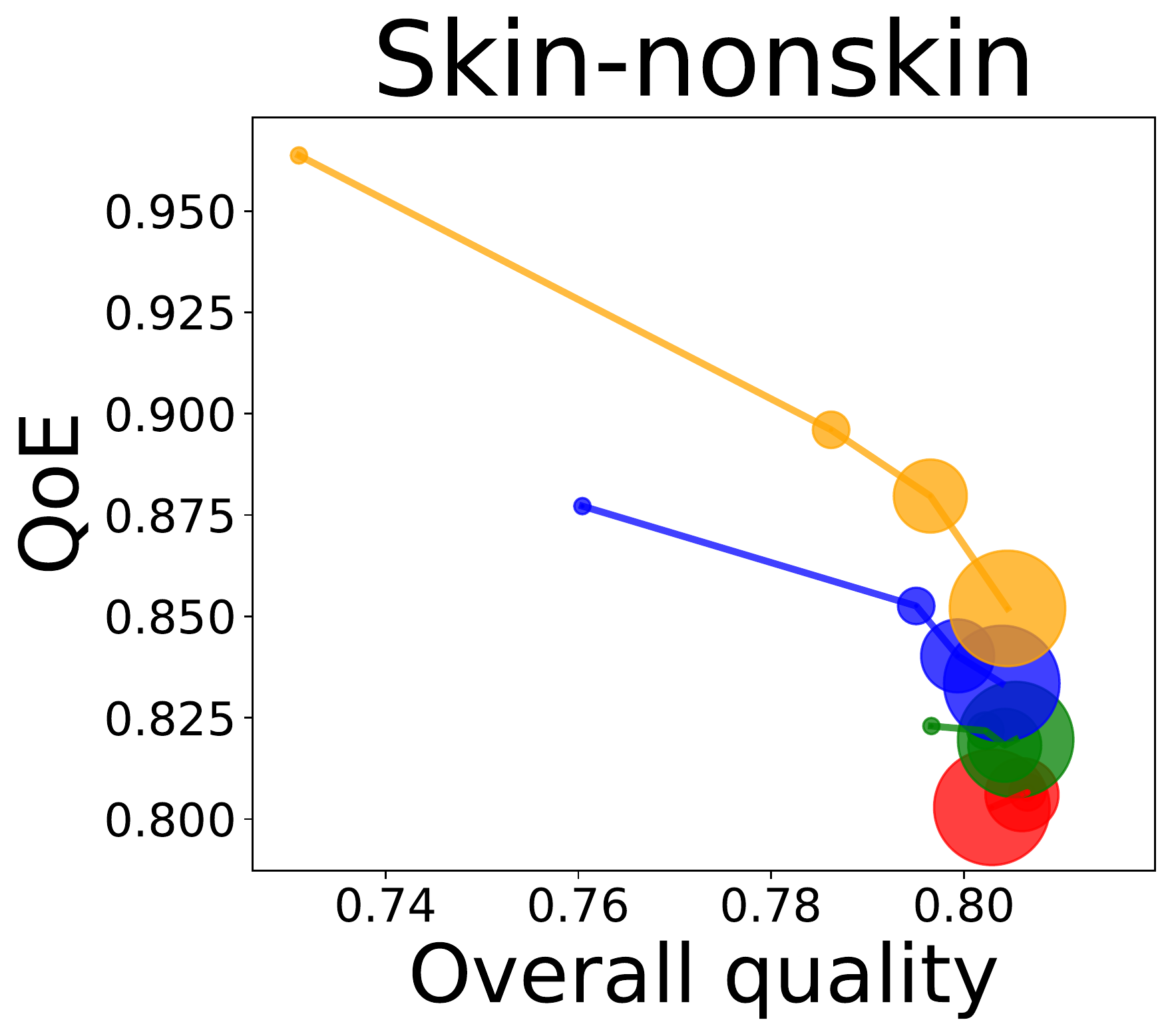}
    \includegraphics[width=0.2\textwidth]{figures/Legend_MQ_vs_QoE.pdf}
    \\
    \includegraphics[width=0.2\textwidth]{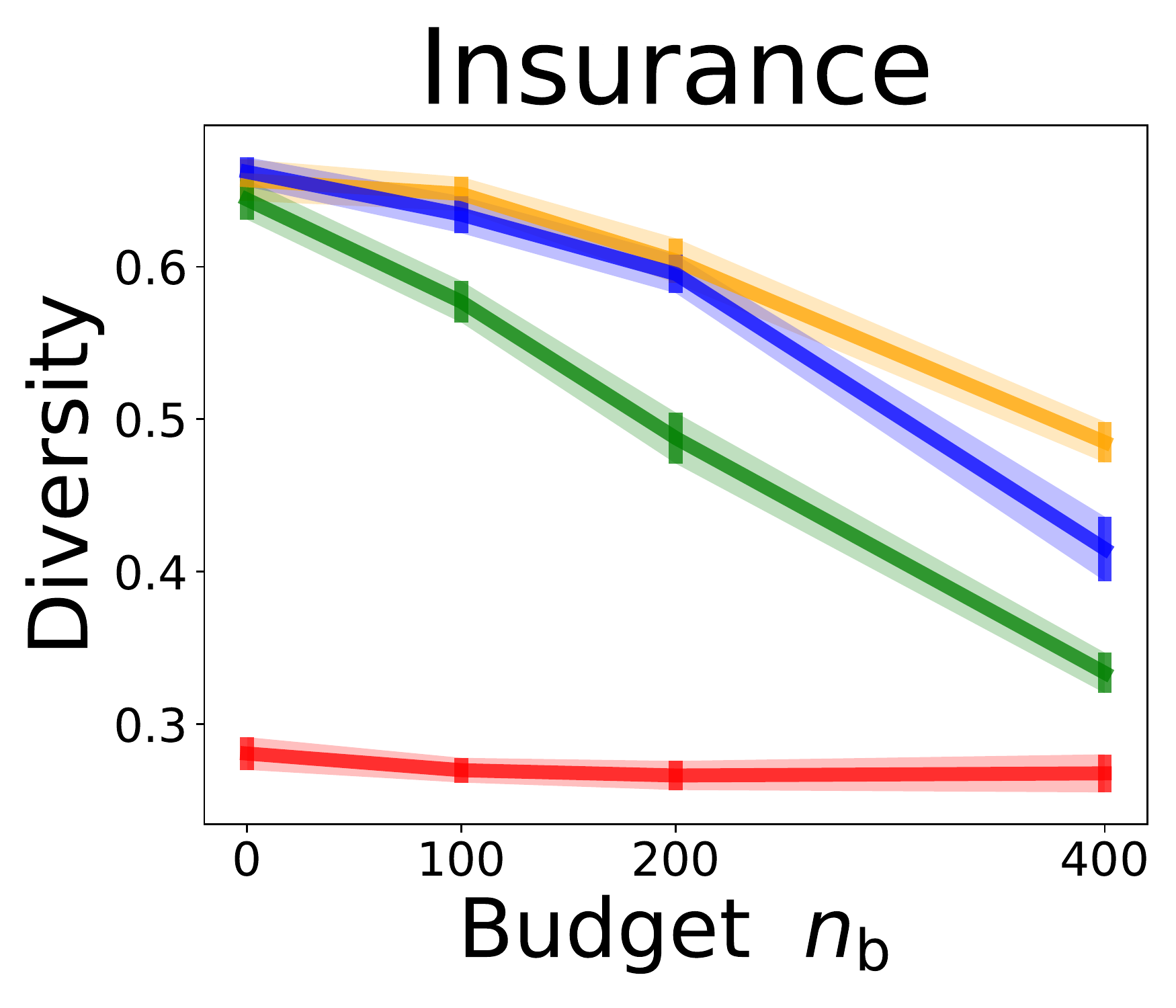}
    \includegraphics[width=0.2\textwidth]{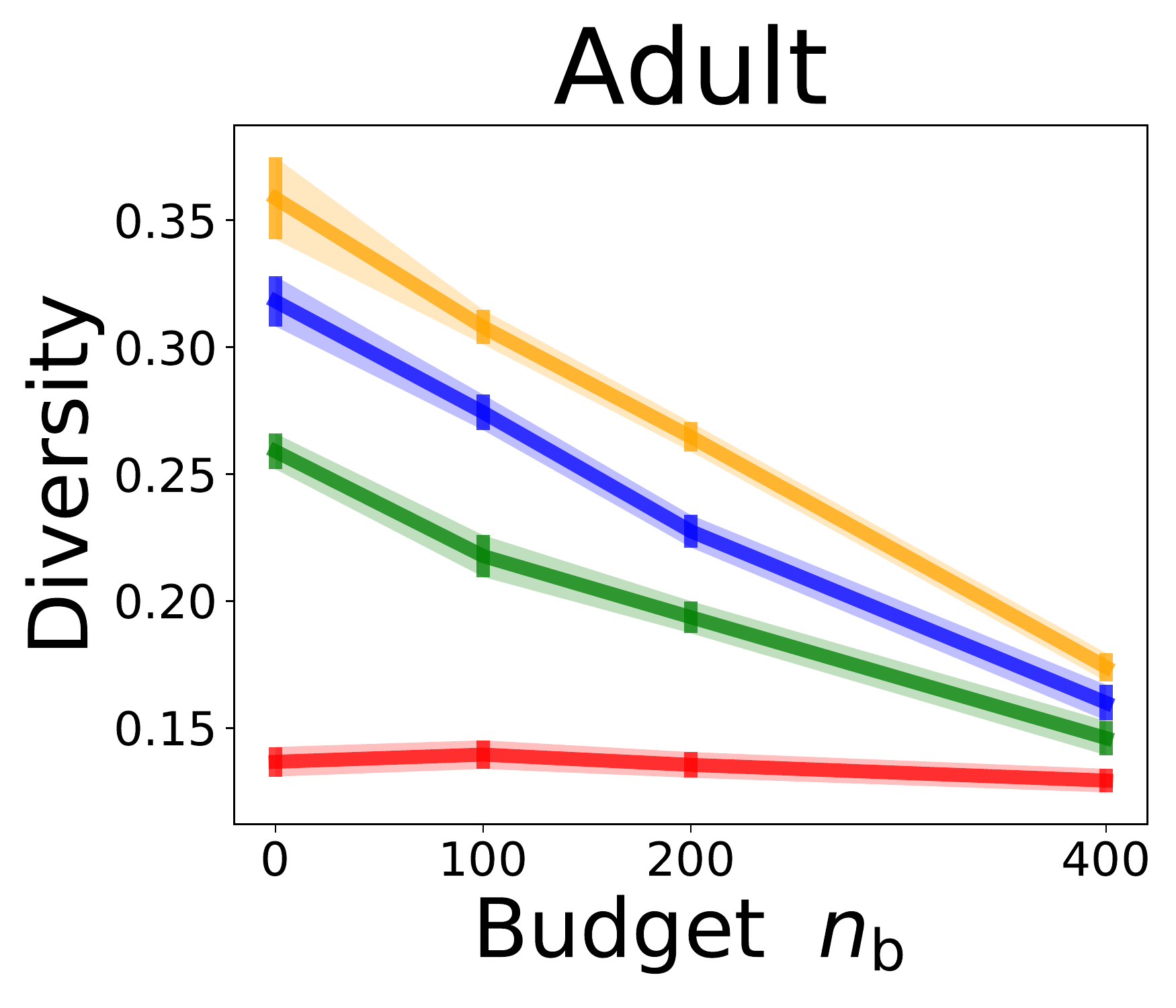}
    \includegraphics[width=0.2\textwidth]{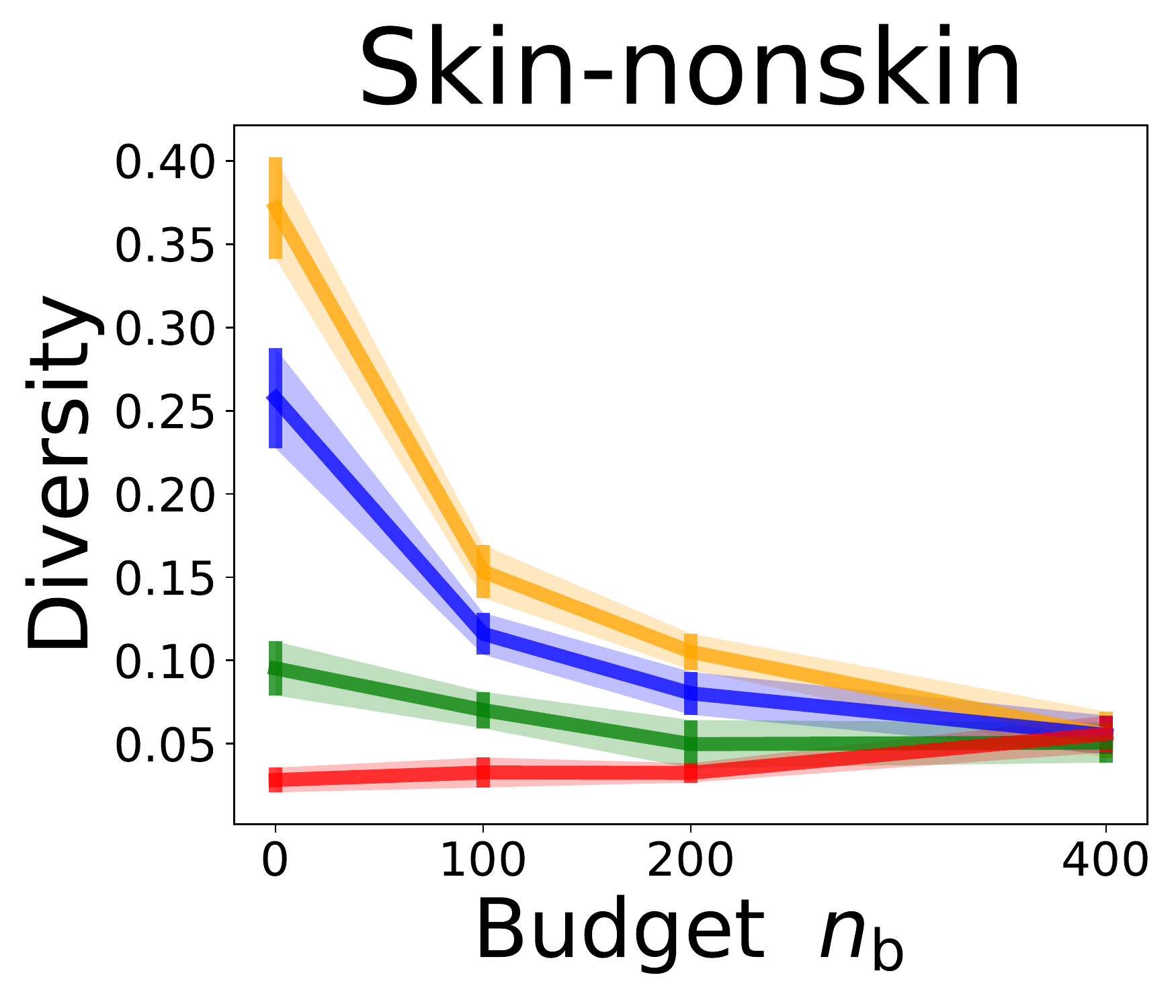}
    \includegraphics[width=0.2\textwidth]{figures/Legend_diversity.pdf}
    \\
    \includegraphics[width=0.2\textwidth]{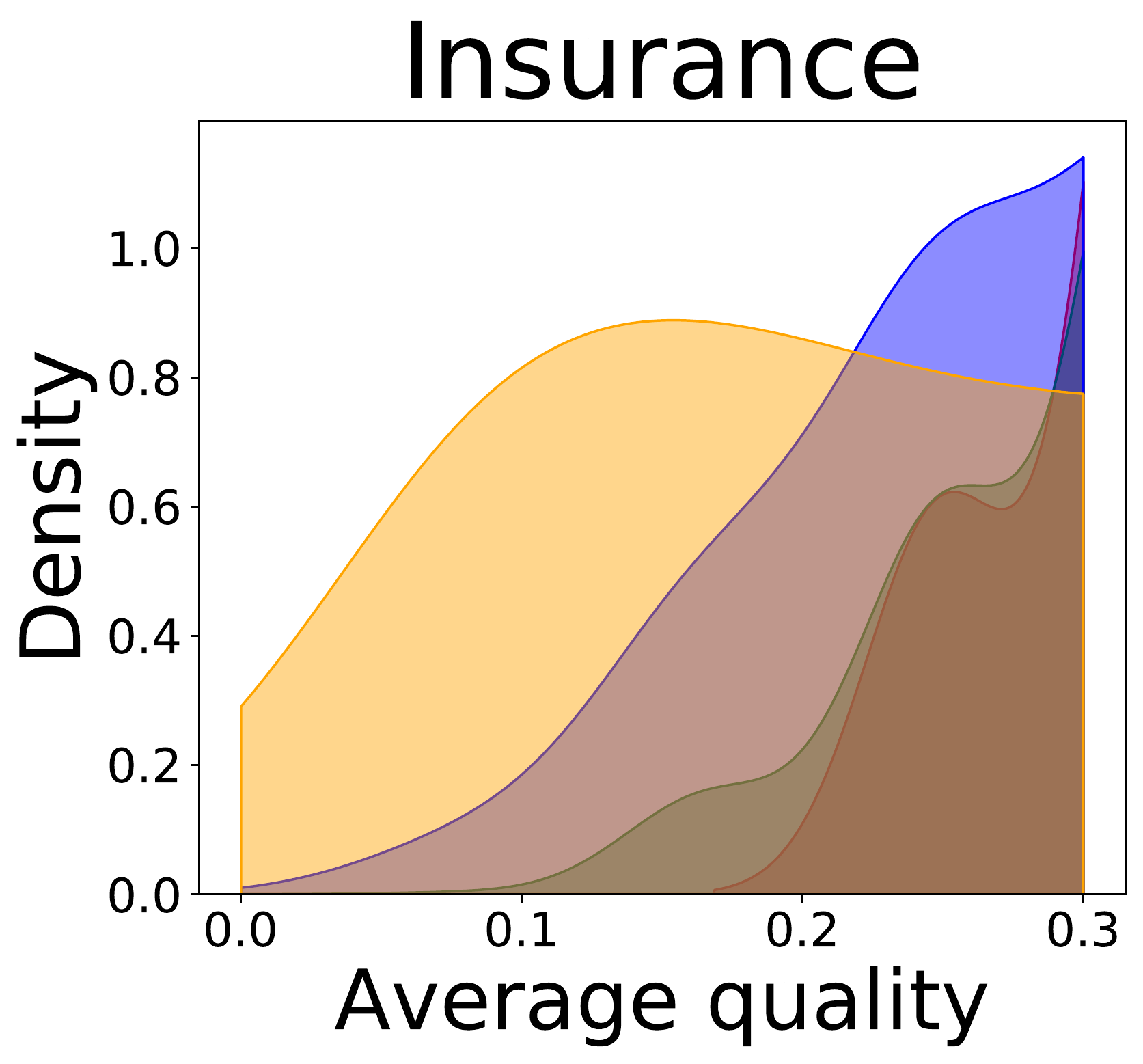}
    \includegraphics[width=0.2\textwidth]{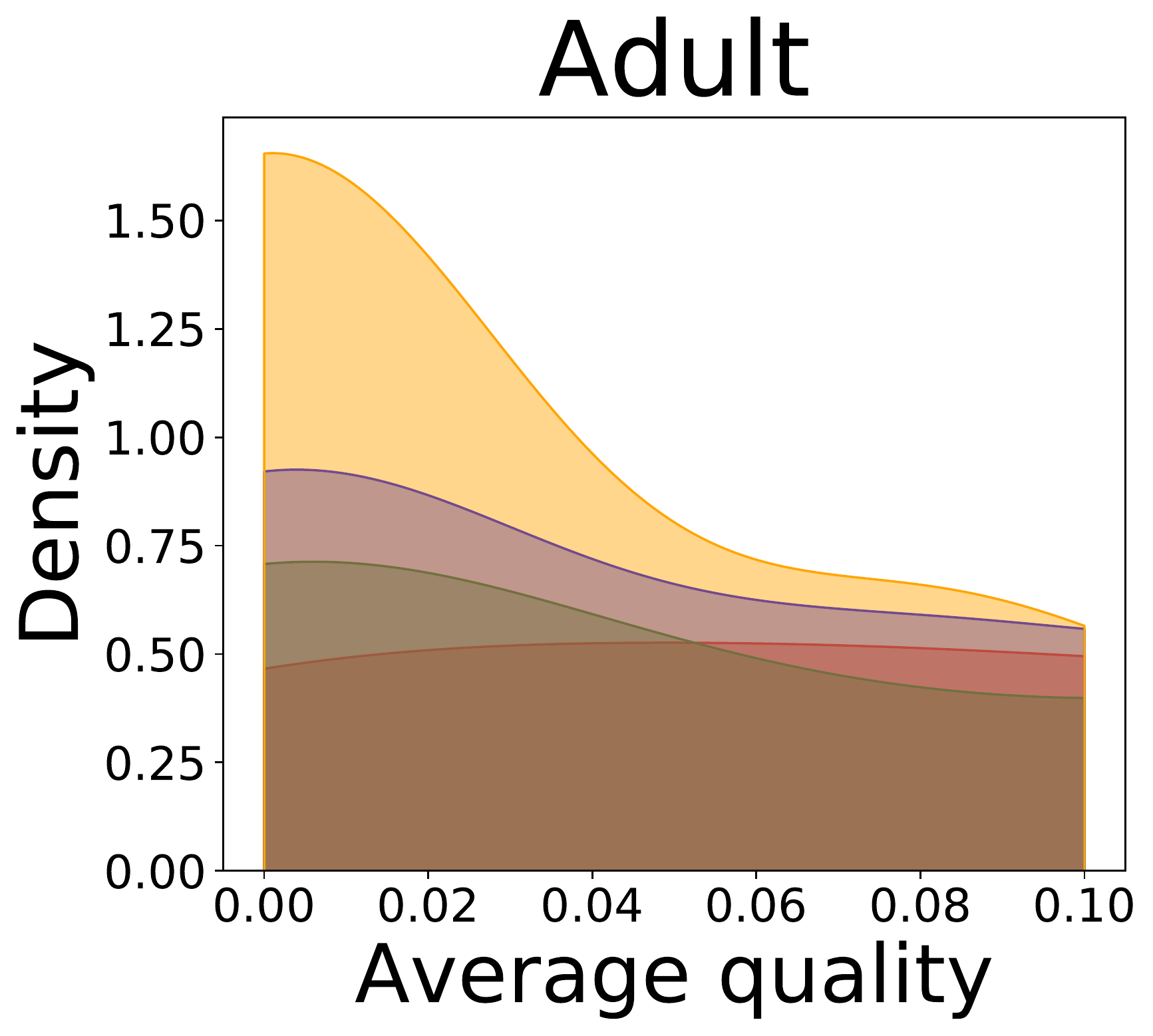}
    \includegraphics[width=0.2\textwidth]{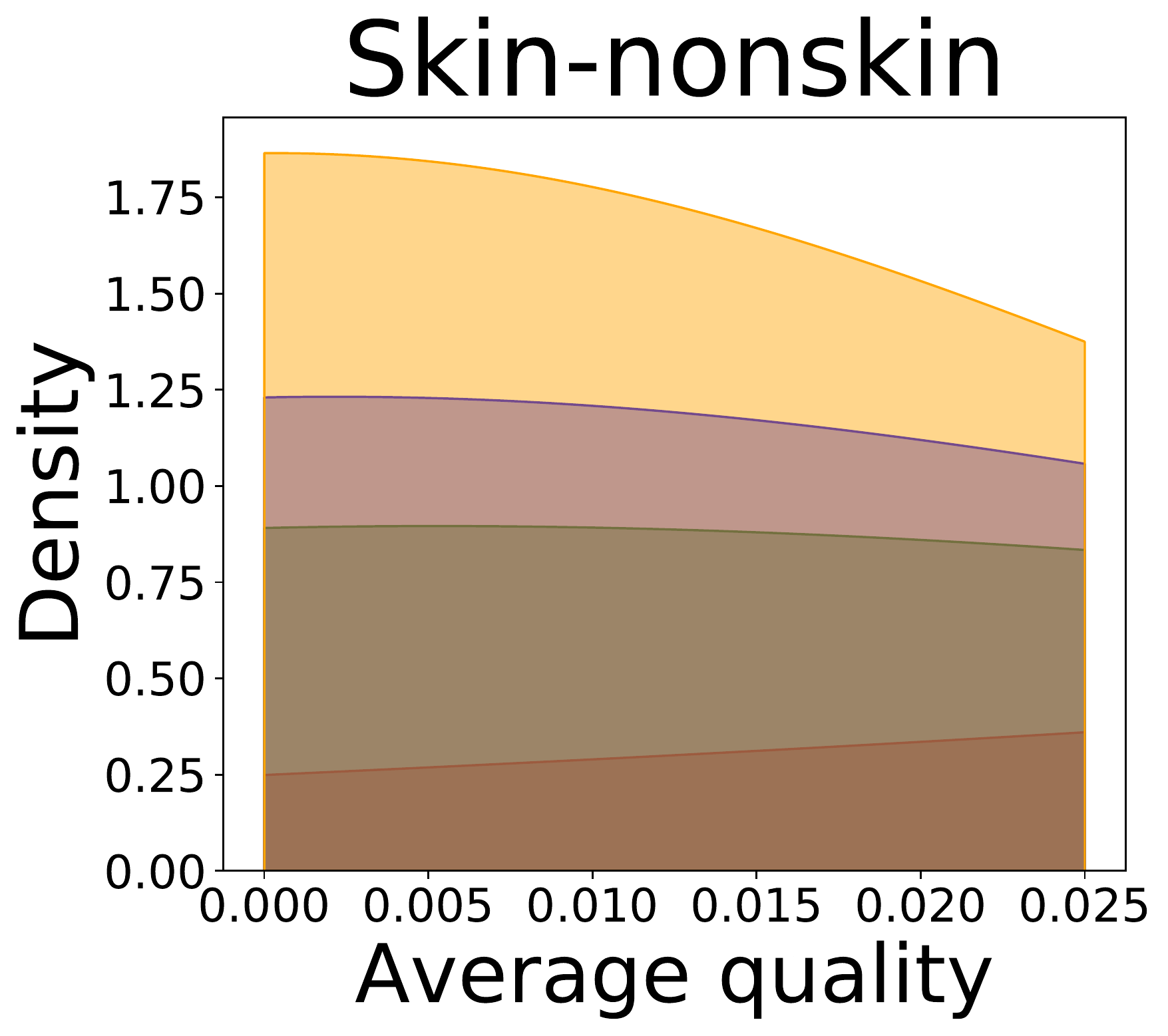}
    \includegraphics[width=0.2\textwidth]{figures/Legend_density.pdf}
    \caption{\textbf{Heterogeneous predictors.} Main figures when there are $M=12$ competing predictors. The results are similar to the $M=18$, showing the robustness of our main results.} 
    \label{fig:agent_12_predictors}
\end{center}
\end{figure}

\section{Proofs and additional theoretical results}
\label{app:proof}
We provide proofs for Lemma \ref{lem:prediction_quality_analysis_correctness} and Theorem \ref{thm:purchase-non-purchase} in the subsection \ref{app:proof_lem_and_thm}. We also present an additional theoretical result, QoE for a general quality function, in the subsection \ref{app:add_the_results}.

\subsection{Proofs}
\label{app:proof_lem_and_thm}

\begin{proof}[Proof of Lemma \ref{lem:prediction_quality_analysis_correctness}]
For notational convenience, we set $q^{{(j)}} := q(Y, f^{(j)}(X))$ for $j \in [M]$.
\begin{align}
    \mathbb{E} \left[ q \left( Y, f^{(W(\alpha))}(X) \right) \right] &= \mathbb{E} \left[ \mathbb{E} \left[ q \left( Y, f^{(W(\alpha))}(X) \right) \mid Y, \{ f^{(j)} \}_{j=1} ^M \right] \right] = \mathbb{E} \left[ \sum_{j=1} ^M p^{(j)} (\alpha) q ^{(j)} \right], \label{eq:new_representation_prediction_quality}
\end{align}
where for $i \in [M]$,
\begin{align*}
 p ^{(i)} (\alpha) = \frac{ \exp{ \left(\alpha q ^{(i)} \right)} }{ \sum_{j=1} ^M \exp{ \left( \alpha q ^{(j)} \right) } }.
\end{align*} 
Since $q^{j} = \mathbbm{1}(\{Y= f^{(j)}(X)\}) \in \{0,1\}$, for $N_{\mathrm{cor}} := \sum_{j=1} ^M q(Y,f^{(j)}(X)) = \sum_{j=1} ^M \mathbbm{1}(\{Y=f^{(j)}(X)\})$, we have
\begin{align*}
    p^{(j)}(\alpha) = \frac{ \exp(\alpha \mathbbm{1}(\{Y=f^{(j)}(X)\}))  }{ N_{\mathrm{cor}} \exp(\alpha) + (M-N_{\mathrm{cor}}) },    
\end{align*}
and
\begin{align*}
    \sum_{j=1} ^M p^{(j)} (\alpha) q ^{(j)} &= \frac{1}{M} \sum_{j=1} ^M  \frac{ e^\alpha \mathbbm{1}(\{Y=f^{(j)}(X)\}) }{ (N_{\mathrm{cor}}/M) e^\alpha + (1-N_{\mathrm{cor}}/M) } = \frac{ (N_{\mathrm{cor}}/M) e^\alpha }{ (N_{\mathrm{cor}}/M) e^\alpha + (1-N_{\mathrm{cor}}/M) }.
\end{align*}
Since $Z = N_{\mathrm{cor}}/M$ and $k(z,\alpha) := \frac{z e^\alpha}{z e^\alpha+ (1-z)}$ is an increasing function, it concludes a proof.
\end{proof}

\begin{proof}[Proof of Theorem \ref{thm:purchase-non-purchase}]
For $0\leq z \leq 1$ and $\alpha \geq 0$, let $k(z,\alpha) := \frac{z e^\alpha}{z e^\alpha+ (1-z)}$, $\mu_1 := \mathbb{E} \left[Z_1 \right]$, and $\mu_2 := \mathbb{E} \left[Z_2 \right]$.
Note that
\begin{align*}
    \mathbb{E} \left[ q \left( Y, f ^{(W(\alpha))(X) } \right) \right] &= \mu_1 + \mathbb{E} \left[ k(Z_{1}, \alpha) - Z_{1} \right]\\
    \mathbb{E} \left[ q \left( Y, g ^{(W(\alpha))} (X) \right) \right] &= \mu_2 + \mathbb{E} \left[ k(Z_{2}, \alpha) - Z_{2} \right].
\end{align*}
Thus, we have 
\begin{align*}
    & \mathbb{E} \left[ q \left( Y, f ^{(W(\alpha))(X) } \right) \right] \geq \mathbb{E} \left[ q \left( Y, g ^{(W(\alpha))} (X) \right) \right] \\
    \Longleftrightarrow \quad & \mathbb{E} \left[ k(Z_{1}, \alpha) - Z_{1} \right] - \mathbb{E} \left[ k(Z_{2}, \alpha) - Z_{2} \right] \geq \mu_2-\mu_1.
\end{align*} 
For $Z \in \{ \frac{1}{M}, \dots, \frac{M-1}{M}\}$, we have
\begin{align*}
    \frac{M(e^{\alpha}-1)}{ (M-1) e^{\alpha} + 1} \leq \frac{e^{\alpha}-1}{Z e^{\alpha} + (1-Z)} \leq \frac{M(e^{\alpha}-1)}{ e^{\alpha} + (M- 1)}.
\end{align*}
Therefore, since $k(z)-z = \frac{Z(1-Z)(e^{\alpha}-1)}{Z e^{\alpha} + (1-Z)}$, we have
\begin{align}
    \frac{M(e^{\alpha}-1)}{ (M-1) e^{\alpha} + 1} Z(1-Z) \leq k(Z)-Z \leq  \frac{M(e^{\alpha}-1)}{ e^{\alpha} + (M- 1)}Z(1-Z).
    \label{eq:upp_and_low_bounds}
\end{align}
Let $C_{\mathrm{low}} = \frac{M(e^{\alpha}-1)}{ (M-1) e^{\alpha} + 1}$ and $C_{\mathrm{upp}} = \frac{M(e^{\alpha}-1)}{ e^{\alpha} + (M- 1)}$.
From the inequalities \eqref{eq:upp_and_low_bounds}, we have
\begin{align*}
    \mathbb{E} \left[ k(Z_{1}, \alpha) - Z_{1} \right] - \mathbb{E} \left[ k(Z_{2}, \alpha) - Z_{2} \right] &\geq C_{\mathrm{low}} \mathbb{E} \left[ Z_{1}(1-Z_{1}) \right] - C_{\mathrm{upp}} \mathbb{E} \left[ Z_{2}(1-Z_{2}) \right] \\
    &= C_{\mathrm{low}} (\mu_1(1-\mu_1) -\mathrm{Var}[Z_1]) - C_{\mathrm{upp}} (\mu_2(1-\mu_2) -\mathrm{Var}[Z_2]).
\end{align*}
The last equality is due to $\mathbb{E} \left[ Z (1-Z) \right]  = \mathbb{E} \left[ Z \right](1-\mathbb{E} \left[Z\right]) - \mathrm{Var}(Z)$.
Therefore, QoE is decreased if 
\begin{align*}
    & C_{\mathrm{low}} (\mu_1(1-\mu_1) -\mathrm{Var}[Z_1]) - C_{\mathrm{upp}} (\mu_2(1-\mu_2) -\mathrm{Var}[Z_2]) \geq \mu_2-\mu_1 \\
    \Longleftrightarrow \quad & \mathrm{Var}[Z_2] \geq \frac{C_{\mathrm{low}}}{C_{\mathrm{upp}}} \left( \mathrm{Var}[Z_1] - \mu_1(1-\mu_1)\right) + \frac{1}{C_{\mathrm{upp}}} \left( \mu_2-\mu_1 \right) + \mu_2(1-\mu_2) \\
    \Longleftrightarrow \quad & \mathrm{Var}[Z_2] \geq C_1 \mathrm{Var}[Z_1] + C_2 (\mu_1,\mu_2),
\end{align*}
where
\begin{align*}
    C_1 &:= \frac{C_{\mathrm{low}}}{C_{\mathrm{upp}}} = \frac{e^{\alpha}+(M-1)}{(M-1)e^{\alpha}+1} \leq 1 \\
    C_2 (\mu_1,\mu_2) &:= - C_1 \mu_1(1-\mu_1)+ \frac{1}{C_{\mathrm{upp}}} \left( \mu_2-\mu_1 \right) + \mu_2(1-\mu_2) % \left( \frac{1+C_{\mathrm{upp}}}{C_{\mathrm{upp}}} - \mu_2 \right)\mu_2 - C_1 \left( \frac{1+C_{\mathrm{low}}}{C_{\mathrm{low}}} - \mu_1 \right)\mu_1.
\end{align*}
Therefore, if there is a constant $C_{\alpha}$ such that $C_2 \geq 0$, then it concludes a proof.

By definition of $C_2$, it is positive when $\mu_2(1-\mu_2)- C_1 \mu_1(1-\mu_1)$. 
\begin{align*}
       & \mu_2(1-\mu_2)- C_1 \mu_1(1-\mu_1) >0 \\ 
       \Longleftrightarrow \quad & C_1 \leq \frac{\mu_2(1-\mu_2)}{\mu_1(1-\mu_1)}\\
       \Longleftrightarrow \quad & \frac{e^{\alpha}+(M-1)}{(M-1)e^{\alpha}+1} \leq \frac{\mu_2(1-\mu_2)}{\mu_1(1-\mu_1)} \\
       \Longleftrightarrow \quad & e^{\alpha} \geq \frac{(M-1)\mu_1(1-\mu_1)-\mu_2(1-\mu_2) }{(M-1)\mu_2(1-\mu_2)-\mu_1(1-\mu_1)}.
\end{align*}
By setting $C_{\alpha} = \log \frac{(M-1)\mu_1(1-\mu_1)-\mu_2(1-\mu_2) }{(M-1)\mu_2(1-\mu_2)-\mu_1(1-\mu_1)}$, it concludes a proof.

\end{proof}

\subsection{QoE for a general quality function}
\label{app:add_the_results}

The following theorem shows the upper and lower bounds of QoE for a general quality function.
\begin{theorem}
Suppose there is a set of $M \geq 2$ prediction models $\{ f^{(j)} \}_{j=1} ^M$. For any non-negative function $q:\mathcal{Y}\times\mathcal{Y} \to \mathbb{R}_+$ and $\alpha \geq 0$, we have the following upper and lower bounds.
\begin{align*}
    \mathbb{E} \left[ \frac{1}{M} \sum_{j=1} ^M q \left( Y, f^{(j)} (X) \right) \right] \leq \mathbb{E} \left[ q \left( Y, f^{(W(\alpha))} (X) \right) \right] \leq \mathbb{E} \left[ \max_{j \in [M]} q \left( Y, f^{(j)} (X) \right) \right].
\end{align*}
where $W{(\alpha)} \in [M]$ denotes the selected index. 
\label{thm:QoE_general_quality}
\end{theorem}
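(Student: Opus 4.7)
The plan is to condition on the tuple $(Y, \{f^{(j)}(X)\}_{j=1}^M)$ and reduce both inequalities to a deterministic statement about a softmax-weighted average. Write $q^{(j)} := q(Y, f^{(j)}(X))$ and $p^{(j)}(\alpha) := \exp(\alpha q^{(j)}) / \sum_{k=1}^M \exp(\alpha q^{(k)})$. By the tower property and the user selection rule \eqref{eq:user_selection}, exactly as in the proof of Lemma~\ref{lem:prediction_quality_analysis_correctness},
\begin{align*}
\mathbb{E}\!\left[ q\!\left(Y, f^{(W(\alpha))}(X)\right) \right] = \mathbb{E}\!\left[ \sum_{j=1}^M p^{(j)}(\alpha)\, q^{(j)} \right].
\end{align*}
So, after taking expectations, it suffices to establish pointwise in the conditioning variables the sandwich
\begin{align*}
\frac{1}{M}\sum_{j=1}^M q^{(j)} \;\leq\; \sum_{j=1}^M p^{(j)}(\alpha)\, q^{(j)} \;\leq\; \max_{j \in [M]} q^{(j)}.
\end{align*}

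The upper bound is then immediate: $(p^{(j)}(\alpha))_{j=1}^M$ is a probability vector, and any convex combination of the values $q^{(j)}$ is at most the largest of them.

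For the lower bound I would invoke Chebyshev's sum inequality. Reorder the indices so that $q^{(1)} \leq q^{(2)} \leq \cdots \leq q^{(M)}$; since $\alpha \geq 0$ and the map $t \mapsto e^{\alpha t}$ is monotone non-decreasing, the weights $p^{(j)}(\alpha)$ are then also non-decreasing in $j$, i.e.\ the two sequences are co-monotone. Chebyshev's sum inequality applied to these two sequences yields
\begin{align*}
\frac{1}{M}\sum_{j=1}^M p^{(j)}(\alpha)\, q^{(j)} \;\geq\; \left( \frac{1}{M}\sum_{j=1}^M p^{(j)}(\alpha) \right)\!\left( \frac{1}{M}\sum_{j=1}^M q^{(j)} \right) \;=\; \frac{1}{M^2}\sum_{j=1}^M q^{(j)},
\end{align*}
since $\sum_j p^{(j)}(\alpha) = 1$. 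Multiplying by $M$ gives the desired lower bound pointwise; taking the outer expectation finishes the proof.

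There is no real obstacle here; everything reduces to sorting plus one classical inequality. As a sanity check (and an alternative argument that would also work), the function $h(\alpha) := \sum_j p^{(j)}(\alpha)\, q^{(j)}$ satisfies $h'(\alpha) = \mathrm{Var}_{p(\alpha)}[q^{(j)}] \geq 0$, so it is non-decreasing with $h(0) = \frac{1}{M}\sum_j q^{(j)}$ and $\lim_{\alpha \to \infty} h(\alpha) = \max_j q^{(j)}$, which recovers both bounds simultaneously. The mildest care needed is only in keeping the conditioning variables fixed while sorting, since the permutation used to invoke Chebyshev depends on the random outcome of $(X, Y)$.
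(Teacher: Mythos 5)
Your proof is correct. The reduction via the tower property to the pointwise statement about the softmax-weighted average $\sum_j p^{(j)}(\alpha) q^{(j)}$, and the upper bound via ``convex combination $\leq$ max,'' coincide with the paper's argument. Where you diverge is the lower bound: you sort the $q^{(j)}$, observe that for $\alpha \geq 0$ the weights $p^{(j)}(\alpha)$ are co-monotone with the $q^{(j)}$, and apply Chebyshev's sum inequality pointwise (correctly noting that the sorting permutation may depend on the realization of $(X,Y)$, which is harmless since the inequality is applied conditionally). The paper instead proves that $h(\alpha) := \sum_j p^{(j)}(\alpha) q^{(j)}$ is non-decreasing in $\alpha$ by computing $h'(\alpha) = \sum_j p^{(j)}(\alpha)(q^{(j)})^2 - \bigl(\sum_j p^{(j)}(\alpha) q^{(j)}\bigr)^2 \geq 0$ (a variance, justified via Cauchy--Schwarz), and then obtains the lower bound by evaluating at $\alpha = 0$, where the weights are uniform --- exactly the ``sanity check'' you mention as an alternative. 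Your Chebyshev route is more elementary and static (no differentiation under the expectation, no monotonicity claim needed), while the paper's route yields the slightly stronger structural fact that QoE is monotone non-decreasing in the temperature $\alpha$, which also explains when the upper bound is approached (as $\alpha \to \infty$). Either argument fully establishes the theorem.
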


\begin{proof}[Proof of Theorem \ref{thm:QoE_general_quality}]
We use the same notations in the proof of Lemma \ref{lem:prediction_quality_analysis_correctness}. We first show QoE is an increasing function as $\alpha$. From the representation \eqref{eq:new_representation_prediction_quality}, we have
\begin{align*}
    &\frac{\partial \mathbb{E} \left[ \sum_{j=1} ^M p^{(j)} (\alpha) q ^{(j)} \right]}{\partial \alpha} \\
    &= \mathbb{E} \left[ \sum_{j=1} ^M \frac{\partial p^{(j)} (\alpha) }{\partial \alpha} q ^{(j)} \right] \\
    &= \mathbb{E} \left[ \sum_{j=1} ^M \frac{ \left( \exp{ \left(\alpha q ^{(j)} \right)} q ^{(j)} \left( \sum_{k=1} ^M \exp{ \left( \alpha q ^{(k)} \right) } \right) \right) - \left( \exp{ \left(\alpha q ^{(j)} \right)} \sum_{k=1} ^M \exp{ \left( \alpha q ^{(k)} \right) } q ^{(k)} \right) }{ \left( \sum_{j=1} ^M \exp{ \left( \alpha q ^{(j)} \right) } \right)^2 } q ^{(j)} \right] \\ 
    &= \mathbb{E} \left[ \sum_{j=1} ^M p ^{(j)} (\alpha) (q ^{(j)} - \bar{q}) q ^{(j)} \right],
\end{align*}
where $\bar{q}:= \sum_{k=1} ^M p ^{(k)} (\alpha) q ^{(k)}$.
From the last equality, we have 
\begin{align*}
    \sum_{j=1} ^M p ^{(j)} (\alpha) (q ^{(j)} - \bar{q}) q ^{(j)} = \sum_{j=1} ^M p ^{(j)} (\alpha) (q ^{(j)})^2 - \bar{q}^2 >0.
\end{align*}
Note the non-negativity is from Cauchy-Schwarz inequality. We now prove an upper bound. Note that
\begin{align*}
  \sum_{j=1} ^M p^{(j)} (\alpha) q ^{(j)} \leq \max_{j \in [M]} q ^{(j)},
\end{align*}
and the equality holds when $\alpha=\infty$.
Therefore, taking expectations on both sides provides an upper bound. As for the lower bound. Due to the representation \eqref{eq:new_representation_prediction_quality}, it is enough to show that 
\begin{align*}
  \sum_{j=1} ^M p^{(j)} (\alpha) q ^{(j)} \geq \frac{1}{M} \sum_{j=1} ^M q ^{(j)}.
\end{align*}
Since QoE is an increasing function, by plugging in $\alpha=0$, it gives $p^{(j)} (\alpha) =1/M$. It concludes a proof.
\end{proof}

\end{document}